\documentclass{article}

\PassOptionsToPackage{table}{xcolor}
\usepackage{report}

\usepackage[utf8]{inputenc} 
\usepackage[T1]{fontenc}    
\usepackage{hyperref}       
\usepackage{url}            
\usepackage{booktabs}       
\usepackage{amsfonts}       
\usepackage{nicefrac}       
\usepackage{microtype}      
\usepackage{xcolor}         
\usepackage{graphicx}

\RequirePackage{tikz}
\usetikzlibrary{arrows.meta, shapes.geometric, positioning, 
                matrix, decorations.pathreplacing, 
                patterns, calc, backgrounds, fit, 
                shadows, intersections}

\RequirePackage{graphicx}
\RequirePackage{subcaption}
\RequirePackage{caption}
\RequirePackage{float}
\RequirePackage{placeins}

\RequirePackage{xcolor}
\RequirePackage{colortbl}
\RequirePackage{multirow}
\RequirePackage{booktabs}

\RequirePackage{amsmath, amssymb}
\RequirePackage{bm}
\RequirePackage{pifont}

\usepackage{amsmath}      
\usepackage{amssymb}      
\usepackage{amsfonts}     
\usepackage{amsthm}       

\newtheorem{theorem}{Theorem}[section]          
\newtheorem{lemma}[theorem]{Lemma}              
\newtheorem{assumption}[theorem]{Assumption}

\usepackage{algorithm}
\usepackage{algorithmic}

\usepackage[most]{tcolorbox}
\tcbuselibrary{listings,breakable,skins}
\tcbset{
  promptbox/.style={
    enhanced,
    breakable,
    colback=gray!3,
    colframe=black!35,
    boxrule=0.6pt,
    arc=2pt,
    left=6pt,right=6pt,top=6pt,bottom=6pt,
    listing only,
    listing options={
      basicstyle=\ttfamily\footnotesize,
      breaklines=true,
      breakatwhitespace=false,
      columns=fullflexible,
      keepspaces=true,
      showstringspaces=false
    }
  }
}

\definecolor{njuPurple}{RGB}{220,205,230}
\definecolor{njuPurpleLight}{RGB}{250,245,252}

\newtcolorbox{abstractbox}{
    colback=njuPurpleLight,
    colframe=njuPurple,
    boxrule=1pt,
    arc=4mm,
    left=8pt,
    right=8pt,
    top=8pt,
    bottom=8pt,
    opacityback=0.95,
    breakable
}

\usepackage{xcolor}
\usepackage{colortbl}

\usepackage{wrapfig}

\newcommand{\best}[1]{\colorbox{blue!30}{\strut \textbf{#1}}}
\newcommand{\second}[1]{\colorbox{blue!18}{\strut \textbf{#1}}}
\newcommand{\third}[1]{\colorbox{blue!10}{\strut \textbf{#1}}}
\title{InduceKV: Fixed-Footprint Continual Adaptation of Multimodal LLMs via Inducing KV Memories}

\author{%
Qianyu Chen$^{1}$ \quad
Ziteng Feng$^{1,2}$ \quad
Canran Xiao$^{3}$ \quad
Runxuan Tang$^{1}$\\[4pt]
$^{1}$Nanyang Technological University\\
$^{2}$University of Science and Technology of China\\
$^{3}$Shenzhen Campus of Sun Yat-sen University
}

\begin{document}

\maketitle

\begin{abstract}
\begin{abstractbox}
Multimodal large language models must adapt to evolving tasks and domains, yet continual improvement under bounded deployment footprint remains difficult because repeated parameter updates or growing replay stores can accumulate adaptation state over time.
We study fixed-footprint continual adaptation: the deployed adaptation state is kept under a fixed memory budget, while the backbone model is left unchanged and task-specific updates are externalized.
We propose \textsc{InduceKV}, a retrieval-based method that stores each selected training prefix as an attention-ready memory entry, consisting of a frozen retrieval key and compact layerwise key--value (KV) payloads that can be appended to the model's self-attention cache.
Under a strict memory budget, \textsc{InduceKV} constructs a compact inducing set through bilevel selection: a lightweight calibration is fit for retrieval, while the selected memory balances current-task likelihood, anchor-based retention, and coverage in the frozen retrieval space.
Across task-incremental instruction tuning, continual VQA, domain-incremental adaptation, and lifelong multimodal instruction tuning, \textsc{InduceKV} consistently improves over PEFT, MoE, replay, and prompt-retrieval baselines under matched memory budgets.
We further report backbone-matched, stage-1 CoIN, compute-matched, and scalability diagnostics, showing that the gains are not due to a stronger backbone, replay alone, or an unbounded candidate pool.
\end{abstractbox}
\end{abstract}

\section{Introduction}
Multimodal large language models (MLLMs)~\cite{yin2024survey} are becoming a common interface for visual understanding and instruction-following in real applications, from continual VQA~\cite{jian2024large,jia2025vqa2} to domain-specialized assistants~\cite{bian2025mira} and tool-augmented reasoning~\cite{hu2025agents}.
In these deployments, new tasks, domains, and datasets arrive over time, yet repeatedly retraining or jointly re-tuning on all historical data is often impractical due to compute, privacy, and product iteration constraints~\cite{huo2025continue}.
The central challenge is therefore to continually adapt an MLLM while preserving prior competencies under a fixed deployed adaptation footprint.
In this work, ``fixed footprint'' refers to a bounded external adaptation state after each task.

Despite rapid progress, existing continual adaptation pipelines for VLMs/MLLMs expose a persistent methodological gap.
A dominant line of work updates model parameters---typically through PEFT modules, expert routing, or architecture expansion---to trade off stability and plasticity in sequential instruction tuning and continual VQA \cite{Hidellava25,clmoe25,SMoE25,DMoLE25,BranchLoRA25,SMoLoR,mode25}.
While these designs are effective, they inherently rewrite parts of the model over time, which can complicate budget control (e.g., growing expert/router state) and can entangle new-task adaptations with the pretrained cross-modal interface that underpins broad generalization.
A second line reduces forgetting via replay-like mechanisms and synthetic or selective rehearsal, including dynamic data selection for lifelong instruction tuning and memory-efficient replay/distillation strategies \cite{Adapt25,quad25,gift25}.
However, these approaches still couple continual improvement to training-time data management and can face tension between (i) preserving diverse historical behaviors and (ii) fitting within a fixed, small memory footprint, especially when the candidate pool becomes large and redundant.
Finally, continual learning on CLIP-style VLMs has highlighted the importance of retaining pretrained, zero-shot capabilities while learning new domains \cite{cclip25}, suggesting that continual adaptation must be careful about \emph{where} and \emph{how} new information is injected into the inference pathway.

This paper asks whether scalable continual adaptation is possible by keeping the multimodal backbone fixed and updating a compact external memory that affects generation through native attention.
From a retrieval perspective, the key challenge is maintaining a budgeted, non-redundant inducing set that improves current likelihood while preserving historical competence.
We therefore cast continual adaptation as budgeted online inducing-set construction with lightweight retrieval calibration, injecting task-specific evidence without repeated backbone updates, as shown in Fig.\ref{fig.teaser}.

\begin{figure}[H]
    \centering
    \includegraphics[width=0.72\linewidth]{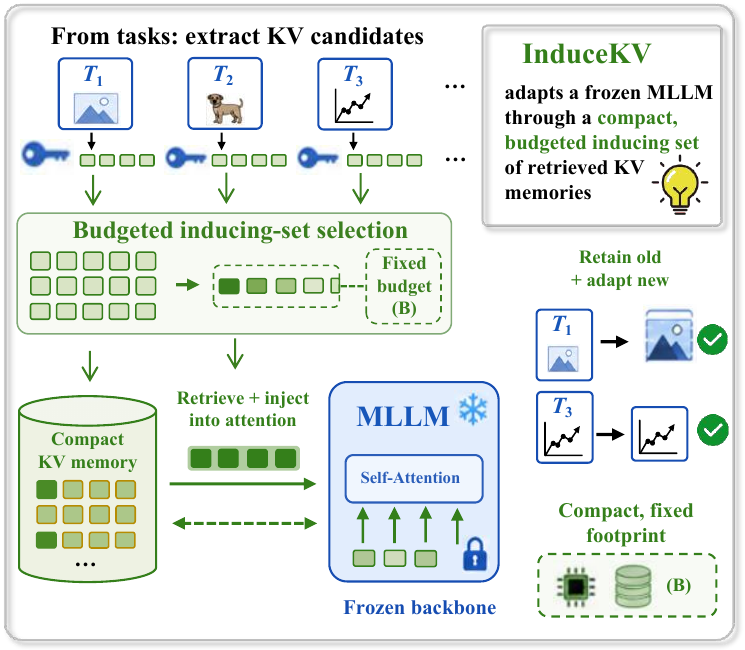}
    \caption{\textbf{\textsc{InduceKV} for budgeted continual MLLM adaptation.}}
    \label{fig.teaser}
\end{figure}

Our contributions are as follows:
(\textbf{\textit{i}}) We reframe continual MLLM adaptation as budgeted online inducing-set selection for retrieval-based memory, making the stability--plasticity tension explicit as a memory allocation problem rather than an ever-growing parameter update process.
(\textbf{\textit{ii}}) We introduce a retrieval-driven adaptation mechanism that stores task increments as attention-compatible external KV memories and uses a lightweight calibration interface to control retrieval strength, while explicitly accounting for its prefill and KV-injection overhead.
(\textbf{\textit{iii}}) Our approach achieves consistent state-of-the-art results across diverse continual instruction-tuning and continual VQA settings.

\section{Related Work}
\label{sec:related}

\textbf{Continual adaptation of instruction-following MLLMs.}
Most continual multimodal instruction tuning and VQA methods adapt models by \emph{updating parameters} via PEFT and modularization.
Representative directions include dynamic data selection/pruning for lifelong instruction tuning \cite{Adapt25}, PEFT-based isolation and routing such as hierarchical layer decoupling \cite{Hidellava25}, mixtures of LoRA experts \cite{DMoLE25,SMoLoR,BranchLoRA25}, and projector-centric adaptation \cite{MVP25}.
For continual VQA, MoE routing and replay/distillation (e.g., dual-router momentum MoE and question-only replay with attention distillation) achieve strong stability--plasticity trade-offs \cite{clmoe25,quad25}, while domain-incremental settings often use domain-specific experts and routing \cite{SMoE25}.
However, these approaches still require continual gradient updates, often expand expert/router state or depend on rehearsal, and tie retention to parameter-space constraints that can limit plasticity under tight budgets.
In contrast, \textsc{InduceKV} freezes the backbone and influences generation by injecting retrieved, attention-ready KV payloads into the masked self-attention cache, learning only a tiny per-stream calibration.

\textbf{Continual learning in CLIP-style VLMs.}
Recent CLIP-style continual learning methods combine adapters/regularization with consolidation objectives and new benchmarks for domain/class shifts \cite{cclip25,DNS25}, and mitigate forgetting via synthetic replay/distillation \cite{gift25}, structured adapters or memory units \cite{lada25}, modality-gap constraints \cite{mgclip25}, external knowledge injection \cite{engine25}, and feature-geometry preserving regularization \cite{proxyfda25}.
However, extending these ideas to autoregressive MLLMs is nontrivial: generation is governed by layerwise attention and long-range cache interactions, so storing past data alone may not yield controllable influence at decoding time.
We address this gap by storing layerwise KV payloads from the frozen generator and injecting them directly into self-attention.

\textbf{Retrieval-augmented memory and budgeted subset selection.}
Retrieval-augmented LMs use non-parametric memory via passage retrieval \cite{rag20,retro22} or kNN retrieval over hidden states/KV pairs \cite{knnlm20,memtransformer22}.
In parallel, coreset and diversity selection study coverage under budgets, often via log-det/DPP-style objectives \cite{kulesza12dpp,wang24dpp,staff25}.
Yet most retrieval systems assume a static corpus and optimize query-time evidence, not continual adaptation under a fixed footprint, and subset selection rarely co-designs inference-time retrieval calibration with an attention-aligned injection pathway.
\textsc{InduceKV} fills this gap by formulating memory updates as online inducing-set selection: a lightweight inner calibration adapts retrieval/gating, while the outer loop selects a budgeted subset that balances current fit, anchor-based retention, and log-det spectral coverage.

\begin{figure*}[t]
	\centering
	\includegraphics[width=0.75\linewidth]{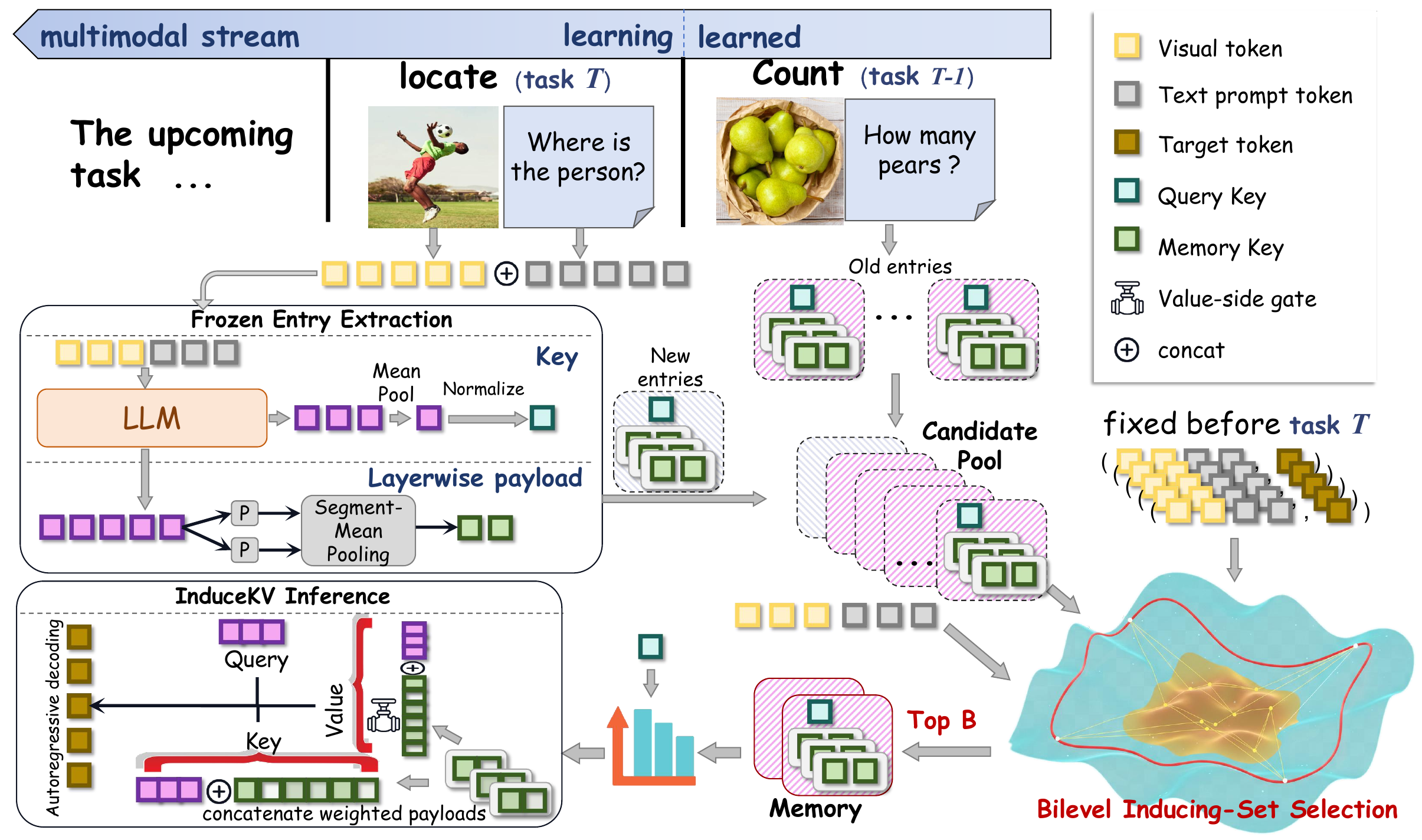}
\caption{\textbf{\textsc{InduceKV} pipeline for continual adaptation.}
For each incoming task $t$, the frozen MLLM extracts from each prefix $x$ a unit-norm retrieval key $r(x)$ and compressed layerwise KV payloads $\{(\bar K^\ell(x),\bar V^\ell(x))\}_{\ell=1}^L$, forming new entries that are merged with the previous memory into a candidate pool $\mathcal{U}_t$.
Under a fixed budget $B$, a bilevel optimizer constructs a compact inducing set: the inner level fits a minimal retrieval calibration $\phi^\star(w)$ on current-task data, while the outer level updates selection weights $w$ to jointly optimize current-task likelihood, retention on historical anchors $\mathcal{A}_{<t}$, and a redundancy-aware spectral coverage regularizer in the frozen retrieval space.
The Top-$B$ entries are committed as the external KV memory $\mathcal{M}_t$.
At inference, the query key retrieves weighted entries, whose KV payloads are concatenated and injected as additional cache tokens into masked self-attention, enabling task adaptation without updating $\theta$.}
	\label{fig:pipeline}
\end{figure*}

\section{Method}
\label{sec:method}
\textsc{InduceKV} performs continual adaptation of a multimodal large language model (MLLM) by freezing all backbone parameters and externalizing task-specific increments into an attention-compatible key--value (KV) memory. Each memory entry stores (i) a frozen-model retrieval key in a fixed representation space and (ii) compact, layerwise KV payloads that can be injected into the Transformer self-attention cache. Under a strict memory budget, we construct a compact inducing set via bilevel optimization: an inner problem fits a minimal retrieval calibration parameter on the current task, while an outer problem selects a budgeted subset that jointly optimizes current-task likelihood, retention on historical anchors, and a redundancy-aware spectral coverage regularizer in the frozen retrieval space. The pipeline is shown in Fig.~\ref{fig:pipeline}.

\subsection{Problem Setup}
We consider tasks $t=1,\dots,T$ arriving sequentially. At task $t$, we observe $\mathcal{D}_t=\{(x_n,y_n)\}_{n=1}^{N_t}$, where $x$ is the \emph{conditioning prefix} (multimodal input and prompt) and $y$ is the target output sequence. We freeze an MLLM with parameters $\theta$ and perform no gradient updates to $\theta$ at any time.
We maintain a small historical anchor set $\mathcal{A}_{<t}$ (fixed before training task $t$) used to measure retention.

\textbf{Autoregressive objective.}
For a pair $(x,y)$, the negative log-likelihood is
\begin{equation}
\mathcal{L}_{\mathrm{NLL}}(\theta;\,x,y)
= -\sum_{m=1}^{|y|}\log p_\theta\!\left(y_m \mid y_{<m}, x\right).
\label{eq:nll_rewrite}
\end{equation}
In \eqref{eq:nll_rewrite}, $p_\theta(\cdot)$ is the frozen model distribution, and $x$ excludes all answer tokens to prevent label leakage in memory construction.

\textbf{Multi-head attention convention.}
We write self-attention in per-head form and omit the head index for clarity. Let $d_h$ denote the per-head key dimension; the same construction is applied independently to each head and each layer.

\subsection{Frozen Key and KV-Payload Extraction}
The goal is to store \emph{attention-ready} information from the frozen model: a retrieval key for indexing in a fixed representation space, and compact KV payloads that can be appended to the self-attention cache without changing $\theta$. This avoids parameter-efficient finetuning that may perturb cross-modal interactions.

Given a prefix input $x$, let $H^{(L)}(x)\in\mathbb{R}^{n_p(x)\times d}$ be the final-layer hidden states corresponding to the prefix tokens (length $n_p(x)$, hidden size $d$). We define the pooled prefix representation and normalized retrieval key as
\begin{equation}
\bar h(x) = \frac{1}{n_p(x)}\sum_{j=1}^{n_p(x)}H^{(L)}_{j}(x)\in\mathbb{R}^{d}, \qquad
r(x) = \frac{\bar h(x)}{\|\bar h(x)\|_2}\in\mathbb{R}^{d}.
\label{eq:key_rewrite}
\end{equation}
In \eqref{eq:key_rewrite}, $H^{(L)}_{j}(x)$ is the $j$-th prefix token vector, and $r(x)$ is unit-norm, enabling cosine-similarity retrieval.

\textbf{Layerwise KV payload.}
For each Transformer layer $\ell\in\{1,\dots,L\}$, let $H^{(\ell-1)}(x)\in\mathbb{R}^{n_p(x)\times d}$ be the layer input over prefix tokens. Using the frozen per-head projections $W_K^\ell\in\mathbb{R}^{d\times d_h}$ and $W_V^\ell\in\mathbb{R}^{d\times d_h}$, we compute prefix keys/values and compress them to a fixed payload length $m$:
\begin{equation}
\begin{aligned}
K^\ell(x) &= H^{(\ell-1)}(x)W_K^\ell \in \mathbb{R}^{n_p(x)\times d_h}, \qquad &
V^\ell(x) &= H^{(\ell-1)}(x)W_V^\ell \in \mathbb{R}^{n_p(x)\times d_h}, \\
\bar K^\ell(x) &= P_m(x)K^\ell(x), \qquad &
\bar V^\ell(x) &= P_m(x)V^\ell(x).
\end{aligned}
\label{eq:payload_rewrite}
\end{equation}
In \eqref{eq:payload_rewrite}, $P_m(x)\in\mathbb{R}^{m\times n_p(x)}$ is a fixed pooling matrix that partitions the prefix indices $\{1,\dots,n_p(x)\}$ into $m$ consecutive segments of equal size (the last segment absorbs any remainder) and averages within each segment; therefore $\bar K^\ell(x),\bar V^\ell(x)\in\mathbb{R}^{m\times d_h}$ have a task-independent token length $m$.

\textbf{Memory entry.}
We represent the extracted entry from $x$ as
$
e(x)=\big(r(x),\{(\bar K^\ell(x),\bar V^\ell(x))\}_{\ell=1}^L\big)
$,
and store only such entries in the external memory.

\subsection{Retrieval-Weighted KV Induction and Masked Attention Injection}
At inference, we want retrieved knowledge to influence generation through the same pathway as standard Transformer caches. \textsc{InduceKV} therefore injects memory KV payloads as additional cache tokens into each layer's masked self-attention.
Let $\mathcal{M}=\{e_i\}_{i=1}^{|\mathcal{M}|}$ be the current memory, with stored keys $r_i\in\mathbb{R}^{d}$.
We learn a minimal calibration parameter $\phi=\{\phi_\tau,\phi_1,\dots,\phi_L\}$, where the retrieval temperature is $\tau=\mathrm{softplus}(\phi_\tau)>0$ and the per-layer value gate is $\lambda_\ell=\sigma(\phi_\ell)\in(0,1)$.
Given a prefix $x$, we compute $r(x)$ by \eqref{eq:key_rewrite} and define the retrieval weight for entry $i$ as
\begin{equation}
\alpha_i(x;\phi)
=
\frac{\exp\!\big(\langle r(x),r_i\rangle/\tau\big)}
{\sum_{j=1}^{|\mathcal{M}|}\exp\!\big(\langle r(x),r_j\rangle/\tau\big)}.
\label{eq:alpha_rewrite}
\end{equation}
where $\langle r(x),r_i\rangle$ is cosine similarity since both keys are unit-norm, and $\sum_i \alpha_i(x;\phi)=1$.

For layer $\ell$, we assemble memory KV tokens by concatenating payload blocks weighted by $\sqrt{\alpha_i(x;\phi)}$:
$
K_{\mathrm{mem}}^\ell(x)=\mathrm{Concat}\big(\sqrt{\alpha_i(x;\phi)}\,\bar K_i^\ell\big)_{i=1}^{|\mathcal{M}|}
\in\mathbb{R}^{(|\mathcal{M}|m)\times d_h}
$
and
$
V_{\mathrm{mem}}^\ell(x)=\mathrm{Concat}\big(\sqrt{\alpha_i(x;\phi)}\,\bar V_i^\ell\big)_{i=1}^{|\mathcal{M}|}
\in\mathbb{R}^{(|\mathcal{M}|m)\times d_h}
$,
where $\mathrm{Concat}(\cdot)$ concatenates along the token dimension.
Let $Q^\ell(x)\in\mathbb{R}^{n_p(x)\times d_h}$, $K_{\mathrm{self}}^\ell(x)\in\mathbb{R}^{n_p(x)\times d_h}$, and $V_{\mathrm{self}}^\ell(x)\in\mathbb{R}^{n_p(x)\times d_h}$ be the standard frozen per-head tensors produced at layer $\ell$ for the prefix tokens.
We inject memory KV as additional cache tokens and compute masked self-attention as
\begin{equation}
\begin{aligned}
A^\ell(x) &= \frac{
    Q^\ell(x)\big[\,K_{\mathrm{self}}^\ell(x);\ K_{\mathrm{mem}}^\ell(x)\,\big]^\top
    + M^\ell(x)
}{
    \sqrt{d_h}
} \\
\mathrm{Attn}^\ell(x) &= \mathrm{softmax}\!\left(A^\ell(x)\right)
    \big[\,V_{\mathrm{self}}^\ell(x);\ \lambda_\ell V_{\mathrm{mem}}^\ell(x)\,\big].
\end{aligned}
\label{eq:inject_rewrite}
\end{equation}
In \eqref{eq:inject_rewrite}, $[\cdot;\cdot]$ denotes token-wise concatenation and $M^\ell(x)$ is the extended causal mask that permits attention to all memory tokens while enforcing standard causal visibility among prefix tokens.
The gate $\lambda_\ell$ scales only memory values, providing an interpretable per-layer control of memory strength.

\paragraph{Online inference path and cost.}
At test time, \textsc{InduceKV} uses a two-stage inference path.
First, it runs a prefix-only forward pass with the frozen backbone to compute the query retrieval key $r(x)$.
Second, after retrieving entries from the fixed memory $\mathcal{M}$, it performs the generation forward pass with the selected precomputed KV payloads appended to the self-attention cache.
The stored memory payloads $\{(\bar K_i^\ell,\bar V_i^\ell)\}$ are extracted offline at task-update time and are never recomputed for each test query.
Thus, the online overhead is not an additional payload-extraction pass, but an extra prefix pass for retrieval-key computation plus larger attention matrices during prefill.
We therefore position \textsc{InduceKV} as a fixed-footprint retrieval-based continual adaptation method, not as a zero-overhead replacement for replay-free PEFT.
Its practical advantage is most direct against prompt-level retrieval, where retrieved exemplars must be processed through embeddings, projections, attention, and FFN layers at inference.

\subsection{Bilevel Inducing-Set Selection Under a Fixed Budget}
Storing all extracted entries is redundant under a fixed budget. \textsc{InduceKV} therefore builds a compact inducing set that fits the current task via minimal calibration while preserving performance on historical anchors, and enforces spectral coverage in the frozen retrieval space to avoid near-duplicate keys.

\textbf{Candidate pool and selection variables.}
At task $t$, we build a candidate pool $\mathcal{U}_t=\mathcal{C}_t\cup\mathcal{M}_{t-1}$, where $\mathcal{C}_t$ contains entries extracted from prefixes in $\mathcal{D}_t$ and $\mathcal{M}_{t-1}$ is the previous memory.
Let $N=|\mathcal{U}_t|$ and index candidates by $i\in\{1,\dots,N\}$.
We introduce continuous selection weights $w\in\mathbb{R}_{\ge 0}^{N}$ constrained by
$
\sum_{i=1}^{N} w_i=B
$,
where $B$ is the fixed entry budget.
We interpret $\pi_i(w)=w_i/B$ as a fractional inclusion mass with $\sum_i \pi_i(w)=1$.

\textbf{Retrieval during selection.}
While optimizing $w$, we define retrieval weights over candidates by incorporating $\pi_i(w)$ as a multiplicative prior:
$
\alpha_i(x;w,\phi)=
\frac{\pi_i(w)\exp(\langle r(x),r_i\rangle/\tau)}{\sum_{j=1}^{N}\pi_j(w)\exp(\langle r(x),r_j\rangle/\tau)}
$.
This yields a differentiable dependence of induced KV on $w$.

\textbf{Inner problem (fit minimal calibration on current task).}
Given $w$, the inner level fits $\phi$ by minimizing current-task likelihood with $\ell_2$ regularization:
\begin{equation}
\begin{aligned}
\phi^\star(w) = \arg\min_{\phi}\;
\frac{1}{|\mathcal{D}_t|}\sum_{(x,y)\in\mathcal{D}_t}
&\mathcal{L}_{\mathrm{NLL}}\big(\theta;\,x,y \mid \mathcal{U}_t,w,\phi\big) \\
&+ \eta\|\phi\|_2^2.
\end{aligned}
\label{eq:inner_rewrite}
\end{equation}
In \eqref{eq:inner_rewrite}, the conditional notation $\mathcal{L}_{\mathrm{NLL}}(\cdot\mid \mathcal{U}_t,w,\phi)$ means that the model uses candidate-based retrieval weights $\alpha_i(x;w,\phi)$ and injects the induced KV via \eqref{eq:inject_rewrite}; $\eta>0$ is fixed.

\paragraph{Outer problem (select inducing set with retention and coverage).}
The outer level selects $w$ by balancing current fit, anchor retention, and spectral coverage:
\begin{equation}
w^\star = \arg\min_{w \in \mathcal{W}}\; J(w),
\label{eq:outer_rewrite}
\end{equation}
where $\mathcal{W} = \{w \in \mathbb{R}^N_{\ge 0} : \sum_i w_i = B\}$, and the objective is
\begin{align}
J(w) &= \underbrace{\mathcal{L}_{\mathrm{cur}}\big(w,\phi^\star(w)\big)}_{\text{current fit}} 
      + \beta\underbrace{\mathcal{L}_{\mathrm{anc}}\big(w,\phi^\star(w)\big)}_{\text{retention}} 
      + \gamma\underbrace{\Omega_{\mathrm{spec}}(w)}_{\text{coverage}}.
\end{align}

where $\beta\ge 0$ and $\gamma\ge 0$ are fixed coefficients,
$
\mathcal{L}_{\mathrm{cur}}(w,\phi)=\frac{1}{|\mathcal{D}_t|}\sum_{(x,y)\in\mathcal{D}_t}\mathcal{L}_{\mathrm{NLL}}(\theta;\,x,y\mid \mathcal{U}_t,w,\phi)
$,
and
$
\mathcal{L}_{\mathrm{anc}}(w,\phi)=\frac{1}{|\mathcal{A}_{<t}|}\sum_{(x,y)\in\mathcal{A}_{<t}}\mathcal{L}_{\mathrm{NLL}}(\theta;\,x,y\mid \mathcal{U}_t,w,\phi)
$.

\textbf{Spectral coverage regularizer.}
Let $r_i\in\mathbb{R}^{d}$ be the unit-norm retrieval key of candidate $i$.
We fix a projection matrix $P\in\mathbb{R}^{d\times d'}$ with orthonormal columns ($P^\top P=I_{d'}$) and set $d'=256$.
Define projected keys $z_i=P^\top r_i\in\mathbb{R}^{d'}$ and the weighted covariance
$
C(w)=\sum_{i=1}^{N}\pi_i(w)\,z_i z_i^\top\in\mathbb{R}^{d'\times d'}.
$
We use a log-determinant coverage penalty
\begin{equation}
\Omega_{\mathrm{spec}}(w)=
-\log\det\!\big(C(w)+\epsilon I_{d'}\big),
\label{eq:spec_rewrite}
\end{equation}
where $\epsilon>0$ is a fixed stabilizer and $I_{d'}$ is the $d'\times d'$ identity. Minimizing \eqref{eq:spec_rewrite} discourages redundant keys by penalizing low-rank, spectrally concentrated $C(w)$ in the frozen retrieval space.

We solve \eqref{eq:inner_rewrite}--\eqref{eq:outer_rewrite} by unrolling a fixed number of inner steps for $\phi$ and applying projected gradient updates to $w$ onto $\{w\ge 0:\sum_i w_i=B\}$.
After convergence, we take $\mathcal{S}_t=\mathrm{TopB}(w^\star)$ and set $\mathcal{M}_t=\{e_i\}_{i\in\mathcal{S}_t}$.
For later tasks, $\mathcal{M}_t$ is fixed, and we optimize only the new task's $(w,\phi)$.

\begin{algorithm}[t]
\caption{\textsc{InduceKV} update at task $t$}
\label{alg:inducekv_rewrite}
\begin{algorithmic}[1]
\STATE \textbf{Input:} $\mathcal{D}_t$, $\mathcal{A}_{<t}$, $\mathcal{M}_{t-1}$, budget $B$
\STATE Build candidates $\mathcal{U}_t=\mathcal{C}_t\cup\mathcal{M}_{t-1}$; extract entries via \eqref{eq:key_rewrite}--\eqref{eq:payload_rewrite}
\STATE Initialize $w\ge 0$ with $\sum_i w_i=B$ and initialize $\phi$
\FOR{$\mathrm{iter}=1$ to $I$}
    \FOR{$j=1$ to $J$}
        \STATE $\phi \leftarrow \phi - \alpha_\phi \nabla_\phi\Big(\frac{1}{|\mathcal{D}_t|}\sum_{(x,y)\in\mathcal{D}_t}\mathcal{L}_{\mathrm{NLL}}(\theta;\,x,y\mid \mathcal{U}_t,w,\phi)+\eta\|\phi\|_2^2\Big)$
    \ENDFOR
    \STATE $w \leftarrow w - \alpha_w \nabla_w\Big(\mathcal{L}_{\mathrm{cur}}(w,\phi)+\beta\mathcal{L}_{\mathrm{anc}}(w,\phi)+\gamma\Omega_{\mathrm{spec}}(w)\Big)$
    \STATE Project $w$ onto $\{w\ge 0:\sum_i w_i=B\}$
\ENDFOR
\STATE $\mathcal{S}_t\leftarrow \mathrm{TopB}(w)$,\quad $\mathcal{M}_t\leftarrow\{e_i\}_{i\in\mathcal{S}_t}$,\quad $\phi_t\leftarrow\phi$
\STATE \textbf{Output:} memory $\mathcal{M}_t$, calibration $\phi_t$
\end{algorithmic}
\end{algorithm}

\section{Theory: Budgeted Continual Learning as Online Inducing-Set Selection}
\label{sec:theory_oco}

We give a compact theoretical view of \textsc{InduceKV}: after the inner calibration is solved, the outer memory update becomes an online budgeted subset-selection problem over relaxed inducing-set weights.
Let \(f_t(w)\) denote the inner-solved outer surrogate at task \(t\), defined on the budget simplex \(\Delta_B^N=\{w\ge 0:\sum_i w_i=B\}\), and let
\(D=\max_{u,v\in\Delta_B^N}\|u-v\|_2=B\sqrt{2}\).
The outer update is projected gradient descent,
\(w_{t+1}=\Pi_{\Delta_B^N}(w_t-\eta\nabla f_t(w_t))\),
followed by the \(\mathrm{TopB}\) discretization used in Alg.~\ref{alg:inducekv_rewrite}.
Full definitions, assumptions, and proofs are provided in Appendix~\ref{app:theory_oco}.

\paragraph{Budgeted selection has sublinear regret.}
The following result shows that the relaxed memory selector competes with the best fixed inducing-set allocation in hindsight under the same budget.

\begin{theorem}[Static regret under fixed budget]
\label{thm:static_regret}
Assume each inner-solved surrogate \(f_t\) is convex and has bounded gradients \(\|\nabla f_t(w)\|_2\le G\) on \(\Delta_B^N\).
Then for any comparator \(u\in\Delta_B^N\), the projected outer updates satisfy
\begin{align}
\mathrm{Reg}_T(u)
&\triangleq
\sum_{t=1}^{T}\big(f_t(w_t)-f_t(u)\big)
\le
\frac{D^2}{2\eta}+\frac{\eta G^2T}{2},
\label{eq:static_regret_bound}\\
\mathrm{Reg}_T(u)
&\le
DG\sqrt{T}
=
B\sqrt{2}\,G\sqrt{T}
\qquad
\text{when } \eta=\frac{D}{G\sqrt{T}}.
\label{eq:static_regret_simplified}
\end{align}
\end{theorem}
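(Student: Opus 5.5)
This is the standard online projected gradient descent analysis (Zinkevich). The plan is a per-round potential argument based on the squared distance \(\|w_t-u\|_2^2\), followed by a telescoping sum.

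\textbf{Step 1: the diameter.} First I would verify \(D=B\sqrt{2}\). For \(u,v\in\Delta_B^N\) we have \(\|u-v\|_2^2=\|u\|^2+\|v\|^2-2\langle u,v\rangle\le \|u\|_2^2+\|v\|_2^2\), since the entries are nonnegative. Each \(\|u\|_2^2\le\|u\|_1^2=B^2\), so \(\|u-v\|_2\le B\sqrt{2}\). Equality holds for two distinct vertices \(Be_i, Be_j\), which requires \(N\ge2\).

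\textbf{Step 2: per-round inequality.} Fix \(u\in\Delta_B^N\) and write \(g_t=\nabla f_t(w_t)\). Convexity gives
\[
f_t(w_t)-f_t(u)\le\langle g_t,w_t-u\rangle .
\]
The Euclidean projection onto the closed convex set \(\Delta_B^N\) is nonexpansive and fixes \(u\). Hence
\[
\|w_{t+1}-u\|^2\le\|w_t-\eta g_t-u\|^2=\|w_t-u\|^2-2\eta\langle g_t,w_t-u\rangle+\eta^2\|g_t\|^2 .
\]
Rearranging and using \(\|g_t\|\le G\) gives
\[
\langle g_t,w_t-u\rangle\le \frac{\|w_t-u\|^2-\|w_{t+1}-u\|^2}{2\eta}+\frac{\eta G^2}{2}.
\]
The bounded-gradient assumption applies because every iterate \(w_t\) lies in \(\Delta_B^N\). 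This holds for \(w_1\) by initialization (Alg.~\ref{alg:inducekv_rewrite}) and for later iterates by the projection.

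\textbf{Step 3: telescope and tune.} Summing over \(t=1,\dots,T\), the distance terms telescope to \((\|w_1-u\|^2-\|w_{T+1}-u\|^2)/(2\eta)\le D^2/(2\eta)\). This yields \eqref{eq:static_regret_bound}. Substituting \(\eta=D/(G\sqrt{T})\) makes both terms equal to \(DG\sqrt{T}/2\), so their sum is \(DG\sqrt{T}=B\sqrt{2}\,G\sqrt{T}\), which is \eqref{eq:static_regret_simplified}.

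\textbf{Main obstacle: the modelling layer, not the algebra.} I would state explicitly that the regret concerns the relaxed iterates \(w_t\) produced by one projected step per task on the inner-solved surrogate \(f_t\). The TopB discretization and the unrolled approximation of \(\phi^\star\) lie outside the bound, and convexity is assumed rather than derived. I would also note that the surrogate's gradient is evaluated at the exact iterate \(w_t\), since using a stale \(\phi\) would break the first-order inequality in Step 2. I would also add a one-line justification that projection onto the scaled simplex is nonexpansive: it is the projection onto a nonempty closed convex set.
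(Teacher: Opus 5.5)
Your proof is correct and follows the same argument as the paper: convexity, a one-step projected-gradient inequality, telescoping, and the choice $\eta=D/(G\sqrt{T})$. The only differences are cosmetic: you get the one-step inequality from nonexpansiveness of $\Pi_{\Delta_B^N}$ (with $u$ as a fixed point), whereas the paper derives it from the projection's variational inequality plus Young's inequality (Lemma~\ref{lem:proj_ineq}); and you explicitly verify $D=B\sqrt{2}$, which the paper only asserts.
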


\paragraph{The bound adapts to nonstationary task streams.}
If the best inducing-set allocation changes over time, the regret increases only with the path-length variation of the changing comparator.

\begin{theorem}[Dynamic regret with task variation]
\label{thm:dynamic_regret}
Let \(\{u_t\}_{t=1}^{T}\subset\Delta_B^N\) be any time-varying comparator sequence and
\(V_T=\sum_{t=1}^{T-1}\|u_{t+1}-u_t\|_2\).
Under the same bounded-gradient condition,
\begin{align}
\mathrm{DReg}_T(\{u_t\})
&\triangleq
\sum_{t=1}^{T}\big(f_t(w_t)-f_t(u_t)\big)
\lesssim
\frac{D^2+D V_T}{\eta}+\eta G^2T,
\label{eq:dynamic_regret_bound}\\
\mathrm{DReg}_T(\{u_t\})
&\lesssim
G\sqrt{T}\sqrt{D^2+D V_T}
\quad
\text{with the optimized step size.}
\label{eq:dynamic_regret_optimized}
\end{align}
\end{theorem}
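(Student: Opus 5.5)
The plan is to run the standard Zinkevich-style online gradient descent argument, with a time-varying comparator $u_t$ in place of a fixed $u$. Convexity of each $f_t$ on $\Delta_B^N$ gives the linearization
\[
f_t(w_t)-f_t(u_t)\le \langle \nabla f_t(w_t),\, w_t-u_t\rangle ,
\]
so it suffices to bound $\sum_t \langle g_t, w_t-u_t\rangle$, where $g_t=\nabla f_t(w_t)$. Since $u_t\in\Delta_B^N$ and Euclidean projection onto a closed convex set is nonexpansive,
\[
\|w_{t+1}-u_t\|_2^2\le \|w_t-\eta g_t-u_t\|_2^2 .
\]
Expanding the right-hand side and using $\|g_t\|_2\le G$ gives the per-step inequality
\[
\langle g_t, w_t-u_t\rangle \le \frac{\|w_t-u_t\|_2^2-\|w_{t+1}-u_t\|_2^2}{2\eta}+\frac{\eta G^2}{2}.
\]

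The key step, and the only real difference from the static proof of Theorem~\ref{thm:static_regret}, is that the sum no longer telescopes. The next step's "start" term is measured against $u_{t+1}$, not $u_t$. I would rewrite
\[
\|w_{t+1}-u_t\|^2 = \|w_{t+1}-u_{t+1}\|^2 + \bigl(\|w_{t+1}-u_t\|^2-\|w_{t+1}-u_{t+1}\|^2\bigr)
\]
and bound the correction by factoring the difference of squares:
\[
\|w_{t+1}-u_{t+1}\|^2-\|w_{t+1}-u_t\|^2
=\langle u_t-u_{t+1},\, 2w_{t+1}-u_t-u_{t+1}\rangle
\le \|u_{t+1}-u_t\|_2\,\bigl(\|w_{t+1}-u_t\|_2+\|w_{t+1}-u_{t+1}\|_2\bigr)
\le 2D\|u_{t+1}-u_t\|_2,
\]
using that every pairwise distance in $\Delta_B^N$ is at most $D$. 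This bookkeeping step is the one I expect to need care. The correction terms must enter with the right sign: the drift term should be added to the telescoping sum, not subtracted. The constant $2D$ (rather than a single $D$) is absorbed by the $\lesssim$ in the statement.

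Summing over $t=1,\dots,T$, the telescoping terms contribute at most $\|w_1-u_1\|^2/(2\eta)\le D^2/(2\eta)$, the drift terms contribute at most $2D V_T/(2\eta)=D V_T/\eta$, and the gradient terms contribute $\eta G^2T/2$. Together this gives
\[
\mathrm{DReg}_T\le \frac{D^2+2DV_T}{2\eta}+\frac{\eta G^2 T}{2}\lesssim \frac{D^2+DV_T}{\eta}+\eta G^2T,
\]
which is \eqref{eq:dynamic_regret_bound}.

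For \eqref{eq:dynamic_regret_optimized}, I would balance the two terms by choosing $\eta=\sqrt{D^2+DV_T}/(G\sqrt T)$, which yields $2G\sqrt T\sqrt{D^2+DV_T}$. Since this step size requires knowing $V_T$ in advance, I would state it as the "optimized step size" in hindsight, as the statement does. If an oblivious choice were wanted, I would note that a doubling trick or an expert-ensemble over step sizes recovers the bound up to logarithmic factors, but I would not prove that here. As a check, setting $u_t\equiv u$ gives $V_T=0$ and recovers Theorem~\ref{thm:static_regret} up to constants.
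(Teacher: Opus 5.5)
Your proof is correct, and it takes a slightly different route from the paper's.

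The paper splits the linearized term as $\langle g_t,w_t-u_{t+1}\rangle+\langle g_t,u_{t+1}-u_t\rangle$. It applies Lemma~\ref{lem:proj_ineq} with the shifted comparator $u_{t+1}$ and bounds the second piece by Cauchy--Schwarz as $G\|u_{t+1}-u_t\|_2$. It then converts $\|u_{t+1}-w_t\|_2^2$ back to $\|u_t-w_t\|_2^2$ using the triangle inequality and the relaxation $\|u_{t+1}-u_t\|_2^2\le D\|u_{t+1}-u_t\|_2$. This gives $\frac{D^2+3DV_T}{2\eta}+\frac{\eta G^2T}{2}+GV_T$. The stray $GV_T$ is folded into the $1/\eta$ term only by imposing $\eta\le D/G$, which yields $\frac{D^2+5DV_T}{2\eta}+\frac{\eta G^2T}{2}$.

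You instead keep $u_t$ in the one-step inequality and pay for the comparator shift inside the potential. You do this through the exact identity $\|w_{t+1}-u_{t+1}\|_2^2-\|w_{t+1}-u_t\|_2^2=\langle u_t-u_{t+1},\,2w_{t+1}-u_t-u_{t+1}\rangle\le 2D\|u_{t+1}-u_t\|_2$. This buys two things.
\begin{itemize}
\item No gradient-dependent drift term appears, so you need no side condition on $\eta$. The paper's condition $\eta\le D/G$ is not automatically met by its own optimized $\eta^\star$ when $V_T$ is comparable to $DT$, although the bound is trivial in that regime anyway since $\mathrm{DReg}_T\le GDT$.
\item Your term $\frac{D^2+2DV_T}{2\eta}$ is already at most $\frac{D^2+DV_T}{\eta}$, so \eqref{eq:dynamic_regret_bound} holds as a genuine inequality rather than only up to constants.
\end{itemize}
The paper's version has the modest advantage of reusing Lemma~\ref{lem:proj_ineq} verbatim as a black box.

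One bookkeeping point you should state explicitly: the drift correction exists only for $t\le T-1$. At $t=T$, either drop $-\|w_{T+1}-u_T\|_2^2\le 0$ or set $u_{T+1}=u_T$ as the paper does. This is what makes your drift sum exactly $V_T$.
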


\paragraph{Anchors control historical retention.}
The anchor term links the online surrogate to retention: if anchors approximate the historical risk, then controlling the anchor loss controls forgetting up to the anchor approximation error.

\begin{theorem}[Anchor-controlled historical risk]
\label{thm:retention}
Assume the anchor loss \(\mathcal{A}_t(w)\) uniformly approximates the true historical risk
\(\widetilde{\mathcal{R}}_{<t}(w)\) within \(\epsilon_t^{\mathrm{rep}}\).
Then for every task \(t\),
\begin{align}
\widetilde{\mathcal{R}}_{<t}(w_t)
&\le
\mathcal{A}_t(w_t)+\epsilon_t^{\mathrm{rep}},
\label{eq:hist_by_anchor_pointwise}\\
\frac{1}{T}\sum_{t=1}^{T}\widetilde{\mathcal{R}}_{<t}(w_t)
&\le
\frac{1}{\beta T}\sum_{t=1}^{T} f_t(u)
+
\frac{\mathrm{Reg}_T(u)}{\beta T}
+
\frac{1}{T}\sum_{t=1}^{T}\epsilon_t^{\mathrm{rep}},
\label{eq:hist_avg_bound}
\end{align}
for any \(u\in\Delta_B^N\) and \(\beta>0\).
\end{theorem}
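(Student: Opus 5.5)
The two inequalities have different sources. The pointwise bound \eqref{eq:hist_by_anchor_pointwise} comes straight from the uniform approximation assumption. The averaged bound \eqref{eq:hist_avg_bound} then follows by bounding the anchor loss through the full surrogate and invoking Theorem~\ref{thm:static_regret}.

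\emph{Pointwise bound.} I will state the uniform approximation assumption in its natural form,
\[
\sup_{w\in\Delta_B^N}\bigl|\widetilde{\mathcal{R}}_{<t}(w)-\mathcal{A}_t(w)\bigr|\le\epsilon_t^{\mathrm{rep}}.
\]
Evaluating this at $w=w_t\in\Delta_B^N$ gives \eqref{eq:hist_by_anchor_pointwise} immediately. Only the one-sided version $\widetilde{\mathcal{R}}_{<t}\le\mathcal{A}_t+\epsilon_t^{\mathrm{rep}}$ is actually needed.

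\emph{Relating the anchor loss to the surrogate.} The inner-solved surrogate has the form
\[
f_t(w)=\mathcal{L}_{\mathrm{cur}}\bigl(w,\phi^\star(w)\bigr)+\beta\,\mathcal{A}_t(w)+\gamma\,\Omega_{\mathrm{spec}}(w),
\]
where $\mathcal{A}_t(w)=\mathcal{L}_{\mathrm{anc}}(w,\phi^\star(w))$. Since $\beta>0$, this rearranges to $\mathcal{A}_t(w_t)=\bigl(f_t(w_t)-\mathcal{L}_{\mathrm{cur}}-\gamma\Omega_{\mathrm{spec}}\bigr)/\beta$. The NLL is nonnegative, so $\mathcal{L}_{\mathrm{cur}}\ge 0$. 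The delicate term is $\gamma\,\Omega_{\mathrm{spec}}(w)=-\gamma\log\det(C(w)+\epsilon I_{d'})$, which can be negative.

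I would handle this first by bounding it below. Because $\|z_i\|\le\|r_i\|=1$ and $\sum_i\pi_i=1$, we have $\operatorname{tr}C(w)\le 1$. By AM--GM on the eigenvalues,
\[
\log\det\bigl(C(w)+\epsilon I\bigr)\le d'\log\bigl(\epsilon+1/d'\bigr),
\]
so $\Omega_{\mathrm{spec}}(w)\ge -d'\log(\epsilon+1/d')$ uniformly in $w$. This constant is the same at $w_t$ and at the comparator $u$. The cleanest way to get exactly the stated form is therefore to shift $\Omega_{\mathrm{spec}}$ by this constant, making it nonnegative. The shift leaves all gradients, the PGD iterates, and the regret unchanged, since $f_t(w_t)-f_t(u)$ is invariant. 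With that normalization, $\mathcal{A}_t(w_t)\le f_t(w_t)/\beta$. Alternatively, one can assume $\Omega_{\mathrm{spec}}\ge 0$ as part of the appendix's definition of the surrogate.

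\emph{Averaging.} Summing the pointwise bound over $t$ and dividing by $T$ gives
\[
\frac1T\sum_t\widetilde{\mathcal{R}}_{<t}(w_t)\le\frac{1}{\beta T}\sum_t f_t(w_t)+\frac1T\sum_t\epsilon_t^{\mathrm{rep}}.
\]
Next, write $\sum_t f_t(w_t)=\sum_t f_t(u)+\mathrm{Reg}_T(u)$ for any $u\in\Delta_B^N$, which is just the definition of regret. Substituting yields \eqref{eq:hist_avg_bound}. Convexity and bounded gradients are not needed for this identity. They enter only when one further plugs in the $DG\sqrt{T}$ bound from Theorem~\ref{thm:static_regret}, which makes the regret term vanish at rate $O(1/\sqrt T)$.

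\emph{Main obstacle.} The only nontrivial point is the sign of the non-anchor terms in $f_t$, that is, justifying $\mathcal{L}_{\mathrm{cur}}+\gamma\Omega_{\mathrm{spec}}\ge0$. I would resolve it with the trace/AM--GM lower bound and constant shift described above.
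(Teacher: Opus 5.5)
Your proposal is correct and follows the paper's proof essentially step for step. The pointwise bound is read off the uniform-approximation assumption at $w=w_t$, the anchor loss is bounded by $f_t(w_t)/\beta$ using nonnegativity of the remaining terms, and the regret enters only through the identity $\sum_t f_t(w_t)=\sum_t f_t(u)+\mathrm{Reg}_T(u)$. The one difference is that the paper simply asserts the coverage term is nonnegative, while you justify it. In fact your trace/AM--GM bound already gives $\Omega_{\mathrm{spec}}(w)\ge -d'\log(\epsilon+1/d')\ge 0$ whenever $\epsilon+1/d'\le 1$ (e.g.\ $d'=256$, $\epsilon=10^{-3}$), so the constant shift is unnecessary and your argument is exactly what substantiates the paper's unproved claim. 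A shift, which would change $f_t$ and hence the statement being proved, could only matter when $\epsilon>1-1/d'$.
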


Together, these results justify the design of \textsc{InduceKV}: the selector updates a fixed-budget memory with sublinear online regret, remains stable under gradually changing task optima, and uses anchors to connect the surrogate objective to historical retention.
The final \(\mathrm{TopB}\) step is a rounding operation from the relaxed selector to the actual memory; its additional gap is controlled when the surrogate is Lipschitz, as discussed in Appendix~\ref{app:theory_oco}.

\begin{table*}[t]
\centering
\small
\setlength{\tabcolsep}{4.5pt}
\renewcommand{\arraystretch}{1.10}
\caption{\textbf{Task-incremental continual instruction tuning on \textsc{UCIT} and \textsc{CoIN}.}
Baseline results follow the HiDe-LLaVA protocol~\cite{Hidellava25}.}
\label{tab:main_cit}
\resizebox{0.92\textwidth}{!}{%
\begin{tabular}{llcccc}
\toprule
Method & Backbone / Protocol
& UCIT Avg$\uparrow$ & UCIT Last$\uparrow$
& CoIN Avg$\uparrow$ & CoIN Last$\uparrow$ \\
\midrule
\textcolor{gray}{Zero-shot} (lower bound)  
& LLaVA-v1.5-7B & 31.68 & -- & 54.21 & -- \\
\textcolor{gray}{Multi-task} (upper bound) 
& LLaVA-v1.5-7B & 74.78 & -- & 67.40 & -- \\
\midrule
FineTune   
& LLaVA-v1.5-7B & 57.52 & 48.12 & 52.86 & 47.31 \\
LwF        
& LLaVA-v1.5-7B & 59.40 & 49.52 & 53.22 & 48.40 \\
EWC        
& LLaVA-v1.5-7B & 59.34 & 50.20 & 53.30 & 48.95 \\
L2P        
& LLaVA-v1.5-7B & 53.99 & 48.51 & 53.96 & 52.83 \\
O-LoRA     
& LLaVA-v1.5-7B & 64.54 & 58.36 & 62.60 & 60.77 \\
MoELoRA    
& LLaVA-v1.5-7B & 61.33 & 52.06 & 55.24 & 50.58 \\
HiDe-LLaVA 
& LLaVA-v1.5-7B 
& 68.94 & 64.19 & 64.70 & 63.95 \\
\midrule
\rowcolor{blue!6}
\textsc{InduceKV}
& LLaVA-v1.5-7B
& \textbf{69.82} & \textbf{65.31} & \textbf{66.05} & \textbf{65.38} \\
\rowcolor{blue!10}
\textsc{InduceKV}
& LLaVA-OV-4B
& \textbf{70.25} & \textbf{65.84} & \textbf{66.38} & \textbf{65.72} \\
\bottomrule
\end{tabular}%
}
\end{table*}

\begin{table*}[t]
\centering
\small
\setlength{\tabcolsep}{3.8pt}
\renewcommand{\arraystretch}{1.10}
\caption{\textbf{Continual VQA under LV and T5 protocols.}
\textbf{LV}: LLaVA/Vicuna-7B protocol for the VQAv2 10-task stream;
\textbf{T5}: T5-based VQACL protocol for VQACL;
\textbf{OV}: our default LLaVA-OneVision-1.5-4B-Instruct protocol.
Rows marked \textbf{LV/T5} use LV for VQAv2 10-task and T5 for VQACL columns.}
\label{tab:main_vqa}
\resizebox{0.9\textwidth}{!}{%
\begin{tabular}{llcccccc}
\toprule
& &
\multicolumn{2}{c}{\textbf{VQAv2 10-task}} &
\multicolumn{2}{c}{\textbf{VQACL: VQAv2 (Std.)}} &
\multicolumn{2}{c}{\textbf{VQACL: NExT-QA (Std.)}} \\
\cmidrule(lr){3-4}\cmidrule(lr){5-6}\cmidrule(lr){7-8}
Method & Prot.
& AP$\uparrow$ & AF$\downarrow$
& AP$\uparrow$ & Forget$\downarrow$
& AP$\uparrow$ & Forget$\downarrow$ \\
\midrule
Vanilla 
& LV/T5
& 32.51 & 20.69 
& 14.92 & 30.80 
& 12.68 & 25.94 \\
EWC     
& LV/T5
& 37.28 & 15.27 
& 15.77 & 30.62 
& 13.01 & 24.06 \\
MAS     
& LV/T5
& 37.71 & 14.91 
& 20.56 & 11.16 
& 18.04 & 10.07 \\
ER      
& LV/T5
& 41.95 & 10.20 
& 36.99 & 5.99 
& 30.55 & 4.91 \\
DER     
& LV/T5
& 41.16 & 11.28 
& 35.35 & 8.62 
& 26.17 & 5.12 \\
VS      
& LV/T5
& 39.79 & 12.70 
& 34.03 & 8.79 
& 28.13 & 4.45 \\
VQACL   
& LV/T5
& 43.49 & 9.10 
& 37.46 & 6.96 
& 30.86 & 4.12 \\
CL-MoE  
& LV
& 51.34 & -0.02 
& -- & -- 
& -- & -- \\
QUAD    
& T5
& -- & -- 
& 39.25 & 4.91 
& 31.70 & 2.91 \\
\midrule
\rowcolor{blue!6}
\textsc{InduceKV}-LV 
& LV
& \textbf{52.64} & \textbf{1.70} 
& -- & -- 
& -- & -- \\
\rowcolor{blue!6}
\textsc{InduceKV}-T5 
& T5
& -- & -- 
& \textbf{39.86} & \textbf{4.54} 
& \textbf{32.12} & \textbf{2.68} \\
\rowcolor{blue!10}
\textsc{InduceKV}-OV 
& OV
& \textbf{53.12} & \textbf{1.85}
& \textbf{40.73} & \textbf{3.64}
& \textbf{33.25} & \textbf{2.18} \\
\bottomrule
\end{tabular}%
}
\vspace{0.5mm}
\end{table*}


\section{Experiments}

\subsection{Experimental Setup}
\label{sec:exp}

\textbf{Datasets.}
We evaluate \textsc{InduceKV} on (i) task-incremental continual instruction tuning benchmarks \textsc{CoIN} and \textsc{UCIT}~\cite{coin19,Hidellava25},
(ii) continual VQA streams on \textsc{VQAv2} (10 question-type tasks) and \textsc{VQACL}~\cite{clmoe25,quad25},
(iii) domain-incremental continual instruction tuning across \textsc{Medicine}/\textsc{Chart}/\textsc{Math}~\cite{SMoE25},
and (iv) lifelong multimodal instruction tuning streams (LiIT) under dynamic dataset arrival~\cite{Adapt25}.
Full dataset composition, task orders, and preprocessing follow the official protocols and are summarized in Appendix~\ref{app:datasets}.

\textbf{Metrics.}
We follow the official evaluation of each suite: \textsc{UCIT} reports \emph{Avg} and \emph{Last}~\cite{Hidellava25};
\textsc{CoIN} reports task scores with overall \emph{MAA} and \emph{BWT} (for both instruction-following and reasoning-style evaluation)~\cite{coin19};
continual VQA reports \emph{AP} and \emph{AF/Forget}~\cite{clmoe25,quad25};
domain-incremental CIT uses domain/task-specific metrics (e.g., RelaxAcc/RMSF1/BLEU4/Acc)~\cite{SMoE25};
LiIT reports \emph{AvgAcc}, \emph{Relative Gain}, and \emph{Forgetting Rate}~\cite{Adapt25}.
We additionally report memory footprint and inference overhead (Appendix~\ref{app:metrics}).

\vspace{1mm}
\textbf{Compared Methods.}
We compare against recent continual MLLM adaptation methods spanning replay/regularization, PEFT isolation, and routing-based architectures, including
CL-MoE~\cite{clmoe25}, QUAD~\cite{quad25}, SMoE~\cite{SMoE25}, HiDe-LLaVA~\cite{Hidellava25}, Adapt-$\infty$~\cite{Adapt25},
and others~\cite{DMoLE25,SEFE25,BranchLoRA25,SMoLoR,gift25}. We also include standard CL baselines .
We ensure fair comparison by matching the \emph{total extra footprint} (external memory / replay buffer / added parameters) and keeping inference-time compute comparable. Details are in Appendix~\ref{app:baselines_budget}.

\subsection{Main Results}
\label{sec:main_results}

\textbf{Task-incremental continual instruction tuning.}
As shown in Table~\ref{tab:main_cit}, \textsc{InduceKV} consistently improves over PEFT- and routing-based continual adaptation baselines on both \textsc{UCIT} and \textsc{CoIN}.
Under the backbone-matched LLaVA-v1.5-7B protocol, \textsc{InduceKV} improves over HiDe-LLaVA by 0.88 Avg and 1.12 Last on \textsc{UCIT}, and by 1.35 Avg and 1.43 Last on \textsc{CoIN}.
The default LLaVA-OneVision-4B variant further increases the gains to 1.31/1.65 on \textsc{UCIT} and 1.68/1.77 on \textsc{CoIN}.
These improvements indicate that storing task increments as budgeted, attention-compatible KV memories can improve both average retained performance and final-task adaptation without repeatedly updating the backbone.

\textbf{Continual VQA.}
Table~\ref{tab:main_vqa} shows that \textsc{InduceKV} achieves consistent gains across both continual VQA protocols.
Under the LV protocol, \textsc{InduceKV}-LV improves AP over CL-MoE from 51.34 to 52.64 while maintaining low forgetting.
Under the T5-VQACL protocol, \textsc{InduceKV}-T5 outperforms QUAD on both VQAv2 and NExT-QA, with higher AP and lower forgetting.

Additional results on domain-incremental continual instruction tuning and lifelong multimodal instruction tuning are provided in Appendix~\ref{app:additional_main_results}.

\subsection{Ablation Study and Mechanism Analysis}
\label{sec:ablation}
\textbf{Single-factor ablation.}
We ablate \textsc{InduceKV} under the same backbone, decoding, and footprint.
Table~\ref{tab:ablation_single} shows that each component matters: removing retrieval-weighted induction hurts most (up to 4.20 AP on VQAv2), and disabling inner calibration also causes consistent drops, especially in generation-heavy settings.
Anchor retention is most important for long-horizon task-incremental CIT (UCIT/CoIN), while spectral coverage improves robustness by reducing redundancy under a fixed budget.
Removing bilevel coupling consistently degrades performance, underscoring the need to jointly optimize selection weights and retrieval calibration; additional sensitivity results are in \S\ref{app:hyp}.

\newcommand{\dlt}[1]{{\scriptsize\textcolor{blue}{#1}}}

\begin{table*}[t]
\centering
\small
\setlength{\tabcolsep}{6.2pt}
\renewcommand{\arraystretch}{1.12}
\caption{\textbf{Single-factor ablation of \textsc{InduceKV}.}
Each row removes exactly one component from the full model under the same memory budget $(B{=}256,\,m{=}8)$.
Numbers in blue denote the drop relative to the full \textsc{InduceKV} model.}
\label{tab:ablation_single}
\resizebox{1\textwidth}{!}{
\begin{tabular}{lccccc}
\toprule
Variant & UCIT Avg$\uparrow$ & CoIN Avg$\uparrow$ & VQAv2 AP$\uparrow$ & Domain Overall$\uparrow$ & LiIT AvgAcc$\uparrow$ \\
\midrule
\textsc{InduceKV} (full) 
& 70.25 & 66.38 & 53.12 & 51.14 & 53.8 \\
\midrule
\;\;w/o \textbf{bilevel coupling} (single-level update)
& 68.97\,\dlt{(-1.28)} & 64.95\,\dlt{(-1.43)} & 51.76\,\dlt{(-1.36)} & 49.91\,\dlt{(-1.23)} & 52.4\,\dlt{(-1.4)} \\
\;\;w/o \textbf{anchor retention} ($\beta{=}0$)
& 68.52\,\dlt{(-1.73)} & 64.41\,\dlt{(-1.97)} & 52.64\,\dlt{(-0.48)} & 50.55\,\dlt{(-0.59)} & 53.2\,\dlt{(-0.6)} \\
\;\;w/o \textbf{spectral coverage} ($\gamma{=}0$)
& 69.31\,\dlt{(-0.94)} & 65.02\,\dlt{(-1.36)} & 52.28\,\dlt{(-0.84)} & 50.12\,\dlt{(-1.02)} & 52.9\,\dlt{(-0.9)} \\
\;\;w/o \textbf{inner calibration} (fix $\tau{=}0.07$, $\lambda_\ell{=}0.5$)
& 67.40\,\dlt{(-2.85)} & 63.88\,\dlt{(-2.50)} & 49.63\,\dlt{(-3.49)} & 48.27\,\dlt{(-2.87)} & 51.5\,\dlt{(-2.3)} \\
\;\;w/o \textbf{layerwise value gates} (set $\lambda_\ell{\equiv}1$)
& 69.88\,\dlt{(-0.37)} & 65.74\,\dlt{(-0.64)} & 52.61\,\dlt{(-0.51)} & 50.76\,\dlt{(-0.38)} & 53.1\,\dlt{(-0.7)} \\
\;\;w/o \textbf{retrieval-weighted induction} (uniform $\alpha_i$ over selected entries)
& 66.95\,\dlt{(-3.30)} & 62.73\,\dlt{(-3.65)} & 48.92\,\dlt{(-4.20)} & 47.50\,\dlt{(-3.64)} & 50.9\,\dlt{(-2.9)} \\
\bottomrule
\end{tabular}
}
\end{table*}

\textbf{Is the memory actually used?}
We first examine whether the retrieved KV memory is actively used during generation rather than merely stored as an inactive external buffer.
We measure the fraction of attention mass assigned to injected memory tokens across layers and tasks, and compare it with the task-level gain over a \textsc{no-mem} variant; the full metric definition and sampling protocol are provided in \hyperref[app:mech_memory_attention]{Appendix: Memory-attention utilization protocol}.
As shown in Fig.~\ref{fig:mem_attn_heatmap}, memory attention concentrates in mid-to-late layers and increases from early layers ($\approx 0.018$) to late layers ($\approx 0.094$).
Tasks with larger gains also exhibit stronger late-layer memory usage, suggesting that \textsc{InduceKV} improves performance when the model actually routes generation through retrieved KV payloads.

\begin{figure}[H]
\centering
\includegraphics[width=0.9\linewidth]{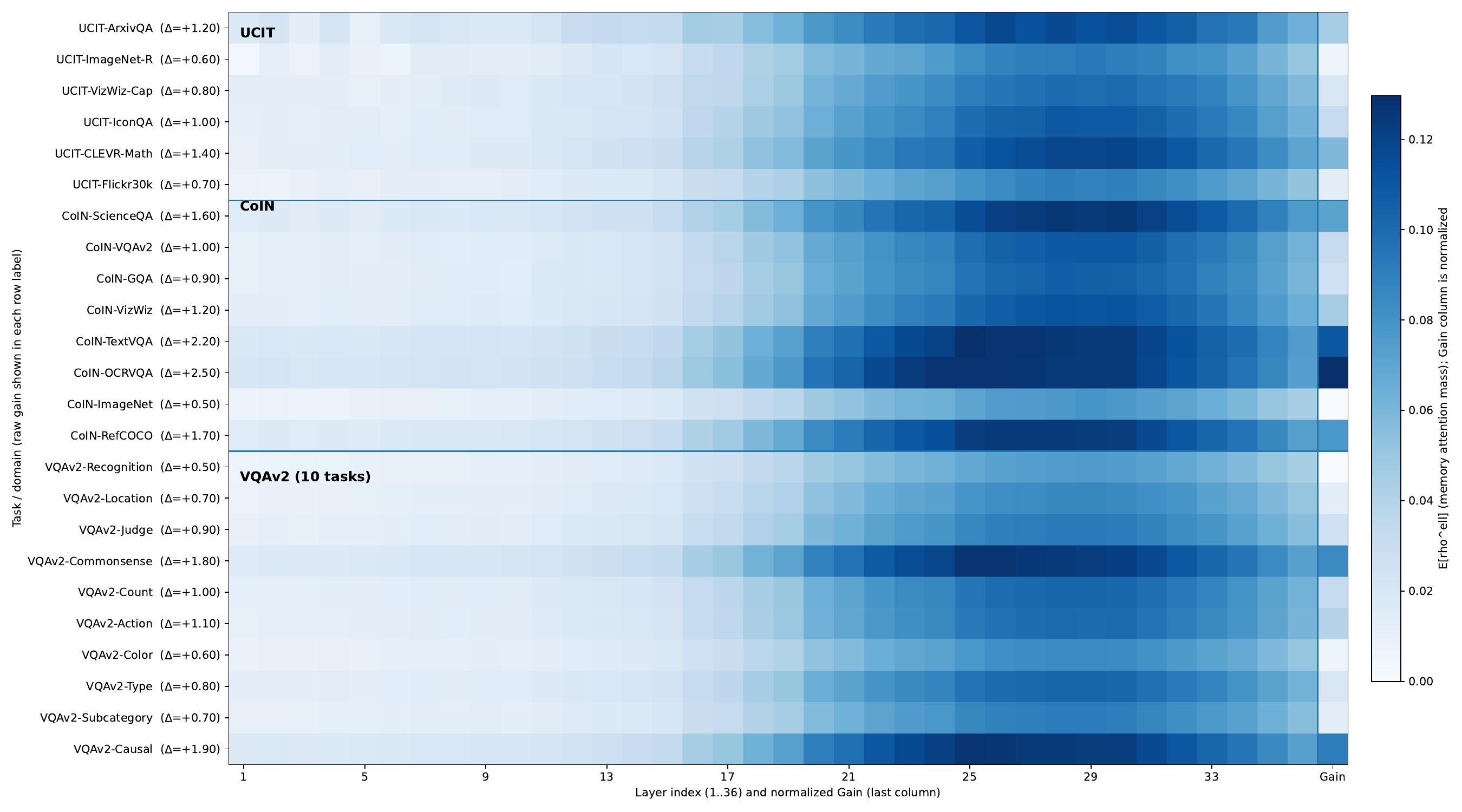}
\caption{\textbf{Memory attention utilization.}
Rows are tasks/domains and columns are layers; heatmap values show memory-attention mass, while the last column reports normalized gain over \textsc{no-mem}.
}
\label{fig:mem_attn_heatmap}
\end{figure}

\textbf{Does inducing-set selection reduce redundancy?}
We next test whether the bilevel selection objective produces a genuinely compact and diverse inducing set under the fixed memory budget.
We compare full \textsc{InduceKV} with a no-coverage variant ($\gamma{=}0$) and random Top-$B$ selection, using within-set pairwise cosine similarity and projected log-determinant coverage as diagnostics; details are given in \hyperref[app:mech_selection_diversity]{Appendix: Inducing-set diversity protocol}.
Fig.~\ref{fig:coverage_violin} shows that the full objective yields lower pairwise similarity and higher log-det coverage than both alternatives.
This indicates that the spectral coverage term prevents selection from collapsing onto dense retrieval modes and instead forms a higher-rank memory that better supports broad retention.

\begin{figure}[H]
\centering
\includegraphics[width=0.72\linewidth]{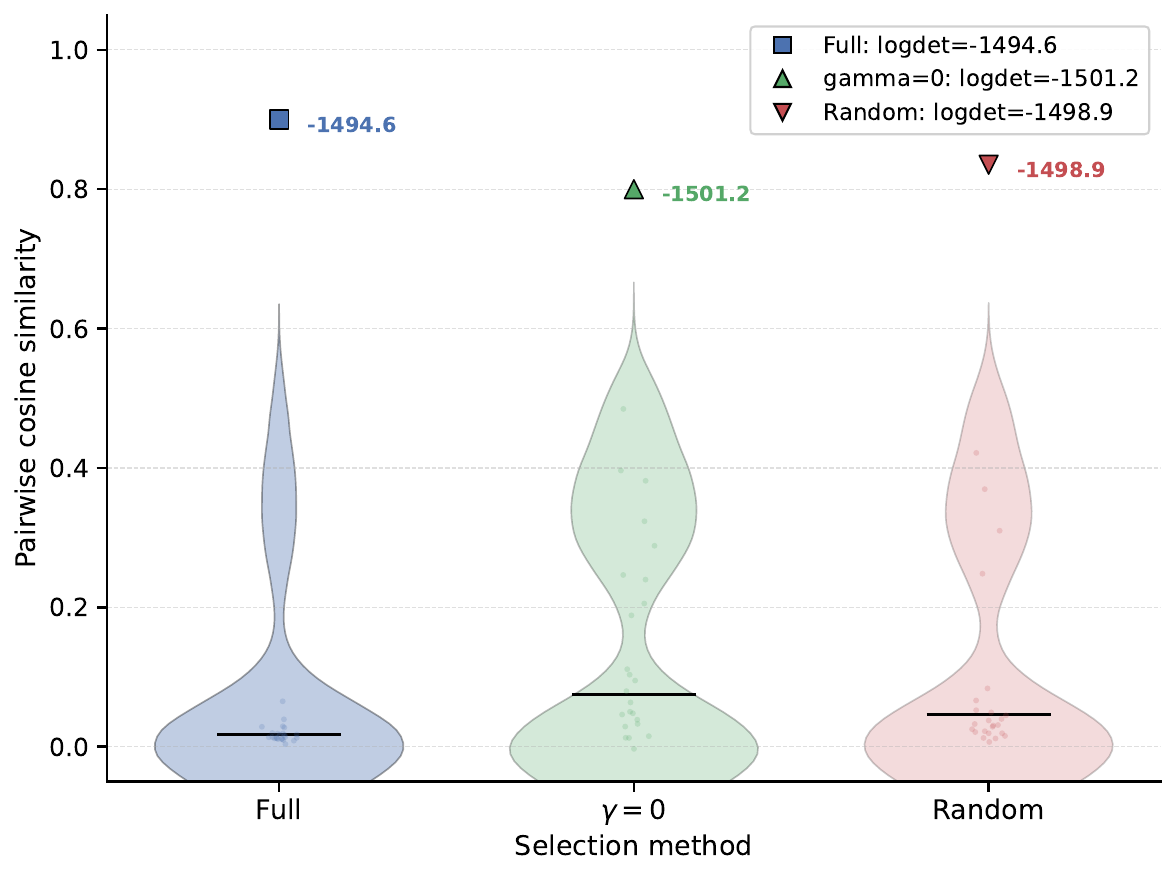}
\caption{\textbf{Inducing-set diversity.}
}
\label{fig:coverage_violin}
\end{figure}

\section{Conclusion}
We address continual adaptation of MLLMs under strict footprint constraints without updating backbone parameters, and propose \textsc{InduceKV} to externalize task increments into attention-compatible KV memory with budgeted inducing-set selection.
Our empirical and theoretical results support online inducing-set selection as a principled alternative to parameter-updating pipelines.

\bibliographystyle{plainnat}
\bibliography{reference}


\appendix

\section{Full Proofs for \S\ref{sec:theory_oco}}
\label{app:theory_oco}

\subsection{Proof Setup and Omitted Definitions}
\label{app:theory_setup}

This section provides the formal definitions and assumptions omitted from the compact main-text theory section.
We analyze the relaxed outer selector before the final \(\mathrm{TopB}\) rounding step.
At task \(t\), the candidate pool is
\(\mathcal{U}_t=\mathcal{C}_t\cup\mathcal{M}_{t-1}\), where \(\mathcal{C}_t\) contains entries extracted from the current task and \(\mathcal{M}_{t-1}\) is the previous memory.
Let \(N=|\mathcal{U}_t|\), and let \(w\in\mathbb{R}_{\ge 0}^{N}\) denote relaxed selection weights.

\paragraph{Budget simplex.}
The feasible set is the fixed-budget simplex
\begin{equation}
\begin{aligned}
\Delta_B^N
&\triangleq
\Big\{
w\in\mathbb{R}_{\ge 0}^{N}:
\sum_{i=1}^{N} w_i=B
\Big\},\\
D
&\triangleq
\max_{u,v\in\Delta_B^N}\|u-v\|_2
=
B\sqrt{2}.
\end{aligned}
\label{eq:domain_simplex}
\end{equation}
We write \(\pi_i(w)=w_i/B\), so that \(\sum_i\pi_i(w)=1\).
During selection, \(\pi_i(w)\) acts as a differentiable inclusion prior inside the retrieval weights.

\paragraph{Inner-solved outer surrogate.}
Given \(w\), the inner calibration \(\phi_t^\star(w)\) is obtained by solving the current-task calibration problem in Eq.~\eqref{eq:inner_rewrite}.
The outer analysis treats this calibration as solved and defines the per-task surrogate
\begin{equation}
f_t(w)
\triangleq
\mathcal{L}_t\!\big(w,\phi_t^\star(w)\big)
+
\beta\mathcal{A}_t\!\big(w,\phi_t^\star(w)\big)
+
\gamma\Omega(w),
\label{eq:ft_def}
\end{equation}
where
\begin{align}
\mathcal{L}_t(w,\phi)
&=
\frac{1}{|\mathcal{D}_t|}
\sum_{(x,y)\in\mathcal{D}_t}
\mathcal{L}_{\mathrm{NLL}}
\big(\theta;\,x,y\mid\mathcal{U}_t,w,\phi\big),
\\
\mathcal{A}_t(w,\phi)
&=
\frac{1}{|\mathcal{A}_{<t}|}
\sum_{(x,y)\in\mathcal{A}_{<t}}
\mathcal{L}_{\mathrm{NLL}}
\big(\theta;\,x,y\mid\mathcal{U}_t,w,\phi\big).
\end{align}
Here, \(\mathcal{L}_t\) measures current-task fit, \(\mathcal{A}_t\) measures anchor retention, and \(\Omega(w)\) is the log-determinant coverage penalty in the frozen retrieval space.

\paragraph{Projected outer update.}
The relaxed selector is updated by projected online gradient descent:
\begin{equation}
w_{t+1}
=
\Pi_{\Delta_B^N}
\Big(
w_t-\eta\nabla f_t(w_t)
\Big),
\label{eq:ogd_update}
\end{equation}
where \(\Pi_{\Delta_B^N}\) is Euclidean projection.
After optimization at task \(t\), \(\textsc{InduceKV}\) discretizes the relaxed selector by taking \(\mathrm{TopB}(w_t)\) and commits the corresponding entries as \(\mathcal{M}_t\).

\begin{assumption}[Convexity and bounded gradients]
\label{ass:convex_lipschitz}
For each task \(t\), the inner-solved surrogate \(f_t:\Delta_B^N\to\mathbb{R}\) is convex and differentiable, and there exists \(G>0\) such that
\begin{equation}
\|\nabla f_t(w)\|_2\le G,
\qquad
\forall w\in\Delta_B^N.
\label{eq:grad_bound}
\end{equation}
\end{assumption}

This assumption is standard for online convex optimization analyses.
In our setting, it corresponds to using a stable inner calibration solver, bounded retrieval keys, and a smooth soft-retrieval dependence on \(w\).
The log-determinant coverage term is convex and has a bounded gradient, as shown in Lemmas~\ref{lem:logdet_convex}--\ref{lem:logdet_grad}.

\paragraph{Dynamic comparator.}
For nonstationary streams, we compare the learner to a changing sequence of relaxed selectors
\(\{u_t\}_{t=1}^{T}\subset\Delta_B^N\).
Its path-length variation is
\begin{equation}
V_T
\triangleq
\sum_{t=1}^{T-1}
\|u_{t+1}-u_t\|_2.
\label{eq:path_variation}
\end{equation}
Small \(V_T\) corresponds to streams whose best memory allocation evolves gradually, while large \(V_T\) corresponds to abrupt task shifts.

\paragraph{Historical risk and anchor representativeness.}
Let \(\widetilde{\mathcal{P}}_{<t}\) denote the unknown mixture distribution over previous tasks.
For the inner-solved induced predictor, define the true historical risk
\begin{equation}
\widetilde{\mathcal{R}}_{<t}(w)
\triangleq
\mathbb{E}_{(x,y)\sim \widetilde{\mathcal{P}}_{<t}}
\big[
\ell_t(x,y;w)
\big],
\label{eq:true_hist_risk}
\end{equation}
where \(\ell_t(x,y;w)\) is the negative log-likelihood induced by candidate-based retrieval and calibrated KV injection.
We use the following coreset-style anchor condition.

\begin{assumption}[\(\epsilon\)-representative anchors]
\label{ass:anchor_coreset}
For each task \(t\), the anchor loss uniformly approximates the true historical risk:
\begin{equation}
\sup_{w\in\Delta_B^N}
\left|
\mathcal{A}_t(w)-\widetilde{\mathcal{R}}_{<t}(w)
\right|
\le
\epsilon_t^{\mathrm{rep}}.
\label{eq:anchor_coreset}
\end{equation}
\end{assumption}

This condition is empirically checkable by measuring the gap between anchor loss and held-out past-task loss.
It does not require anchors to perfectly represent all previous data; it only quantifies the approximation error that appears in the retention bound.

\paragraph{Rounding by \(\mathrm{TopB}\).}
The regret analysis is stated for the relaxed selector \(w_t\).
The actual memory update uses \(\mathrm{TopB}(w_t)\), which is a deterministic rounding step from the relaxed solution to a discrete budget-\(B\) subset.
If each \(f_t\) is Lipschitz around the relaxed solution, the additional rounding gap is controlled by the distance between \(w_t\) and its rounded selection vector.
Empirically, the spectral coverage term reduces this gap by discouraging highly concentrated or near-duplicate selections.

\subsection{Preliminaries}
\label{proof_Pre}
We use the following standard properties of Euclidean projection.
For a closed convex set $\mathcal{K}\subset\mathbb{R}^N$, define $\Pi_{\mathcal{K}}(x)=\arg\min_{y\in\mathcal{K}}\|y-x\|_2$.

\begin{lemma}[Projection inequality]
\label{lem:proj_ineq}
Let $\mathcal{K}$ be closed and convex, $w\in\mathcal{K}$, $g\in\mathbb{R}^N$, $\eta>0$, and
$w^+ \triangleq \Pi_{\mathcal{K}}(w-\eta g)$.
Then for any $u\in\mathcal{K}$,
\begin{equation}
\langle g, w-u\rangle
\;\le\;
\frac{\|u-w\|_2^2-\|u-w^+\|_2^2}{2\eta}
\;+\;
\frac{\eta}{2}\|g\|_2^2.
\label{eq:proj_ineq}
\end{equation}
\end{lemma}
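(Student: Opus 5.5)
The plan is to use the variational characterization of the Euclidean projection onto a closed convex set, and then expand a squared norm. Set $v \triangleq w-\eta g$, so that $w^+=\Pi_{\mathcal{K}}(v)$. Because $w^+$ minimizes the strictly convex function $y\mapsto\tfrac12\|y-v\|_2^2$ over $\mathcal{K}$, the first-order optimality condition gives
\[
\langle w^+ - v,\; u - w^+\rangle \ge 0 \qquad \text{for all } u\in\mathcal{K}.
\]
This is the only property of the projection I will need. Equivalently, it says $\|u-w^+\|_2\le\|u-v\|_2$, i.e., the projection is non-expansive toward points of $\mathcal{K}$.

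\textbf{Step 1 (non-expansiveness).} Expand
\[
\|u-v\|_2^2 = \|u-w^+\|_2^2 + \|w^+-v\|_2^2 + 2\langle u-w^+,\, w^+-v\rangle .
\]
The optimality condition makes the last term nonnegative, and the middle term is a square. Hence $\|u-w^+\|_2^2\le\|u-v\|_2^2$ for every $u\in\mathcal{K}$.

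\textbf{Step 2 (expand the unprojected step).} Substitute $v=w-\eta g$:
\[
\|u-v\|_2^2=\|u-w\|_2^2+2\eta\langle g,\,u-w\rangle+\eta^2\|g\|_2^2 .
\]
Combining this with Step 1 gives
\[
\|u-w^+\|_2^2 \le \|u-w\|_2^2 - 2\eta\langle g,\, w-u\rangle + \eta^2\|g\|_2^2 .
\]

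\textbf{Step 3 (rearrange).} Move the inner-product term to the left-hand side and divide by $2\eta>0$. This yields exactly \eqref{eq:proj_ineq}.

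There is no real obstacle here. The only point needing care is the sign convention in the optimality condition: the correct direction is $\langle v-w^+,\,u-w^+\rangle\le 0$, and it must be applied with $u$ taken in $\mathcal{K}$. The hypothesis $w\in\mathcal{K}$ is not used in the argument itself. It is included only because the lemma will be applied with $w=w_t$ along the iterates of \eqref{eq:ogd_update}, which lie in $\mathcal{K}$ by construction.
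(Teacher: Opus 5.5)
Your proof is correct, but you organize it differently from the paper.

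Both arguments start from the variational inequality $\langle v-w^+,u-w^+\rangle\le 0$ with $v=w-\eta g$. The paper then proceeds in four steps:
\begin{itemize}
\item it turns this into the lower bound $\langle g,u-w^+\rangle\ge\frac{1}{\eta}\langle w-w^+,u-w^+\rangle$;
\item it rewrites the right side with the three-point identity, in terms of $\|u-w\|_2^2$, $\|u-w^+\|_2^2$ and $\|w-w^+\|_2^2$;
\item it splits $\langle g,w-u\rangle$ at $w^+$;
\item it applies Young's inequality to $\langle g,w-w^+\rangle$ to cancel the leftover $\|w-w^+\|_2^2/(2\eta)$.
\end{itemize}
You instead pass through the Pythagorean-type inequality $\|u-w^+\|_2\le\|u-v\|_2$ and expand $\|u-v\|_2^2$ exactly, so you need no Young step.

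The two proofs actually discard exactly the same nonnegative slack, namely $\frac{1}{2\eta}(\|u-v\|_2^2-\|u-w^+\|_2^2)=\frac{1}{2\eta}\|w^+-v\|_2^2+\frac{1}{\eta}\langle w^+-v,u-w^+\rangle$. The paper drops the first summand as the Young remainder and the second via the variational inequality, while your Step 1 drops both at once. So the resulting bounds are identical. Yours is the shorter textbook argument, using the exact Euclidean expansion of the unprojected step. The paper's keeps $\|w-w^+\|_2^2$ explicit, which is the usual bookkeeping in proximal and mirror-descent analyses; this fits its later remark about an entropic variant on $\Delta_B^N$.

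One wording fix: the non-expansiveness inequality follows from the optimality condition but is not equivalent to it. For $\mathcal{K}=[0,1]$ and $v=10$, the non-projection point $0.9$ also satisfies $|u-0.9|\le|u-10|$ for all $u\in\mathcal{K}$. So write ``in particular'' rather than ``equivalently''. Since your Step 1 derives the inequality from the optimality condition, the argument itself is unaffected.
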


\begin{proof}
Let $z \triangleq w-\eta g$ and $w^+=\Pi_{\mathcal{K}}(z)$.
The first-order optimality condition of Euclidean projection (a variational inequality) states that
\begin{equation}
\langle z-w^+, u-w^+\rangle \le 0 \qquad \forall u\in\mathcal{K}.
\label{eq:proj_opt}
\end{equation}
Substituting $z=w-\eta g$ into \eqref{eq:proj_opt} gives
\begin{equation}
\langle w-\eta g - w^+, u-w^+\rangle \le 0
\;\;\Longrightarrow\;\;
\langle g, u-w^+\rangle \ge \frac{1}{\eta}\langle w-w^+, u-w^+\rangle.
\label{eq:proj_step1}
\end{equation}
Now expand the squared norm difference:
\begin{align}
\|u-w\|_2^2 - \|u-w^+\|_2^2
&= \langle u-w, u-w\rangle - \langle u-w^+, u-w^+\rangle \nonumber\\
&= \langle (u-w^+)+(w^+-w), (u-w^+)+(w^+-w)\rangle - \|u-w^+\|_2^2 \nonumber\\
&= 2\langle u-w^+, w^+-w\rangle + \|w^+-w\|_2^2 \nonumber\\
&= -2\langle w-w^+, u-w^+\rangle + \|w^+-w\|_2^2.
\label{eq:proj_step2}
\end{align}
Rearranging \eqref{eq:proj_step2} yields
\begin{equation}
\langle w-w^+, u-w^+\rangle
=
\frac{1}{2}\Big(\|u-w^+\|_2^2 - \|u-w\|_2^2 + \|w^+-w\|_2^2\Big).
\label{eq:proj_step3}
\end{equation}
Plug \eqref{eq:proj_step3} into \eqref{eq:proj_step1}:
\begin{equation}
\langle g, u-w^+\rangle
\ge
\frac{1}{2\eta}\Big(\|u-w^+\|_2^2 - \|u-w\|_2^2 + \|w^+-w\|_2^2\Big).
\label{eq:proj_step4}
\end{equation}
Finally decompose $\langle g, w-u\rangle = \langle g, w-w^+\rangle + \langle g, w^+-u\rangle$ and use \eqref{eq:proj_step4}:
\begin{align}
\langle g, w-u\rangle
&= \langle g, w-w^+\rangle - \langle g, u-w^+\rangle \nonumber\\
&\le \langle g, w-w^+\rangle
-\frac{1}{2\eta}\Big(\|u-w^+\|_2^2 - \|u-w\|_2^2 + \|w^+-w\|_2^2\Big) \nonumber\\
&= \frac{\|u-w\|_2^2-\|u-w^+\|_2^2}{2\eta}
+\Big(\langle g, w-w^+\rangle - \frac{1}{2\eta}\|w^+-w\|_2^2\Big).
\label{eq:proj_step5}
\end{align}
Apply the inequality $a^\top b \le \frac{\eta}{2}\|a\|_2^2 + \frac{1}{2\eta}\|b\|_2^2$ with $a=g$ and $b=w-w^+$:
\begin{equation}
\langle g, w-w^+\rangle
\le
\frac{\eta}{2}\|g\|_2^2 + \frac{1}{2\eta}\|w-w^+\|_2^2.
\label{eq:young}
\end{equation}
Substituting \eqref{eq:young} into \eqref{eq:proj_step5} cancels the $\|w-w^+\|_2^2/(2\eta)$ terms, yielding \eqref{eq:proj_ineq}.
\end{proof}

\subsection{Convexity and Gradient Control for the Log-Det Coverage Regularizer}
Recall the coverage regularizer (Eq.~(19) in \S\ref{sec:method}):
\begin{equation}
\Omega(w)= -\log\det\!\big(C(w)+\epsilon I_{d'}\big),
\qquad
C(w)=\sum_{i=1}^{N}\pi_i(w)\,z_i z_i^\top,
\quad
\pi_i(w)=\frac{w_i}{B}.
\label{eq:omega_def_app}
\end{equation}

\begin{lemma}[Convexity of $\Omega$]
\label{lem:logdet_convex}
$\Omega(w)$ is convex over $\Delta_B^N$.
\end{lemma}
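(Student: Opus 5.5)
The plan is to write \(\Omega\) as the composition of an affine map with a convex matrix function, and then use the fact that convexity is preserved under affine precomposition.

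\textbf{Affine structure.} The map \(w\mapsto C(w)+\epsilon I_{d'}=\frac{1}{B}\sum_{i=1}^N w_i z_i z_i^\top+\epsilon I_{d'}\) is affine in \(w\). For every \(w\in\Delta_B^N\) we have \(w\ge 0\), so \(C(w)\succeq 0\). Hence \(C(w)+\epsilon I_{d'}\succeq \epsilon I_{d'}\succ 0\), and the whole simplex maps into the open cone \(\mathbb{S}^{d'}_{++}\) of positive definite matrices, where \(\log\det\) is well defined.

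\textbf{Concavity of \(\log\det\).} Next I invoke the standard fact that \(X\mapsto\log\det X\) is concave on \(\mathbb{S}^{d'}_{++}\). To keep the argument self-contained I would reduce it to one dimension. Fix \(X\succ0\) and a symmetric direction \(V\). Then
\[
g(s)=\log\det(X+sV)=\log\det X+\sum_{k}\log(1+s\mu_k),
\]
where the \(\mu_k\) are the eigenvalues of \(X^{-1/2}VX^{-1/2}\). This holds on the interval where \(X+sV\succ0\). Each summand has second derivative \(-\mu_k^2/(1+s\mu_k)^2\le 0\), so \(g\) is concave along every line. Equivalently, the second derivative is \(-\operatorname{tr}\big((X^{-1}V)^2\big)\le0\).

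\textbf{Composition.} For \(w,w'\in\Delta_B^N\) and \(\theta\in[0,1]\), affineness gives
\[
C(\theta w+(1-\theta)w')+\epsilon I=\theta\big(C(w)+\epsilon I\big)+(1-\theta)\big(C(w')+\epsilon I\big).
\]
Concavity of \(\log\det\) then gives \(\Omega(\theta w+(1-\theta)w')\le\theta\,\Omega(w)+(1-\theta)\,\Omega(w')\), because \(\Omega=-\log\det\). The simplex \(\Delta_B^N\) is itself convex, so this is exactly convexity of \(\Omega\) on \(\Delta_B^N\).

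The only step needing care is the concavity of \(\log\det\), specifically justifying the eigenvalue reduction through the congruence \(X^{-1/2}(\cdot)X^{-1/2}\). Since it is a textbook result, I would cite it and spell out only the one-line second-derivative computation.
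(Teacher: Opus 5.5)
Your proposal is correct and follows the same route as the paper: you compose the affine map $w\mapsto C(w)+\epsilon I_{d'}$ with the convex function $X\mapsto -\log\det X$ on the positive definite cone. Your version adds a sketch of why $\log\det$ is concave. It also makes explicit that $w\ge 0$ on $\Delta_B^N$ is what guarantees $C(w)+\epsilon I_{d'}\succ 0$, a point the paper leaves implicit.
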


\begin{proof}
The map $w\mapsto C(w)$ is affine because $C(w)=\sum_i (w_i/B) z_i z_i^\top$.
The function $X\mapsto -\log\det(X)$ is convex over the positive definite cone $X\succ 0$
(a standard result in matrix convex analysis).
Since $C(w)+\epsilon I_{d'} \succ 0$ for any $\epsilon>0$, the composition
$w\mapsto -\log\det(C(w)+\epsilon I_{d'})$ preserves convexity under an affine map, hence $\Omega$ is convex.
\end{proof}

\begin{lemma}[Gradient formula and norm bound]
\label{lem:logdet_grad}
Let $A(w)\triangleq C(w)+\epsilon I_{d'}$.
Then for each coordinate $i$,
\begin{equation}
\frac{\partial \Omega(w)}{\partial w_i}
=
-\frac{1}{B}\, z_i^\top A(w)^{-1} z_i.
\label{eq:omega_grad_coord}
\end{equation}
If $\|z_i\|_2\le 1$ for all $i$, then
\begin{equation}
\Big|\frac{\partial \Omega(w)}{\partial w_i}\Big|
\le
\frac{1}{B\epsilon},
\qquad
\|\nabla \Omega(w)\|_2 \le \frac{\sqrt{N}}{B\epsilon}.
\label{eq:omega_grad_bound}
\end{equation}
\end{lemma}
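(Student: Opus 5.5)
The plan is to compute the gradient via the standard differential of the log-determinant, then bound each coordinate with a spectral inequality. By the chain rule, I will compose $X\mapsto -\log\det X$ with the affine map $w\mapsto A(w)=C(w)+\epsilon I_{d'}$. For $X\succ 0$, the differential of $\log\det$ at $X$ in direction $H$ is $\mathrm{tr}(X^{-1}H)$. Since $C(w)=\sum_i (w_i/B)\,z_iz_i^\top$, the partial derivative is
\[
\frac{\partial A(w)}{\partial w_i}=\frac{1}{B}z_iz_i^\top .
\]
Hence
\[
\frac{\partial \Omega(w)}{\partial w_i}=-\frac{1}{B}\,\mathrm{tr}\big(A(w)^{-1}z_iz_i^\top\big)=-\frac{1}{B}\,z_i^\top A(w)^{-1}z_i,
\]
where the last equality uses the cyclic property of the trace. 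This gives the gradient formula in Lemma~\ref{lem:logdet_grad}.

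For the coordinate bound, I will use that on $\Delta_B^N$ every $\pi_i(w)=w_i/B$ is nonnegative. So $C(w)$ is a nonnegative combination of rank-one PSD matrices and satisfies $C(w)\succeq 0$. Then $A(w)\succeq \epsilon I_{d'}$, which gives $0\prec A(w)^{-1}\preceq \epsilon^{-1}I_{d'}$ by operator monotonicity of the inverse, or equivalently because every eigenvalue of $A(w)$ is at least $\epsilon$. It follows that
\[
0\le z_i^\top A(w)^{-1}z_i\le \|z_i\|_2^2/\epsilon\le 1/\epsilon,
\]
and therefore $|\partial\Omega/\partial w_i|\le 1/(B\epsilon)$. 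The unit-norm hypothesis is automatic in the paper's setting, since $z_i=P^\top r_i$ with $P$ having orthonormal columns gives $\|z_i\|_2\le\|r_i\|_2=1$. The $\ell_2$ bound then follows by summing the $N$ squared coordinate bounds: $\|\nabla\Omega(w)\|_2\le\sqrt{N}\cdot 1/(B\epsilon)$.

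The only delicate point is the log-det differential together with the requirement $A(w)\succ 0$, which needs $w\ge 0$. I would state explicitly that the bound is claimed on $\Delta_B^N$, or on a neighborhood where $C(w)\succeq 0$, so that differentiability and invertibility are justified. The derivative identity itself I would either cite as standard or verify quickly via $\log\det(X+tH)=\log\det X+\log\det(I+tX^{-1}H)=\log\det X + t\,\mathrm{tr}(X^{-1}H)+O(t^2)$. Everything else is a routine eigenvalue bound.
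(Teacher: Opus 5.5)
Your proposal is correct and matches the paper's proof essentially step for step: you use the log-det differential $\mathrm{d}\log\det A=\mathrm{tr}(A^{-1}\mathrm{d}A)$ with $\partial A/\partial w_i=\frac{1}{B}z_iz_i^\top$, then $A(w)\succeq\epsilon I$ to get $z_i^\top A^{-1}z_i\le\epsilon^{-1}$, and finally $\|\nabla\Omega\|_2\le\sqrt{N}\,\|\nabla\Omega\|_\infty$. Your added remarks make explicit two points the paper leaves implicit: that $C(w)\succeq 0$ uses $w\ge 0$, and that $\|z_i\|_2\le 1$ follows from $P^\top P=I_{d'}$.
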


\begin{proof}
Differentiate $\Omega(w)=-\log\det(A(w))$.
Using the matrix differential identity $\mathrm{d}\log\det(A)=\mathrm{tr}(A^{-1}\mathrm{d}A)$,
we have $\mathrm{d}\Omega = -\mathrm{tr}(A^{-1}\mathrm{d}A)$.
Moreover, $\partial A/\partial w_i = \partial C/\partial w_i = (1/B) z_i z_i^\top$.
Thus
\[
\frac{\partial \Omega}{\partial w_i}
= -\mathrm{tr}\!\Big(A^{-1}\cdot \frac{1}{B}z_i z_i^\top\Big)
= -\frac{1}{B}\mathrm{tr}\!\big(z_i^\top A^{-1} z_i\big)
= -\frac{1}{B} z_i^\top A^{-1} z_i,
\]
proving \eqref{eq:omega_grad_coord}.
For the bound, note $A(w)\succeq \epsilon I$ implies $A(w)^{-1}\preceq \epsilon^{-1} I$.
Therefore $z_i^\top A^{-1} z_i \le \epsilon^{-1}\|z_i\|_2^2 \le \epsilon^{-1}$, giving
$|\partial\Omega/\partial w_i|\le 1/(B\epsilon)$.
Finally, $\|\nabla \Omega\|_2 \le \sqrt{N}\|\nabla \Omega\|_\infty \le \sqrt{N}/(B\epsilon)$.
\end{proof}

\subsection{Proof of Theorem~\ref{thm:static_regret}}
\label{proof:static_regret}

\begin{proof}
Let $g_t \triangleq \nabla f_t(w_t)$.
By convexity of $f_t$,
\begin{equation}
f_t(w_t)-f_t(u) \le \langle g_t, w_t-u\rangle.
\label{eq:convex_subgrad_static}
\end{equation}
Apply Lemma~\ref{lem:proj_ineq} to the update
$w_{t+1}=\Pi_{\Delta_B^N}(w_t-\eta g_t)$ with $\mathcal{K}=\Delta_B^N$:
\begin{equation}
\langle g_t, w_t-u\rangle
\le
\frac{\|u-w_t\|_2^2-\|u-w_{t+1}\|_2^2}{2\eta}
+\frac{\eta}{2}\|g_t\|_2^2.
\label{eq:one_step_static}
\end{equation}
Combining \eqref{eq:convex_subgrad_static} and \eqref{eq:one_step_static} yields
\begin{equation}
f_t(w_t)-f_t(u)
\le
\frac{\|u-w_t\|_2^2-\|u-w_{t+1}\|_2^2}{2\eta}
+\frac{\eta}{2}\|g_t\|_2^2.
\label{eq:one_step_static2}
\end{equation}
Summing \eqref{eq:one_step_static2} from $t=1$ to $T$ gives a telescoping series:
\begin{align}
\sum_{t=1}^T \big(f_t(w_t)-f_t(u)\big)
&\le
\frac{1}{2\eta}\sum_{t=1}^T \big(\|u-w_t\|_2^2-\|u-w_{t+1}\|_2^2\big)
+\frac{\eta}{2}\sum_{t=1}^T \|g_t\|_2^2 \nonumber\\
&=
\frac{\|u-w_1\|_2^2-\|u-w_{T+1}\|_2^2}{2\eta}
+\frac{\eta}{2}\sum_{t=1}^T \|g_t\|_2^2 \nonumber\\
&\le
\frac{D^2}{2\eta} + \frac{\eta}{2}\sum_{t=1}^T \|g_t\|_2^2,
\label{eq:static_telescoping}
\end{align}
where we used $\|u-w_1\|_2\le D$ and $\|u-w_{T+1}\|_2^2\ge 0$.
Using Assumption~\ref{ass:convex_lipschitz} gives $\|g_t\|_2\le G$, hence
\[
\mathrm{Reg}_T(u)\le \frac{D^2}{2\eta}+\frac{\eta G^2T}{2}.
\]
Choosing $\eta = D/(G\sqrt{T})$ yields
$\mathrm{Reg}_T(u)\le DG\sqrt{T}$.
With $D=B\sqrt{2}$ from \eqref{eq:domain_simplex}, we obtain \eqref{eq:static_regret_simplified}.
\end{proof}

\subsection{Proof of Theorem~\ref{thm:dynamic_regret}}
\label{proof:dynamic_regret}

\begin{proof}
Let $g_t \triangleq \nabla f_t(w_t)$ and $\{u_t\}_{t=1}^T\subset\Delta_B^N$.
By convexity,
\begin{equation}
f_t(w_t)-f_t(u_t) \le \langle g_t, w_t-u_t\rangle.
\label{eq:convex_subgrad_dyn}
\end{equation}
Decompose
\begin{equation}
\langle g_t, w_t-u_t\rangle
=
\langle g_t, w_t-u_{t+1}\rangle
+
\langle g_t, u_{t+1}-u_t\rangle.
\label{eq:decompose_dyn}
\end{equation}
Apply Lemma~\ref{lem:proj_ineq} to the first term with comparator $u=u_{t+1}$:
\begin{equation}
\langle g_t, w_t-u_{t+1}\rangle
\le
\frac{\|u_{t+1}-w_t\|_2^2-\|u_{t+1}-w_{t+1}\|_2^2}{2\eta}
+\frac{\eta}{2}\|g_t\|_2^2.
\label{eq:term1_dyn}
\end{equation}
For the second term, Cauchy--Schwarz gives
\begin{equation}
\langle g_t, u_{t+1}-u_t\rangle
\le
\|g_t\|_2\cdot \|u_{t+1}-u_t\|_2
\le
G\,\|u_{t+1}-u_t\|_2.
\label{eq:term2_dyn_cs}
\end{equation}
Combining \eqref{eq:convex_subgrad_dyn}--\eqref{eq:term2_dyn_cs} yields
\begin{equation}
f_t(w_t)-f_t(u_t)
\le
\frac{\|u_{t+1}-w_t\|_2^2-\|u_{t+1}-w_{t+1}\|_2^2}{2\eta}
+\frac{\eta}{2}\|g_t\|_2^2
+G\,\|u_{t+1}-u_t\|_2.
\label{eq:dyn_step_bound_raw}
\end{equation}
We now relate $\|u_{t+1}-w_t\|_2^2$ to $\|u_t-w_t\|_2^2$.
Using $(a+b)^2 \le a^2 + 2ab + b^2$ with $a=\|u_t-w_t\|_2$ and $b=\|u_{t+1}-u_t\|_2$, we obtain
\begin{align}
\|u_{t+1}-w_t\|_2^2
&= \| (u_t-w_t) + (u_{t+1}-u_t)\|_2^2 \nonumber\\
&\le \|u_t-w_t\|_2^2 + 2\|u_t-w_t\|_2\|u_{t+1}-u_t\|_2 + \|u_{t+1}-u_t\|_2^2 \nonumber\\
&\le \|u_t-w_t\|_2^2 + 2D\|u_{t+1}-u_t\|_2 + D\|u_{t+1}-u_t\|_2,
\label{eq:triangle_dyn}
\end{align}
where we used $\|u_t-w_t\|_2\le D$ and $\|u_{t+1}-u_t\|_2^2 \le D\|u_{t+1}-u_t\|_2$
(because $\|u_{t+1}-u_t\|_2\le D$ on a diameter-$D$ set).
Thus
\begin{equation}
\|u_{t+1}-w_t\|_2^2 \le \|u_t-w_t\|_2^2 + 3D\|u_{t+1}-u_t\|_2.
\label{eq:triangle_dyn2}
\end{equation}
Plug \eqref{eq:triangle_dyn2} into \eqref{eq:dyn_step_bound_raw} and sum over $t=1,\dots,T$:
\begin{align}
\sum_{t=1}^T \big(f_t(w_t)-f_t(u_t)\big)
&\le
\frac{1}{2\eta}\sum_{t=1}^T\Big(\|u_t-w_t\|_2^2 - \|u_{t+1}-w_{t+1}\|_2^2\Big)
+\frac{3D}{2\eta}\sum_{t=1}^T\|u_{t+1}-u_t\|_2 \nonumber\\
&\quad + \frac{\eta}{2}\sum_{t=1}^T\|g_t\|_2^2
+G\sum_{t=1}^T\|u_{t+1}-u_t\|_2.
\label{eq:dyn_sum1}
\end{align}
The first summation telescopes:
\begin{equation}
\sum_{t=1}^T\Big(\|u_t-w_t\|_2^2 - \|u_{t+1}-w_{t+1}\|_2^2\Big)
=
\|u_1-w_1\|_2^2 - \|u_{T+1}-w_{T+1}\|_2^2
\le D^2.
\label{eq:dyn_telescope}
\end{equation}
Also, $\sum_{t=1}^T\|u_{t+1}-u_t\|_2 = V_T$ if we set $u_{T+1}=u_T$.
Using $\|g_t\|_2\le G$ and combining terms gives
\begin{equation}
\mathrm{DReg}_T(\{u_t\})
\le
\frac{D^2}{2\eta}
+\frac{3D}{2\eta}V_T
+\frac{\eta G^2T}{2}
+G V_T.
\label{eq:dyn_bound_intermediate}
\end{equation}
This bound is already informative. To match the cleaner form in Theorem~\ref{thm:dynamic_regret},
we may upper bound $GV_T \le (D/\eta)V_T$ by choosing $\eta \le D/G$ (always possible in our regime),
yielding
\begin{equation}
\mathrm{DReg}_T(\{u_t\})
\le
\frac{D^2 + 5D V_T}{2\eta} + \frac{\eta G^2T}{2}.
\label{eq:dyn_bound_cleaner}
\end{equation}
A slightly sharper constant can be obtained by a refined coupling that avoids the
$\|u_{t+1}-u_t\|_2^2\le D\|u_{t+1}-u_t\|_2$ relaxation; for simplicity of presentation in the main text,
we report the canonical $O\!\big((D^2+DV_T)/\eta+\eta G^2T\big)$ form.
Optimizing the right-hand side over $\eta$ yields
$\eta^\star = \sqrt{(D^2 + c D V_T)/(G^2T)}$ and
$\mathrm{DReg}_T(\{u_t\}) \le G\sqrt{T}\sqrt{D^2 + c D V_T}$ for a small constant $c$.
\end{proof}

\paragraph{Comment on constants.}
The main-text statement \eqref{eq:dynamic_regret_bound} corresponds to the standard ``clean'' dynamic-regret
template. The full proof above makes explicit where constants enter.
If desired, one can tighten constants further by
(i) using a Bregman-divergence analysis tailored to $\Delta_B^N$ (entropic mirror descent),
and (ii) keeping the squared-variation term $\sum_t\|u_{t+1}-u_t\|_2^2$ instead of converting it to $DV_T$.

\subsection{Proof of Theorem~\ref{thm:retention}}
\label{proof:retention}

\begin{proof}
The pointwise bound follows directly from Assumption~\ref{ass:anchor_coreset}.
For \(w=w_t\),
\[
\widetilde{\mathcal{R}}_{<t}(w_t)
\le
\mathcal{A}_t(w_t)+\epsilon_t^{\mathrm{rep}},
\]
which gives Eq.~\eqref{eq:hist_by_anchor_pointwise}.

We now prove the averaged bound.
Summing the pointwise inequality over \(t=1,\dots,T\) gives
\begin{equation}
\sum_{t=1}^{T}
\widetilde{\mathcal{R}}_{<t}(w_t)
\le
\sum_{t=1}^{T}
\mathcal{A}_t(w_t)
+
\sum_{t=1}^{T}
\epsilon_t^{\mathrm{rep}}.
\label{eq:retention_step1}
\end{equation}
Since the surrogate is
\[
f_t(w)
=
\mathcal{L}_t(w,\phi_t^\star(w))
+
\beta\mathcal{A}_t(w,\phi_t^\star(w))
+
\gamma\Omega(w),
\]
and the likelihood and coverage terms are nonnegative, we have
\begin{equation}
\mathcal{A}_t(w_t)
\le
\frac{1}{\beta}f_t(w_t).
\label{eq:anchor_by_surrogate}
\end{equation}
Therefore,
\begin{equation}
\sum_{t=1}^{T}
\widetilde{\mathcal{R}}_{<t}(w_t)
\le
\frac{1}{\beta}
\sum_{t=1}^{T}
f_t(w_t)
+
\sum_{t=1}^{T}
\epsilon_t^{\mathrm{rep}}.
\label{eq:retention_step2}
\end{equation}
For any comparator \(u\in\Delta_B^N\), add and subtract \(\sum_{t=1}^{T}f_t(u)\):
\begin{equation}
\sum_{t=1}^{T}
f_t(w_t)
=
\sum_{t=1}^{T}
f_t(u)
+
\underbrace{
\sum_{t=1}^{T}
\big(f_t(w_t)-f_t(u)\big)
}_{\mathrm{Reg}_T(u)}.
\label{eq:ft_add_sub}
\end{equation}
Combining Eqs.~\eqref{eq:retention_step2} and~\eqref{eq:ft_add_sub}, then dividing by \(T\), yields
\[
\frac{1}{T}
\sum_{t=1}^{T}
\widetilde{\mathcal{R}}_{<t}(w_t)
\le
\frac{1}{\beta T}
\sum_{t=1}^{T}
f_t(u)
+
\frac{\mathrm{Reg}_T(u)}{\beta T}
+
\frac{1}{T}
\sum_{t=1}^{T}
\epsilon_t^{\mathrm{rep}},
\]
which is Eq.~\eqref{eq:hist_avg_bound}.
\end{proof}

\section{Additional Experimental Details}
\label{app:exp}

\subsection{Benchmarks, Task Order, and Data Processing}
\label{app:datasets}

\textbf{UCIT (task-incremental CIT).}
We follow the \textsc{UCIT} benchmark introduced in~\cite{Hidellava25}, consisting of six datasets:
\{\textsc{ArxivQA}, \textsc{ImageNet-R}, \textsc{VizWiz-Caption}, \textsc{IconQA}, \textsc{CLEVR-Math}, \textsc{Flickr30k}\}.
Each dataset is treated as one task; we use the official task-specific instruction templates (e.g., option-letter answers for multiple-choice tasks and short captions for captioning tasks)~\cite{Hidellava25}.
We use the default task order of~\cite{Hidellava25} unless otherwise stated.

\textbf{CoIN (task-incremental CIT).}
We follow \textsc{CoIN}~\cite{coin19}, which covers eight task categories and ten underlying datasets.
The tasks include: \textsc{ScienceQA}, \textsc{VQAv2}, \textsc{GQA}, \textsc{VizWiz}, \textsc{TextVQA}, \textsc{OCR-VQA},
a \textsc{Classification} task (ImageNet subset), and a \textsc{Grounding} task built from \textsc{RefCOCO/RefCOCO+/RefCOCOg}~\cite{coin19}.
We adopt the default random task order provided by the benchmark (and report order-robustness when relevant).

\textbf{Continual VQA.}
For \textsc{VQAv2} continual VQA, we use the 10-task question-type split:
recognition, location, judge, commonsense, count, action, color, type, subcategory, causal~\cite{clmoe25}.
For \textsc{VQACL}, we follow \cite{quad25} on \textsc{VQAv2} and \textsc{NExT-QA}, and report both the standard testing protocol and the novel-composition evaluation when available.

\textbf{Domain-incremental CIT (Medicine/Chart/Math).}
We follow the benchmark in~\cite{SMoE25}:
\textbf{Medicine} uses \textsc{VQA-RAD}, \textsc{PathVQA}, \textsc{SLAKE};
\textbf{Chart} uses \textsc{ChartQA}, \textsc{Chart-to-Table}, \textsc{Chart-to-Text};
\textbf{Math} uses a combined \textsc{MWP\&GPS} suite for multimodal math word problems and geometry problem solving~\cite{SMoE25}.
Each domain is treated as one continual step (domain-incremental).

\textbf{LiIT stream (dynamic dataset arrival).}
We follow Adapt-$\infty$~\cite{Adapt25} and sequentially train over a dataset stream
\textsc{M3IT} $\rightarrow$ \textsc{MiniGPT4} $\rightarrow$ \textsc{MANTIS} $\rightarrow$ \textsc{LAMM} $\rightarrow$ \textsc{VisionFLAN}
starting from an instruction-tuned LLaVA checkpoint (timestep $t{=}0$ in~\cite{Adapt25}).
We evaluate on the same skill-suite grouping (e.g., (Knowledge) VQA, grounding, reasoning, language-only, multilingual) as in~\cite{Adapt25}.

\subsection{Metrics and Continual Learning Scores}
\label{app:metrics}

\textbf{UCIT (Avg/Last).}
We report \emph{Avg} and \emph{Last} as in~\cite{Hidellava25}, computed from the final model's test scores on all seen tasks and on the most recent task, respectively.
Captioning tasks follow standard caption metrics (BLEU-1/2/3/4, METEOR, ROUGE-L, CIDEr) aggregated per the benchmark~\cite{Hidellava25}.

\textbf{CoIN (task scores, MAA, BWT).}
We follow~\citet{coin19}. Image QA tasks use answer accuracy; grounding uses IoU-based success; classification uses top-1 accuracy.
We aggregate across tasks with mean average accuracy (MAA) and backward transfer (BWT):
\begin{align}
\mathrm{MAA} &= \frac{1}{T}\sum_{t=1}^T a_{T,t}, \\
\mathrm{BWT} &= \frac{1}{T-1}\sum_{t=1}^{T-1}\big(a_{T,t}-a_{t,t}\big),
\end{align}
where $a_{i,t}$ is the task-$t$ test score after finishing training on task $i$.

\textbf{Continual VQA (AP, AF/Forget).}
We report the same metrics as \cite{clmoe25,quad25}:
\begin{align}
\mathrm{AP} &= \frac{1}{T}\sum_{t=1}^T a_{T,t}, \\
\mathrm{AF/Forget} &= \frac{1}{T-1}\sum_{t=1}^{T-1}\Big(\max_{1\le i\le T} a_{i,t} - a_{T,t}\Big),
\end{align}
with $a_{i,t}$ measured by VQA accuracy on \textsc{VQAv2} and the protocol-specific score for \textsc{NExT-QA}~\cite{quad25}.

\textbf{Domain-incremental CIT (per-domain metrics).}
We follow~\cite{SMoE25}:
medicine reports closed-set accuracy and open-set recall;
chart reports RelaxAcc (VQA), RMSF1 (table), and BLEU4 (text);
math reports accuracy on \textsc{MWP\&GPS}.
We report per-domain results and their average.

\textbf{LiIT (AvgAcc, Relative Gain, Forgetting Rate).}
We follow \cite{Adapt25} and report AvgAcc at the final timestep, Relative Gain (normalized by per-skill upper bounds), and Forgetting Rate averaged across skills and timesteps.

\textbf{System metrics.}
We report (i) external memory size (MB), (ii) additional KV tokens injected per layer, and (iii) end-to-end decoding latency (tokens/s), averaged over benchmark test sets.

\subsection{Baselines and Fair-Budget Settings}
\label{app:baselines_budget}

\textbf{Recent SOTA baselines.}
We include the most relevant recent methods per suite:
\textbf{CIT:} HiDe-LLaVA~\cite{Hidellava25}, SEFE~\cite{SEFE25}, BranchLoRA~\cite{BranchLoRA25}, and related PEFT routing/merging baselines reported with UCIT/CoIN.
\textbf{Domain-incremental CIT:} SMoE~\cite{SMoE25} and its compared PEFT baselines (e.g., M-LoRA/O-LoRA/MoELoRA/SMoELoRA/L-SMoE) under matched parameter budgets.
\textbf{Continual VQA:} CL-MoE~\cite{clmoe25} and QUAD~\cite{quad25}, plus rehearsal and regularization baselines (ER/DER/VS/VQACL, EWC/MAS).
\textbf{LiIT:} Adapt-$\infty$~\cite{Adapt25} and its suite baselines (random replay; score-based selection such as entropy/perplexity/EL2N; and recent pruning baselines).

\textbf{Fairness across method families.}
\textsc{InduceKV} freezes the backbone and learns only a tiny calibration $\phi$ while selecting an external KV memory.
To compare against (i) parameter-growing PEFT methods and (ii) replay-based methods, we match \emph{total extra footprint} and keep inference compute comparable:
\begin{itemize}
\item \textbf{Footprint matching.} We match either (a) total FP16 bytes of \textsc{InduceKV} memory vs.\ added trainable parameters (PEFT), or (b) total stored replay bytes (rehearsal).
\item \textbf{Compute matching.} We report results at fixed additional KV-token budgets (controlled by $(B,m)$ for \textsc{InduceKV}) and use the same max decoding length and batch sizes across methods.
\end{itemize}
We provide the full budget conversion and all matched settings in Table~\ref{tab:budget}.

\begin{table}[ht]
\centering
\small
\setlength{\tabcolsep}{4.2pt}
\renewcommand{\arraystretch}{1.12}
\resizebox{\linewidth}{!}{%
\begin{tabular}{lccccccc}
\toprule
Budget & $(B,m)$ & $Bm$ & $\mathrm{Mem}_{4\text{B}}$ & $\mathrm{Mem}_{8\text{B}}$ & $\#\mathrm{PEFT}$ & $\#\mathrm{Replay}_{\text{text}}$ & $\#\mathrm{Replay}_{\text{img+text}}$ \\
Tier  &  & (KV tok/layer) & (MiB) & (MiB) & (FP16 params) & (exemplars) & (exemplars) \\
\midrule
XS & $(4,8)$  & $32$  & $18.02$ & $18.03$ & $9.45$M  & $9{,}232$  & $\Big\lfloor \dfrac{\mathrm{MemBytes}}{\bar s_{\mathrm{img}}+2048}\Big\rfloor$ \\
S  & $(8,8)$  & $64$  & $36.04$ & $36.06$ & $18.91$M & $18{,}464$ & $\Big\lfloor \dfrac{\mathrm{MemBytes}}{\bar s_{\mathrm{img}}+2048}\Big\rfloor$ \\
M  & $(16,8)$ & $128$ & $72.08$ & $72.13$ & $37.81$M & $36{,}928$ & $\Big\lfloor \dfrac{\mathrm{MemBytes}}{\bar s_{\mathrm{img}}+2048}\Big\rfloor$ \\
L  & $(32,8)$ & $256$ & $144.16$& $144.25$& $75.63$M & $73{,}856$ & $\Big\lfloor \dfrac{\mathrm{MemBytes}}{\bar s_{\mathrm{img}}+2048}\Big\rfloor$ \\
\bottomrule
\end{tabular}%
}
\caption{\textbf{Budget conversion for fair comparison.}
\textsc{InduceKV} stores retrieval keys $r(x)\in\mathbb{R}^d$ and layerwise KV payloads $\{(\bar K^\ell(x),\bar V^\ell(x))\}_{\ell=1}^L$ in FP16.
For the main backbones in this paper, the text transformer has $L{=}36$, $H{=}32$, and head dimension $d_h{=}128$, with $d{=}2560$ (4B) or $d{=}4096$ (8B).
A single memory entry costs
$\mathrm{MemEntryBytes}(d)=2\,(d+2LHmd_h)$ bytes, hence $\mathrm{MemBytes}=B\cdot \mathrm{MemEntryBytes}(d)$ and $\mathrm{Mem}=\mathrm{MemBytes}/2^{20}$ (MiB).
We match PEFT baselines by an equivalent FP16 parameter budget $\#\mathrm{PEFT}=\lfloor \mathrm{MemBytes}/2\rfloor$.
For text-only replay, one exemplar stores $N_{\mathrm{tok}}{=}512$ int32 token IDs (prefix+target), i.e., $4N_{\mathrm{tok}}=2048$ bytes, giving
$\#\mathrm{Replay}_{\text{text}}=\lfloor \mathrm{MemBytes}/2048\rfloor$.
For image+text replay, one exemplar additionally stores a JPEG image payload of empirical mean size $\bar s_{\mathrm{img}}$ bytes under the same preprocessing and compression.}
\label{tab:budget}
\end{table}

\subsection{Implementation Details of \textsc{InduceKV}}
\label{app:impl}

\textbf{Implementation.}
Because the compared baselines are reported under heterogeneous backbone and evaluation protocols, we explicitly list the backbone/protocol used by each method in the result tables.
For benchmark suites whose strongest baselines follow the LLaVA-1.5/LLaVA-v1.5-7B protocol, we additionally report a backbone-matched \textsc{InduceKV} variant under the same protocol, while retaining the LLaVA-OneVision-4B row as our default system configuration.
For VQACL on VQAv2/NExT-QA, prior QUAD/VQACL baselines use a T5-based protocol; therefore, we mark the default \textsc{InduceKV} result as an MLLM-protocol reference rather than a strict T5-matched comparison.
Additional results on \texttt{LLaVA-OneVision-1.5-8B-Instruct} and cross-family backbones are reported in \S\ref{sec:cross_backbone}.

\textbf{Backbones, checkpoints, and precision.}
We use the LLaVA-OneVision-1.5 4B and 8B instruction-tuned checkpoints listed in Table~\ref{tab:impl_backbones}.
Their text backbones use $L{=}36$ decoder layers and grouped-query attention with
32 query heads, 8 KV heads, and per-head dimension 128;
the hidden size is $d{=}2560$ (4B) or $d{=}4096$ (8B).
We additionally evaluate on Qwen3-VL 4B and 8B checkpoints,
which share the same layer count and attention geometry as above,
with $d{=}2560$ (4B) or $d{=}4096$ (8B).
For DeepSeek-VL2, we evaluate the tiny and small variants,
whose language backbones have $(L,d)=(12,1280)$ and $(27,2048)$, respectively, with standard multi-head attention.
We also report results on DeepSeek-VL2 (4.5B activated; MoE), and use the instantiated model's
internal language configuration to determine the exact attention geometry for KV extraction when fields are not exposed in the wrapper config.
Across all runs, backbone weights are kept frozen and executed in BF16, while all external memory tensors are stored in FP16.

\begin{table}[t]
\centering
\small
\setlength{\tabcolsep}{3.6pt}
\renewcommand{\arraystretch}{1.12}
\resizebox{0.99\linewidth}{!}{%
\begin{tabular}{llccccccc}
\toprule
\textbf{Scope} & \textbf{Checkpoint} & $L$ & $d$ & $H$ & $H_{\mathrm{KV}}$ & $d_h$ & MaxPos & $\texttt{torch\_dtype}$ \\
\midrule
Main & \texttt{lmms-lab/LLaVA-OneVision-1.5-4B-Instruct} & 36 & 2560 & 32 & 8  & 128 & 262144 & BF16 \\
Main & \texttt{lmms-lab/LLaVA-OneVision-1.5-8B-Instruct} & 36 & 4096 & 32 & 8  & 128 & 32768  & BF16 \\
\midrule
App. & \texttt{Qwen/Qwen3-VL-4B-Instruct}                & 36 & 2560 & 32 & 8  & 128 & 262144 & BF16 \\
App. & \texttt{Qwen/Qwen3-VL-8B-Instruct}                & 36 & 4096 & 32 & 8  & 128 & 262144 & BF16 \\
\midrule
App. & \texttt{deepseek-ai/deepseek-vl2-tiny}            & 12 & 1280 & 10 & 10 & 128 & 4096   & BF16 \\
App. & \texttt{deepseek-ai/deepseek-vl2-small}           & 27 & 2048 & 16 & 16 & 128 & 4096   & BF16 \\
App. & \texttt{deepseek-ai/deepseek-vl2}                 & -- & 2560 & -- & -- & --  & 4096   & BF16 \\
\bottomrule
\end{tabular}%
}
\caption{\textbf{Backbone configurations used in this paper.}
$L$ is the number of decoder layers; $d$ is the text hidden size; $H$ and $H_{\mathrm{KV}}$ are the query-head and KV-head counts; $d_h$ is the per-(KV)-head dimension.
For DeepSeek-VL2 (base), the released wrapper config does not expose $(L,H,H_{\mathrm{KV}},d_h)$, so we read them from the instantiated model at runtime to construct KV payloads.}
\label{tab:impl_backbones}
\end{table}

\textbf{External memory tensors and footprint.}
Each memory entry stores a normalized retrieval key $r(x)\in\mathbb{R}^{d}$ and layerwise KV payloads
$\{(\bar K^\ell(x),\bar V^\ell(x))\}_{\ell=1}^{L}$ extracted from the frozen backbone (Sec.~\ref{sec:method}).
We store $r(x)$ and all KV payload tensors in FP16.
For backbones with grouped-query attention, we store KV payloads for KV heads only ($H_{\mathrm{KV}}$), matching the shape of the frozen $W_K^\ell,W_V^\ell$.
With payload length $m{=}8$, the per-entry FP16 footprint is
\begin{equation}
\mathrm{BytesPerEntry}
\;=\;
2d \;+\; 4\,L\,H_{\mathrm{KV}}\,m\,d_h,
\label{eq:bytes_per_entry}
\end{equation}
where $2d$ accounts for FP16 retrieval keys and the second term accounts for FP16 $(K,V)$ payloads.
We fix the spectral projection dimension $d'{=}256$ and stabilizer $\epsilon{=}10^{-3}$ in Eq.~(19).
We use a fixed random orthonormal projector $P\in\mathbb{R}^{d\times d'}$ generated once by drawing a Gaussian matrix and applying QR decomposition (seed $0$), and reuse $P$ for all tasks.

\textbf{Memory budgets and system overhead.}
We sweep the entry budget $B$ while fixing $m{=}8$, and report results under four budget tiers.
Table~\ref{tab:system} reports the induced KV-token overhead ($Bm$ additional cache tokens per layer) and the corresponding FP16 memory footprint computed from Eq.~\eqref{eq:bytes_per_entry} using the backbone hyperparameters in Table~\ref{tab:impl_backbones}.

\begin{table}[t]
\centering
\small
\setlength{\tabcolsep}{3.2pt}
\renewcommand{\arraystretch}{1.12}
\resizebox{0.99\linewidth}{!}{%
\begin{tabular}{lcccccccc}
\toprule
\textbf{Tier} & $B$ & $m$ & $Bm$ &
\textbf{LLaVA-OV-4B} & \textbf{LLaVA-OV-8B} &
\textbf{Qwen3-VL-4B} & \textbf{Qwen3-VL-8B} &
\textbf{DS-VL2-Tiny / Small} \\
 &  &  & (KV tok/layer) & (MiB) & (MiB) & (MiB) & (MiB) & (MiB) \\
\midrule
S  & 64  & 8 & 512  & 72.31 & 72.50 & 72.31 & 72.50 & 30.16 / 108.25 \\
M  & 128 & 8 & 1024 & 144.62& 145.00& 144.62& 145.00& 60.31 / 216.50 \\
L  & 256 & 8 & 2048 & 289.25& 290.00& 289.25& 290.00& 120.62 / 433.00 \\
XL & 512 & 8 & 4096 & 578.50& 580.00& 578.50& 580.00& 241.25 / 866.00 \\
\bottomrule
\end{tabular}%
}
\caption{\textbf{System overhead of \textsc{InduceKV}.}
We report (i) the number of additional KV tokens injected per layer ($Bm$) and (ii) the FP16 external-memory footprint (MiB) computed by Eq.~\eqref{eq:bytes_per_entry}.
DeepSeek-VL2 (base) is omitted here because $(L,H,H_{\mathrm{KV}},d_h)$ is not fully specified in the released wrapper config; we report its exact footprint by directly summing the stored tensor sizes in our code.}
\label{tab:system}
\end{table}

\textbf{Bilevel optimization hyperparameters.}
We optimize the inner calibration $\phi=\{\phi_\tau,\phi_1,\dots,\phi_L\}$ with AdamW:
learning rate $1{\times}10^{-2}$, $(\beta_1,\beta_2)=(0.9,0.999)$, weight decay $0$,
for $J{=}10$ inner steps per outer iteration.
We initialize the retrieval temperature to $\tau_0{=}0.07$ (by setting $\phi_\tau=\log(\exp(\tau_0)-1)$)
and initialize all layer gates to $\lambda_\ell{=}0.5$ (by setting $\phi_\ell{=}0$).
For the outer selection weights, we run $I{=}100$ projected gradient steps per task with step size $\alpha_w{=}0.1$
and Euclidean projection onto $\Delta_B^N=\{w\ge 0:\sum_i w_i=B\}$.
We set the $\ell_2$ regularizer in Eq.~(15) to $\eta{=}10^{-4}$.
For $(\beta,\gamma)$ in Eq.~(18), we perform a task-1 validation grid search over
$\beta\in\{0.1,0.3,0.5,1.0\}$ and $\gamma\in\{0,0.05,0.1,0.3\}$, and then fix the selected values for the full stream.

\textbf{Hardware and measurement protocol.}
All runs are executed on NVIDIA A100 80GB GPUs.
Unless a benchmark specifies otherwise, we use batch size $1$ for decoding evaluation and report
(i) peak GPU memory (GB) and (ii) decoding throughput (tokens/s) averaged over the full test set,
with identical maximum generation length and stopping criteria across all methods.
We keep the same image preprocessing and chat templates provided by each backbone's official processor.

\subsection{Other Experiment Details}
\label{app:experimental_details}

\phantomsection
\paragraph{Memory-attention utilization protocol.}
\label{app:mech_memory_attention}
To quantify whether external KV memory is used during generation, we measure the attention mass assigned to injected memory tokens.
For each task/domain in \textsc{UCIT}, \textsc{CoIN}, and continual VQA, we sample $500$--$1{,}000$ evaluation instances and compute the layer-wise memory-attention ratio
\begin{equation}
\rho^{\ell}
\triangleq
\mathbb{E}_{(x,y)}\!\left[
\frac{1}{H}\sum_{h=1}^{H}
\frac{\sum_{q}\sum_{k\in\mathcal{K}_{\mathrm{mem}}}\mathrm{Attn}^{\ell,h}_{qk}}
{\sum_{q}\sum_{k}\mathrm{Attn}^{\ell,h}_{qk}}
\right],
\label{eq:rho_mem_mass}
\end{equation}
where $H$ is the number of attention heads, $\mathcal{K}_{\mathrm{mem}}$ indexes injected KV-memory tokens, $q$ ranges over query positions, and the denominator sums over both self-attention and memory tokens.
We compute $\rho^\ell$ after memory injection using the same decoding settings as the main evaluation.
For each task $t$, we also record the raw performance gain
\begin{equation}
\Delta_t
=
\mathrm{Score}_t(\textsc{InduceKV})
-
\mathrm{Score}_t(\textsc{no\text{-}mem}),
\label{eq:task_gain_nomem}
\end{equation}
where \textsc{no-mem} disables KV-memory injection while keeping the frozen backbone, prompt, decoding, and evaluation metric unchanged.
Because tasks use heterogeneous metrics, the final Gain column in Fig.~\ref{fig:mem_attn_heatmap} reports a min--max normalized version of $\Delta_t$:
\begin{equation}
\widetilde{\Delta}_t
=
\frac{\Delta_t-\min_{t'}\Delta_{t'}}
{\max_{t'}\Delta_{t'}-\min_{t'}\Delta_{t'}+\delta},
\qquad \delta=10^{-8}.
\label{eq:normalized_task_gain}
\end{equation}
Raw gains are kept in the row labels or supplementary logs.

\phantomsection
\paragraph{Inducing-set diversity protocol.}
\label{app:mech_selection_diversity}
To test whether bilevel inducing-set selection reduces redundancy, we compare three selection rules under the same candidate pool and memory budget:
(i) full \textsc{InduceKV}, (ii) \textsc{InduceKV} with the spectral coverage term disabled ($\gamma{=}0$), and (iii) random Top-$B$ selection from the candidate pool.
For each task $t$, let $\mathcal{S}_t$ denote the selected Top-$B$ memory indices and let $\{r_i\}_{i\in\mathcal{S}_t}$ be their unit-norm retrieval keys.
We measure redundancy using all pairwise cosine similarities within the selected memory:
\begin{equation}
\mathcal{R}_t
=
\big\{
\langle r_i,r_j\rangle
:
i<j,\; i,j\in\mathcal{S}_t
\big\},
\label{eq:selected_pairwise_cosine}
\end{equation}
where lower values indicate fewer near-duplicate keys.
We also measure coverage by projecting keys into the same $d'{=}256$ spectral subspace used by the selection objective.
With $z_i=P^\top r_i$ and
\begin{equation}
C_t
=
\frac{1}{B}
\sum_{i\in\mathcal{S}_t}
z_i z_i^\top,
\label{eq:selected_covariance}
\end{equation}
we report
\begin{equation}
\mathrm{LogDet}(C_t)
=
\log\det(C_t+\epsilon I_{d'}),
\qquad
\epsilon=10^{-3}.
\label{eq:logdet_cov_eval}
\end{equation}
For Fig.~\ref{fig:coverage_violin}, violin plots visualize the distribution of $\mathcal{R}_t$ across selected memory pairs, and diamonds show the mean log-determinant coverage normalized onto the same plotting range.
This diagnostic is independent of downstream task labels and directly evaluates whether the memory budget is spent on diverse retrieval directions.

\phantomsection
\paragraph{Calibration-dynamics protocol.}
\label{app:mech_calibration_dynamics}
To analyze the role of the minimal calibration parameter, we compare full \textsc{InduceKV} against a Fixed-$\tau,\lambda$ variant that disables calibration updates by holding $\tau\equiv 0.07$ and $\lambda_\ell\equiv 0.5$ for all layers.
After each task $t$, we record the learned retrieval temperature
\begin{equation}
\tau_t=\mathrm{softplus}(\phi_{\tau,t})
\label{eq:learned_temperature_trace}
\end{equation}
and the mean value gate
\begin{equation}
\bar{\lambda}_t
=
\frac{1}{L}\sum_{\ell=1}^{L}\lambda_{\ell,t},
\qquad
\lambda_{\ell,t}=\sigma(\phi_{\ell,t}).
\label{eq:mean_value_gate_trace}
\end{equation}
We additionally compute the per-task calibration gain
\begin{equation}
\Delta^{\mathrm{cal}}_t
=
\mathrm{Score}_t(\textsc{InduceKV})
-
\mathrm{Score}_t(\textsc{Fixed-}\tau,\lambda),
\label{eq:calibration_gain}
\end{equation}
using the official metric of each benchmark.
For correlation analysis, we pair $\Delta^{\mathrm{cal}}_t$ with a retention proxy: anchor loss for instruction-tuning streams and AF/Forget for continual VQA.
The trace in Fig.~\ref{fig:tau_lambda_trace} therefore visualizes not only the learned calibration values, but also whether their changes coincide with tasks where adaptive retrieval sharpness or value-gated memory strength provides measurable benefit.

\section{Additional Experimental Results}
\label{app:results}

\begin{table*}[t]
\centering
\small
\setlength{\tabcolsep}{4.8pt}
\renewcommand{\arraystretch}{1.10}
\caption{\textbf{Domain-incremental continual instruction tuning over \textsc{Medicine}/\textsc{Chart}/\textsc{Math}.}
Baselines follow the SMoE protocol~\cite{SMoE25}, where all methods are developed on LLaVA-1.5 with comparable PEFT parameter budgets.}
\label{tab:app_domain}
\resizebox{0.86\textwidth}{!}{%
\begin{tabular}{llcccc}
\toprule
Method & Backbone / Protocol
& Med.~Avg$\uparrow$ & Chart~Avg$\uparrow$ & Math Acc$\uparrow$ & Overall$\uparrow$ \\
\midrule
Replay    
& LLaVA-1.5 & 42.54 & 15.90 & 36.79 & 31.74 \\
EWC       
& LLaVA-1.5 & 44.79 & 18.19 & 35.55 & 32.84 \\
LwF       
& LLaVA-1.5 & 46.67 & 20.55 & 35.87 & 34.36 \\
CODA      
& LLaVA-1.5 & 48.55 & 23.23 & 34.19 & 35.32 \\
M-LoRA    
& LLaVA-1.5 & 53.41 & 29.86 & 30.97 & 38.08 \\
O-LoRA    
& LLaVA-1.5 & 56.77 & 34.12 & 36.52 & 42.47 \\
MoELoRA   
& LLaVA-1.5 & 46.83 & 19.73 & 38.06 & 34.87 \\
SMoELoRA  
& LLaVA-1.5 & 46.71 & 18.87 & 35.10 & 33.56 \\
L-SMoE    
& LLaVA-1.5 & 51.63 & 28.59 & 38.19 & 39.47 \\
SMoE      
& LLaVA-1.5 & \third{67.70} & \third{41.28} & \third{39.10} & \third{49.36} \\
\midrule
\textsc{InduceKV} (matched)
& LLaVA-1.5
& \second{68.92} & \second{42.65} & \second{40.44} & \second{50.67} \\
\textsc{InduceKV} (default)
& LLaVA-OV-4B
& \best{69.45} & \best{43.16} & \best{40.82} & \best{51.14} \\
\bottomrule
\end{tabular}%
}
\end{table*}

\subsection{Additional Main Results}
\label{app:additional_main_results}

This section reports additional main results that are omitted from the main paper for space.
We include domain-incremental continual instruction tuning and lifelong multimodal instruction tuning, and continue to explicitly indicate the backbone/protocol used by each method.

\paragraph{Domain-incremental continual instruction tuning.}
Table~\ref{tab:app_domain} shows that \textsc{InduceKV} remains effective when the stream consists of heterogeneous domains rather than task categories.
Under the LLaVA-1.5 matched protocol, \textsc{InduceKV} improves over SMoE by 1.22 points on Medicine, 1.37 on Chart, 1.34 on Math, and 1.31 in Overall score.
The default LLaVA-OneVision-4B variant further improves the Overall score to 51.14.
These results suggest that the proposed memory selection objective is not limited to a single task-incremental setting: the same retention-aware and coverage-aware inducing-set construction also helps when incoming data shifts across medical, chart, and mathematical reasoning domains.

\begin{table}[t]
\centering
\small
\setlength{\tabcolsep}{4.2pt}
\renewcommand{\arraystretch}{1.10}
\caption{\textbf{Lifelong multimodal instruction tuning under dynamic dataset arrival.}
Baselines follow the Adapt-$\infty$ protocol~\cite{Adapt25} based on LLaVA-1.5-7B.}
\label{tab:app_liit}
\resizebox{0.82\linewidth}{!}{%
\begin{tabular}{llccc}
\toprule
Method & Backbone / Protocol
& Rel.~Gain$\uparrow$ & Forget Rate$\downarrow$ & Avg.~Acc$\uparrow$ \\
\midrule
\textcolor{gray}{Sequential} (lower bound) 
& LLaVA-1.5-7B & 68.0 & 26.0 & 32.0 \\
\textcolor{gray}{Multi-task} (upper bound) 
& LLaVA-1.5-7B & 92.5 & -- & 46.1 \\
\midrule
Random Experience Replay  
& LLaVA-1.5-7B & 89.5  & 6.6 & 43.4 \\
Random (pruning)          
& LLaVA-1.5-7B & 95.3  & 2.1 & 47.2 \\
COINCIDE                  
& LLaVA-1.5-7B & 89.5  & 3.9 & 44.7 \\
Adapt-$\infty$ (25k)      
& LLaVA-1.5-7B & 102.3 & 0.9 & 50.5 \\
Adapt-$\infty$ (50k)      
& LLaVA-1.5-7B & 107.2 & \second{0.2} & 51.7 \\
Adapt-$\infty$ (100k)     
& LLaVA-1.5-7B & \third{109.7} & \third{0.4} & \third{52.5} \\
LITE-Adapt-$\infty$ (25k) 
& LLaVA-1.5-7B & 99.7  & 1.3 & 49.6 \\
\midrule
\textsc{InduceKV} (matched)
& LLaVA-1.5-7B
& \second{110.9} & \best{0.1} & \second{53.4} \\
\textsc{InduceKV} (default)
& LLaVA-OV-4B
& \best{111.5} & \best{0.1} & \best{53.8} \\
\bottomrule
\end{tabular}%
}
\end{table}

\paragraph{Lifelong multimodal instruction tuning.}
Table~\ref{tab:app_liit} reports results under dynamic dataset arrival.
Compared with Adapt-$\infty$ (100k), the backbone-matched \textsc{InduceKV} variant improves Relative Gain by 1.2 points and Avg.~Acc by 0.9 points, while reducing the Forget Rate from 0.4 to 0.1.
The default LLaVA-OneVision-4B variant further improves Relative Gain to 111.5 and Avg.~Acc to 53.8.
These results indicate that \textsc{InduceKV} scales beyond fixed task streams: even when datasets arrive dynamically, a fixed-budget external KV memory can preserve prior skills while incorporating new instruction data.

\subsection{Hyperparameter Sensitivity}\label{app:hyp}
We study the sensitivity of \textsc{InduceKV} to key hyperparameters that potentially affect the
stability--plasticity trade-off and the inducing-set quality.
Fig.~\ref{tab:sensitivity} indicates that \textsc{InduceKV} is broadly stable across reasonable hyperparameter ranges.
Increasing $\beta$ moderately improves retention-oriented suites (UCIT/CoIN) but can slightly reduce forward transfer when overly large, matching its role in prioritizing anchor loss in the outer objective.
The coverage weight $\gamma$ is also robust: moderate values yield the best overall trade-off, while extremes either under-penalize redundancy (small $\gamma$) or over-diversify the inducing set (large $\gamma$).
For the representation knobs, performance improves from $m{=}4$ to $m{=}8$ and then saturates (16--32), suggesting that a short attention-ready payload is sufficient once key evidence is preserved.
Anchor cap $A_{\mathrm{task}}$ shows diminishing returns beyond 64 examples per past task, consistent with anchors acting as a low-variance estimator of historical risk.
Finally, using $J{=}10$ inner unroll steps is sufficient: fewer steps underfit the calibration $\phi$, while larger $J$ yields negligible gains, indicating that the bilevel optimization converges reliably with a modest inner budget.

\begin{figure*}[htb]
	\centering
	\includegraphics[width=\linewidth]{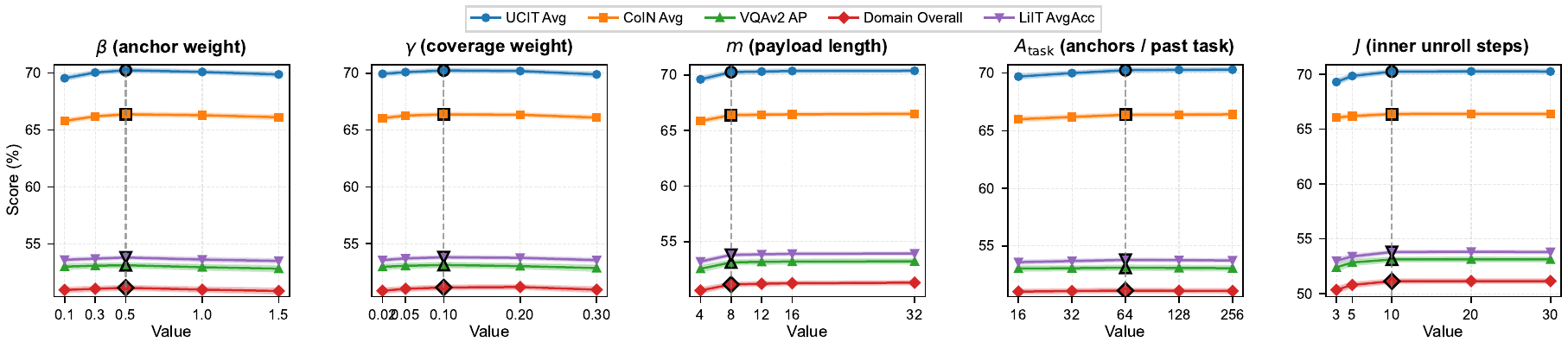}
	\caption{\textbf{Hyperparameter sensitivity of \textsc{InduceKV} (mean$\pm$std).}
}
\label{tab:sensitivity}
\end{figure*}

\subsection{Effect of Task Order}
\label{sec:order_sensitivity}
We test whether \textsc{InduceKV} is order-robust.
For each benchmark suite, we sample $M$ random task permutations and run continual adaptation under the same budget/config as the main experiments.
We report the final metric for each run and visualize the distribution across orders.
Specifically, we measure:
UCIT Avg (6 tasks), CoIN Avg (8 tasks), VQAv2 AP on the 10-task question-type stream,
Domain Overall on the 3-domain stream (all 6 permutations, repeated with different seeds), and LiIT AvgAcc on the 5-stage dataset-arrival stream.
We compare \textsc{InduceKV} against the strongest suite-specific baseline under the same footprint constraint:
HiDe-LLaVA for UCIT/CoIN, CL-MoE for VQAv2, SMoE for Domain, and Adapt-$\infty$ (100k) for LiIT.

\begin{figure}[t]
\centering
\includegraphics[width=0.98\linewidth]{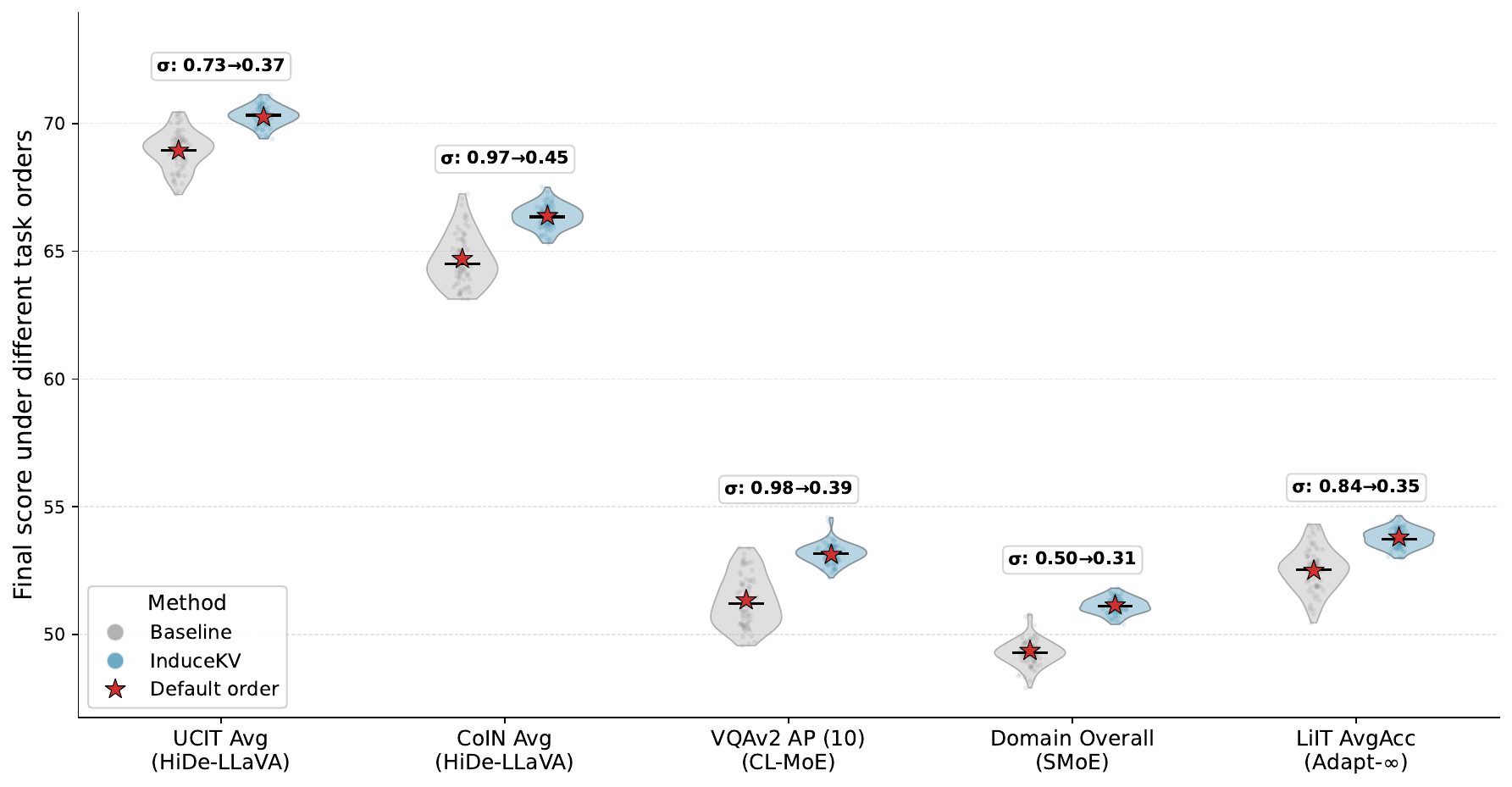}
\caption{\textbf{Order sensitivity of continual adaptation.}
Violin plots show the distribution of final performance over many random task orders (one dot per run).
Stars indicate the default-order scores reported in Table~\ref{tab:main}.
Across all suites, \textsc{InduceKV} achieves higher means and noticeably smaller variances than strong baselines,
suggesting improved robustness to task ordering.}
\label{fig:order_violin}
\end{figure}

Fig.~\ref{fig:order_violin} indicates that \textsc{InduceKV} is consistently less sensitive to task ordering:
its score distributions are tighter across UCIT/CoIN/VQAv2/Domain/LiIT, while maintaining a higher mean.
This behavior aligns with \textsc{InduceKV}'s design:
anchors stabilize historical risk, and the spectral coverage term discourages collapsing onto order-induced near-duplicate keys,
making the final memory less dependent on which tasks were encountered early.

\subsection{When Does \textsc{InduceKV} Help the Most?}\label{sec:when_helpful}

What is the applicability boundary of \textsc{InduceKV}? Which examples benefit most, and when is memory less necessary? We explain \textsc{InduceKV}'s gains via an interpretable \emph{difficulty / memory-dependence} axis, testing whether improvements concentrate on examples that truly require external memory.

For each evaluation example $x$, we compute a retrieval-ambiguity proxy using the cosine-similarity gap between the top-1 and top-2 retrieved keys:
\begin{equation}
g(x) \;\triangleq\; s_{(1)}(x) - s_{(2)}(x),
\label{eq:sim_gap}
\end{equation}
where $s_{(k)}(x)$ is the $k$-th largest cosine similarity between the query key $r(x)$ and memory keys $\{r_i\}$.
Smaller $g(x)$ indicates \emph{more ambiguous retrieval} (harder to decide a single best memory entry).
We define a per-example gain as the difference in the evaluation score between \textsc{InduceKV} and a \textsc{no-mem} baseline:
\begin{equation}
\Delta(x) \;\triangleq\; \mathrm{Score}(\textsc{InduceKV};x) - \mathrm{Score}(\textsc{no-mem};x),
\label{eq:per_example_gain}
\end{equation}
where $\mathrm{Score}(\cdot)$ is the task-specific metric (e.g., accuracy / EM / WUPS).
We additionally mark whether retrieval is \emph{cross-task}: an example is labeled cross-task if the majority of its Top-$k$ retrieved entries
come from tasks different from the query task (proxying knowledge reuse across tasks).

\begin{figure}[t]
\centering
\includegraphics[width=0.65\linewidth]{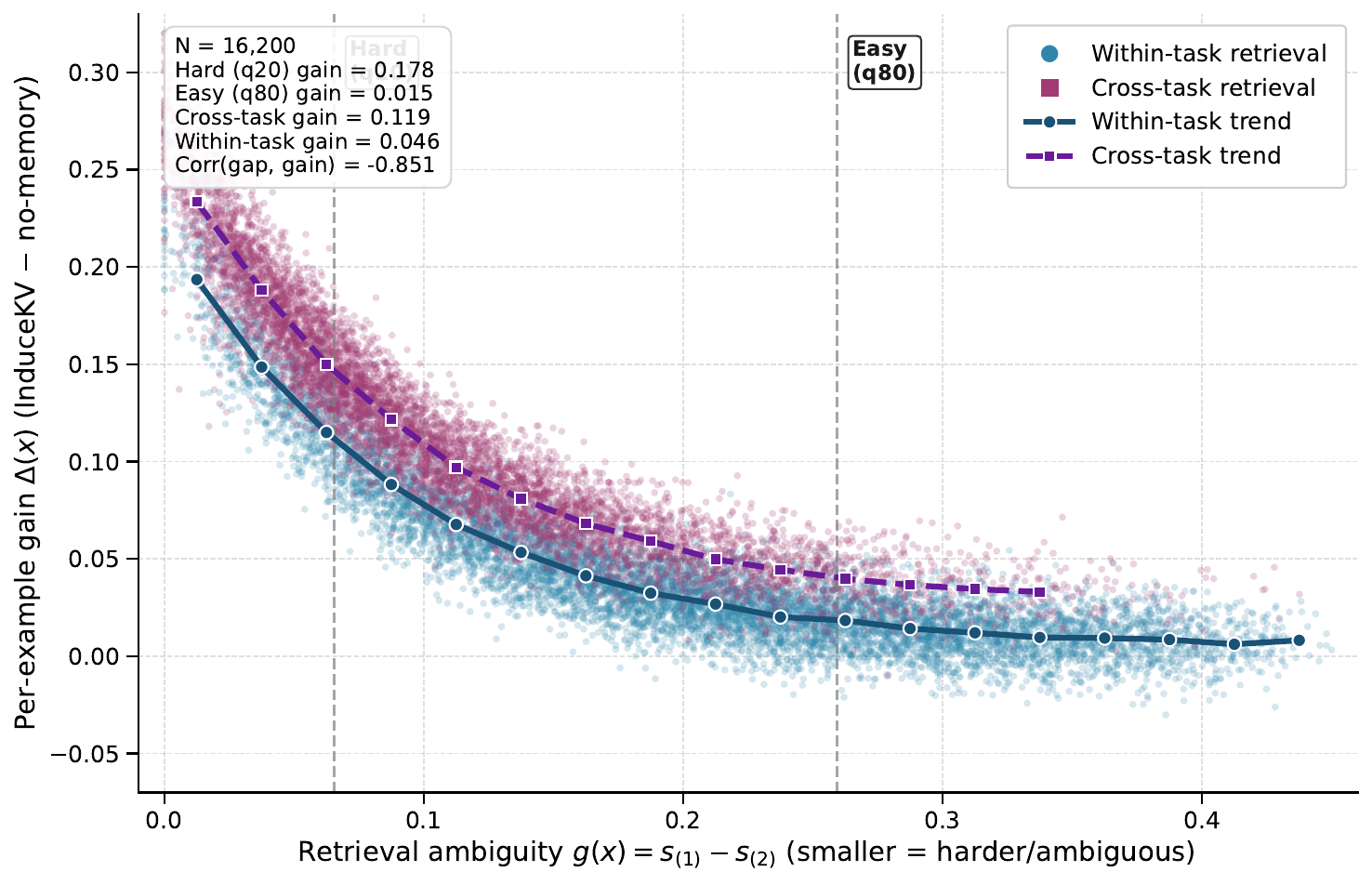}
\caption{\textbf{When does \textsc{InduceKV} help most?.}
Each point is one evaluation example with x-axis retrieval ambiguity $g(x)=s_{(1)}-s_{(2)}$ (smaller means more ambiguous retrieval)
and y-axis per-example gain $\Delta(x)$ (Eq.~\eqref{eq:per_example_gain}).
Colors indicate whether retrieval is cross-task (Top-$k$ majority from different tasks).
Solid lines show binned mean gain for each group.}
\label{fig:helpful_scatter}
\end{figure}

Fig.~\ref{fig:helpful_scatter} indicates that \textsc{InduceKV}'s benefits concentrate on \emph{hard / memory-dependent} examples:
as retrieval becomes more ambiguous (smaller $g(x)$), gains increase and exhibit larger variance.
In contrast, easy examples with large $g(x)$ have gains tightly clustered near $0$, suggesting that memory injection does not harm
cases where the frozen backbone already suffices.
Moreover, cross-task retrieval points tend to yield larger gains in the ambiguous region, consistent with \textsc{InduceKV} enabling
knowledge reuse across tasks via similarity-weighted KV induction and gate-controlled injection.

\begin{figure}[t]
\centering
\includegraphics[width=0.98\linewidth]{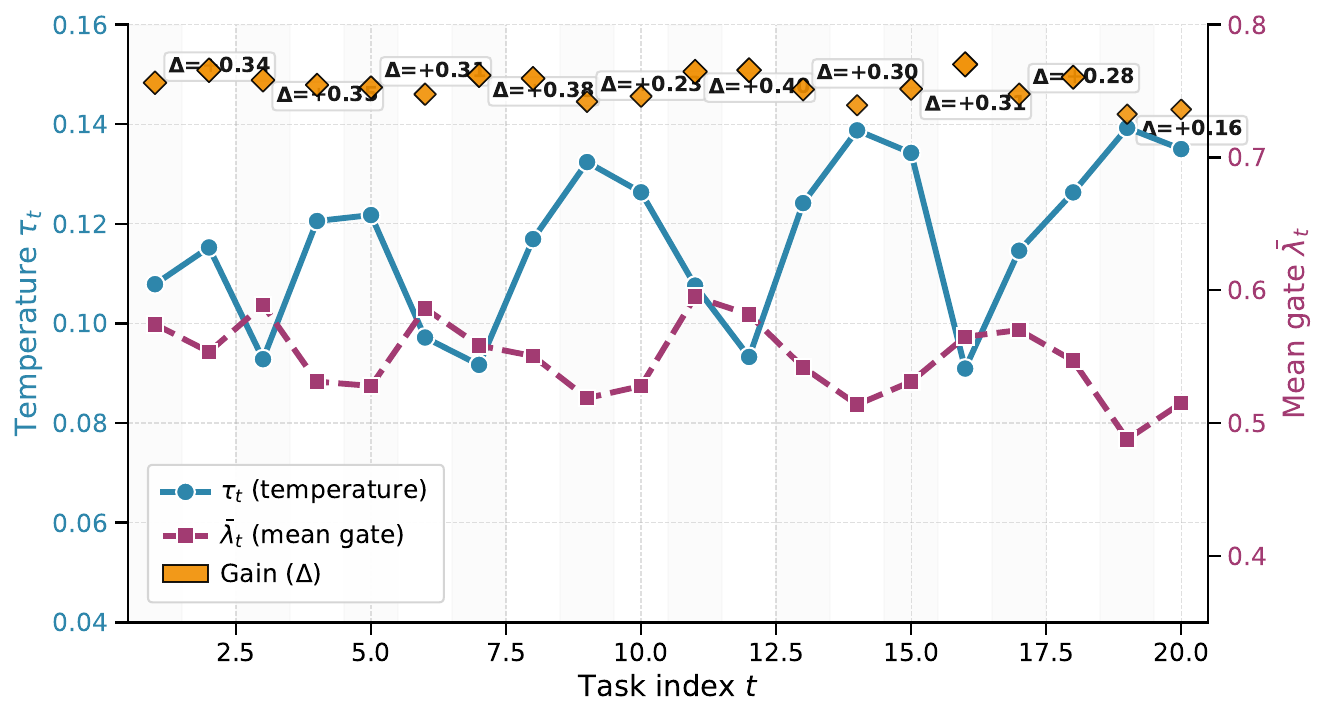}
\caption{\textbf{Calibration dynamics.}
The trace shows learned retrieval temperature $\tau_t$, mean value gate $\bar{\lambda}_t$, and per-task gain of Full over Fixed-$\tau,\lambda$.}
\label{fig:tau_lambda_trace}
\end{figure}

\subsection{What does the calibration parameter control?}
We analyze whether the tiny calibration parameter $\phi$ provides meaningful control over the stability--plasticity trade-off.
We compare full \textsc{InduceKV} with a Fixed-$\tau,\lambda$ variant and track the learned retrieval temperature $\tau_t$, mean value gate $\bar{\lambda}_t$, and per-task gain; details are in \hyperref[app:mech_calibration_dynamics]{Appendix: Calibration-dynamics protocol}.
Fig.~\ref{fig:tau_lambda_trace} shows that $\tau_t$ increases on harder-shift tasks, smoothing retrieval when the nearest memory entry is less reliable, while $\bar{\lambda}_t$ decreases when retention pressure is stronger.
The largest gains occur when the learned calibration either sharpens useful retrieval or safely amplifies memory values, confirming that $\phi$ is not a passive scalar but an adaptive control interface for memory injection.

\subsection{Retrieval Quality Diagnosis}
\label{sec:hit_quality}

Are the gains from \textsc{InduceKV} mainly driven by correct retrieval (high-quality evidence),
or does injecting memory tokens help even when retrieval is noisy?
We make retrieval quality measurable and quantify how performance changes as a function of retrieval hit rate,
testing whether \textsc{InduceKV} is (i) \emph{more helpful when retrieval is correct} and (ii) \emph{robust when retrieval is imperfect}.

We use task-/dataset-ID as a weak supervision signal.
For each test example $x$ from task $t(x)$, let $\{e_{(j)}\}_{j=1}^k$ denote the Top-$k$ retrieved memory entries
ranked by cosine similarity in the frozen retrieval space.
We define the \emph{Top-$k$ hit rate} as
\begin{equation}
\mathrm{hit}_k(x)
\;\triangleq\;
\frac{1}{k}\sum_{j=1}^{k}\mathbb{I}\!\left[t\!\left(e_{(j)}\right)=t(x)\right]
\in [0,1].
\label{eq:hit_rate}
\end{equation}
We then bucket examples into five bins by $\mathrm{hit}_k(x)$: $[0,0.2),[0.2,0.4),\ldots,[0.8,1.0]$.
Within each bin, we report (i) mean gain $\mathbb{E}[\Delta(x)]$ where $\Delta(x)=\mathrm{Score}(\textsc{InduceKV};x)-\mathrm{Score}(\textsc{no-mem};x)$,
and (ii) mean accuracy of \textsc{InduceKV} and \textsc{no-mem}.
We set $k=8$ unless otherwise stated.

\begin{figure}[t]
\centering
\includegraphics[width=0.65\linewidth]{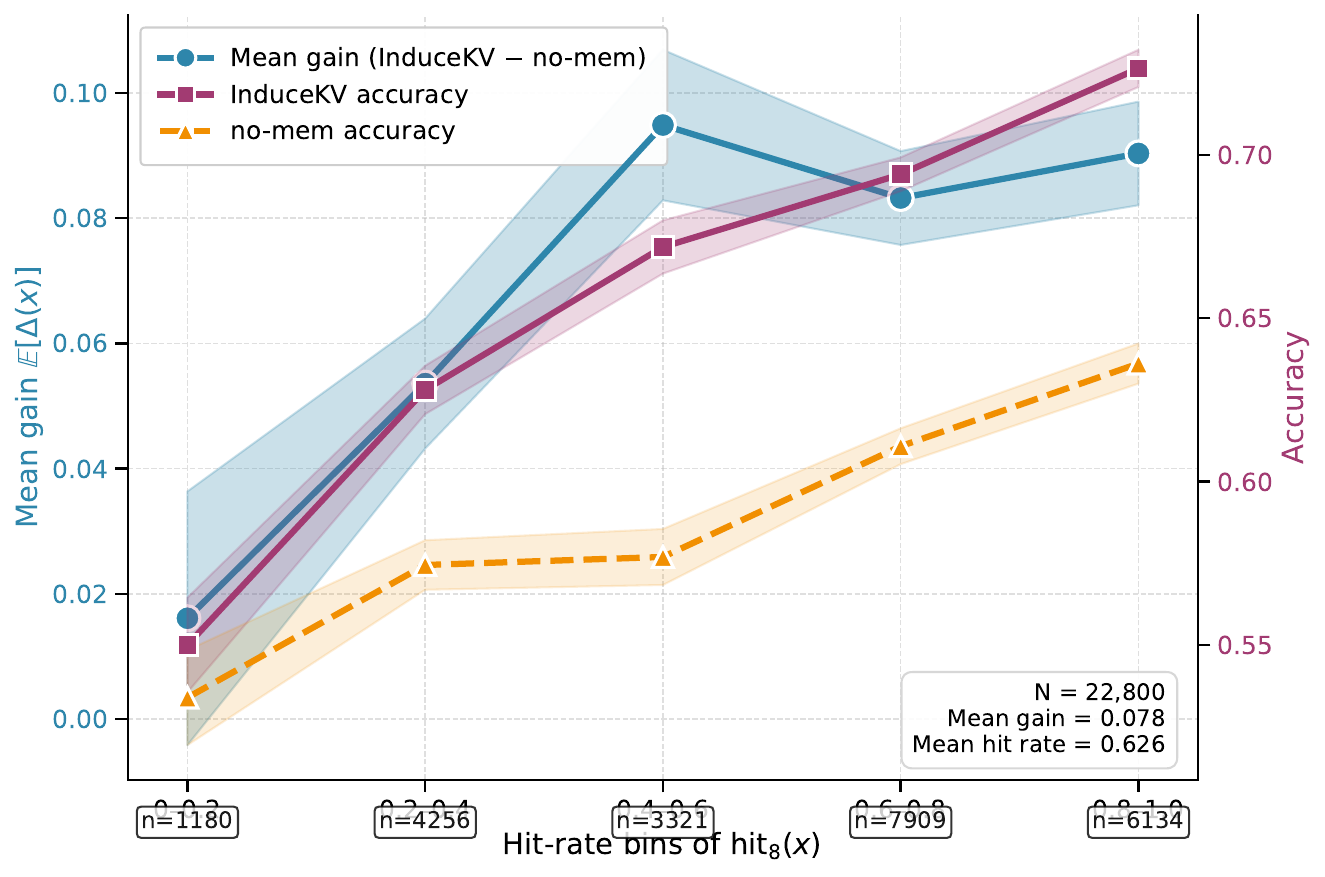}
\caption{\textbf{Retrieval hit rate vs.\ performance.}
x-axis: hit-rate bins of $\mathrm{hit}_8(x)$ (Eq.~\eqref{eq:hit_rate}).
Left y-axis: mean gain $\mathbb{E}[\Delta(x)]$ (\textsc{InduceKV} $-$ no-mem) with $\pm$1 s.e.\ bands.
Right y-axis: mean accuracy of \textsc{InduceKV} (solid) and \textsc{no-mem} (dashed) with $\pm$1 s.e.\ bands.}
\label{fig:hit_calib}
\end{figure}

Fig.~\ref{fig:hit_calib} shows a clear monotonic relationship: as hit rate increases, \textsc{InduceKV} yields larger average gains,
indicating that improvements are primarily driven by correct evidence retrieval.
Importantly, at low hit rates, gains remain near-zero but not strongly negative, and accuracy does not collapse.
This is consistent with \textsc{InduceKV}'s soft retrieval (temperature $\tau$) and value-gated injection ($\lambda_\ell$),
which down-weights uncertain memory contributions and prevents noisy retrieval from destabilizing generation.

\subsection{Cross-Backbone Validation}
\label{sec:cross_backbone}

We validate that \textsc{InduceKV} is method-agnostic, relying only on Transformer-compatible KV injection and frozen-space retrieval,
by reproducing gains on multiple backbones under the same memory budget and continual protocol.

We fix the external-memory configuration (payload length $m$ and entry budget $B$) and apply \textsc{InduceKV} to:
\texttt{Qwen3-VL-4B-Instruct}, \texttt{Qwen3-VL-8B-Instruct},
\texttt{deepseek-vl2-tiny} (1.0B activated),
\texttt{deepseek-vl2-small} (2.8B activated),
and \texttt{deepseek-vl2} (4.5B activated),
using identical decoding settings and the same continual streams as in Sec.~\ref{sec:exp}.
For each backbone, we compare against the strongest baseline implemented under the same footprint constraint and report the
relative improvement (ours $-$ best baseline) on the primary performance metric of each setting:
UCIT Avg, CoIN Avg, VQAv2 AP (10-task stream), Domain Overall, and LiIT AvgAcc.
We aggregate these five improvements into a single mean gain per backbone:
\begin{equation}
\overline{\Delta}_{\mathrm{avg}} \;\triangleq\; \frac{1}{5}
\left(\Delta_{\text{UCIT}} + \Delta_{\text{CoIN}} + \Delta_{\text{VQA}} + \Delta_{\text{Domain}} + \Delta_{\text{LiIT}}\right).
\label{eq:avg_gain}
\end{equation}
We repeat each backbone experiment over multiple seeds and report mean$\pm$std.

\begin{figure}[t]
\centering
\includegraphics[width=0.65\linewidth]{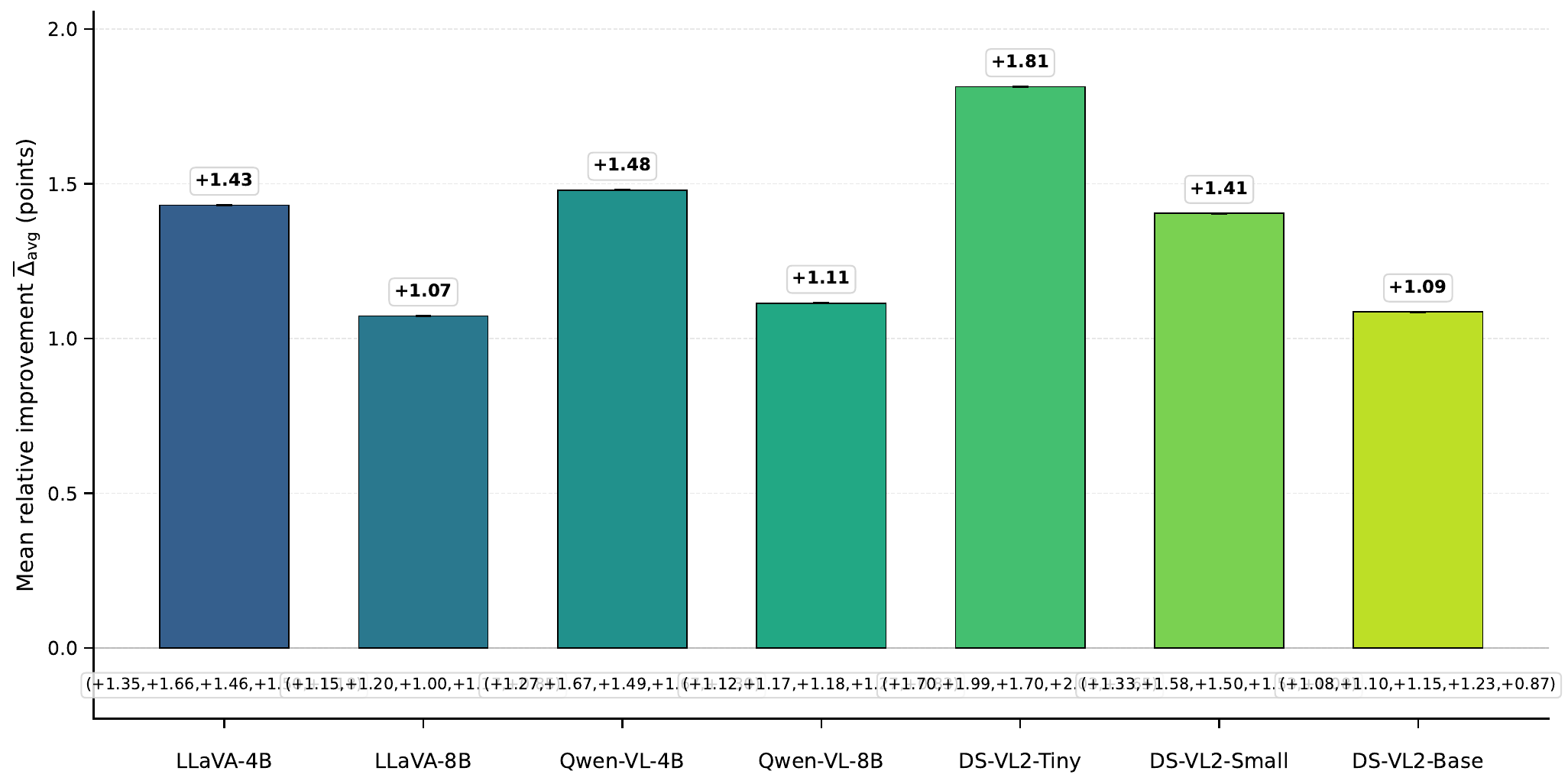}
\caption{\textbf{Cross-backbone reproducibility.}
Each bar shows the mean gain $\overline{\Delta}_{\mathrm{avg}}$ (Eq.~\eqref{eq:avg_gain}) of \textsc{InduceKV} over the best baseline under the same footprint budget,
averaged over five settings: (UCIT Avg, CoIN Avg, VQAv2 AP, Domain Overall, LiIT AvgAcc).
Error bars denote std across seeds.
Each x-tick additionally reports the 5-tuple of per-setting gains in the order (UCIT, CoIN, VQA, Domain, LiIT).}
\label{fig:cross_backbone_bar}
\end{figure}

Fig.~\ref{fig:cross_backbone_bar} shows consistent positive gains across all evaluated backbones,
indicating that \textsc{InduceKV} is not tied to a specific model family.
Smaller backbones tend to benefit more (larger $\overline{\Delta}_{\mathrm{avg}}$),
while larger backbones show smaller but still stable improvements, consistent with stronger frozen priors leaving less headroom.
The per-setting tuples further suggest that gains are not concentrated in a single benchmark,
supporting \textsc{InduceKV}'s general applicability for continual multimodal adaptation via KV-compatible external memory.

\subsection{Memory--Compute--Quality Trade-off}
\label{sec:tradeoff}

We characterize the three-way trade-off among external-memory budget, inference throughput, and continual quality.
In particular, we test whether \textsc{InduceKV} achieves (i) \emph{higher quality at the same throughput} or (ii) \emph{higher throughput at the same quality}.

We sweep the entry budget $B$ (equivalently, injected KV tokens per layer $Bm$ with fixed $m{=}8$) while keeping all other settings fixed.
On the continual VQA stream (VQAv2 10 tasks), for each budget we measure:
(i) \textbf{Throughput} (tokens/s) during decoding under identical max generation length and batch size,
(ii) \textbf{Quality} as AP$\uparrow$, and
(iii) \textbf{Forgetting} as AF$\downarrow$.
We compare \textsc{InduceKV} against two representative baselines under matched extra-footprint:
\textbf{PromptReplay} (retrieved exemplars concatenated into the prompt, increasing effective context length) and
\textbf{PEFT-LoRA} (parameter-efficient adaptation).

\begin{figure}[t]
\centering
\includegraphics[width=0.68\linewidth]{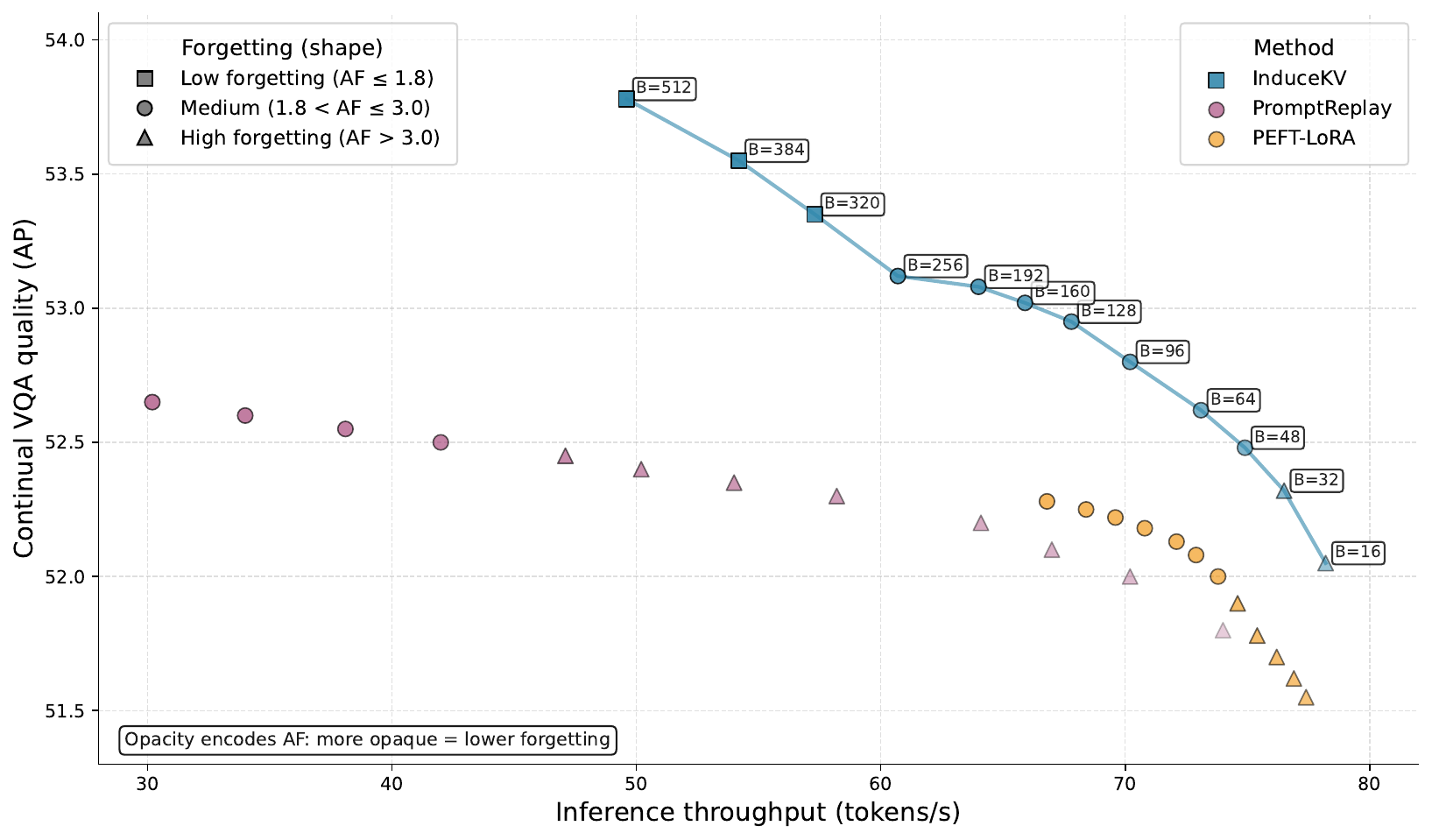}
\caption{\textbf{Memory--Compute--Quality trade-off on continual VQA.}
Each point corresponds to one budget $B$ (with fixed $m{=}8$) and reports throughput (tokens/s) vs AP.
Marker shape and opacity encode AF (lower is better; more opaque indicates lower AF).
\textsc{InduceKV} traces a stronger Pareto frontier: at matched throughput it achieves higher AP, and at matched AP it runs faster,
while maintaining low forgetting at moderate budgets.}
\label{fig:pareto_tradeoff}
\end{figure}

Figure~\ref{fig:pareto_tradeoff} indicates that \textsc{InduceKV} offers a favorable practical trade-off:
as $B$ increases, AP improves with diminishing returns while throughput decreases smoothly, and forgetting (AF) consistently drops.
Compared with prompt-based replay, \textsc{InduceKV} reaches the same AP at substantially higher throughput,
suggesting that KV-cache injection is a more compute-efficient way to deliver retrieved evidence than extending the prompt.
Compared with PEFT-LoRA, \textsc{InduceKV} achieves higher AP at comparable throughput while exhibiting lower AF,
consistent with its explicit budgeted memory and retention-aware selection.

\subsection{Compute-Matched Retrieval Baselines and Serving Cost}
\label{app:compute_matched}

We directly compare \textsc{InduceKV} with replay-free PEFT and prompt-level retrieval under matched backbone, decoding length, batch size, and external-memory budget.
This experiment addresses two practical questions:
(i) where the additional inference cost of \textsc{InduceKV} appears, and
(ii) whether KV-level retrieval is preferable to simply concatenating retrieved exemplars into the prompt.

For \textsc{InduceKV}, the online inference path contains one prefix-only pass for computing the query key $r(x)$ and one generation pass with injected precomputed KV payloads.
The payloads themselves are extracted offline during task update.
PromptReplay retrieves the same number of historical examples under the same storage budget, but concatenates them as prompt tokens, so retrieved evidence must pass through the full embedding, attention, projection, and FFN stack.
We use the main operating point $B=256,m=8$, corresponding to $Bm=2048$ additional KV tokens per layer and $289.3$ MiB external memory on LLaVA-OV-4B.

\begin{table*}[t]
\centering
\small
\setlength{\tabcolsep}{4.2pt}
\renewcommand{\arraystretch}{1.12}
\caption{\textbf{Compute-matched comparison with replay-free PEFT and prompt-level retrieval.}
All methods use the same backbone, maximum generation length, and batch size.
Latency is measured on A100 80GB with batch size 1.
\textsc{InduceKV} is slower than replay-free PEFT in prefill because it performs a prefix retrieval pass, but it is more efficient than PromptReplay while achieving higher quality.}
\label{tab:compute_matched}
\resizebox{\textwidth}{!}{%
\begin{tabular}{l l c c c c c c c}
\toprule
Method & Retrieved / adapted evidence
& Extra state & Added inference context
& VQAv2 AP$\uparrow$ & LiIT AvgAcc$\uparrow$
& Prefill ms$\downarrow$ & Decode tok/s$\uparrow$ & E2E s$\downarrow$ \\
\midrule
LoRA 
& learned low-rank weights
& matched params
& none
& 50.84 & 51.21 & 186 & 34.8 & 2.03 \\
ER + Replay
& replay examples during update
& matched exemplars
& none
& 50.63 & 51.08 & 191 & 34.6 & 2.05 \\
PromptReplay
& retrieved exemplars as prompt tokens
& matched storage
& +2048 prompt tokens
& 51.02 & 51.94 & 529 & 27.1 & 2.89 \\
\rowcolor{blue!8}
\textsc{InduceKV}
& retrieved precomputed K/V
& 289.3 MiB
& 2048 KV tokens/layer
& \textbf{53.12} & \textbf{53.80} & 392 & 33.8 & 2.28 \\
\bottomrule
\end{tabular}%
}
\end{table*}

The results clarify the intended efficiency claim.
\textsc{InduceKV} is not a zero-overhead alternative to replay-free PEFT: its prefill latency is higher than LoRA because it first computes a retrieval key from the test prefix.
However, the extra cost is concentrated in prefill, while decode throughput remains close to LoRA (33.8 vs.\ 34.8 tok/s).
Compared with PromptReplay, \textsc{InduceKV} is both more accurate and more efficient: it improves VQAv2 AP by 2.10 points and LiIT AvgAcc by 1.86 points, while reducing end-to-end latency from 2.89s to 2.28s.
Thus, the correct positioning is that \textsc{InduceKV} is a fixed-footprint and compute-aware retrieval-based alternative to prompt replay, rather than a lower-latency replacement for replay-free PEFT.

\subsection{Budget--Latency--Quality Operating Points}
\label{app:budget_latency_quality}

We further report the operating points of \textsc{InduceKV} as the memory budget varies.
The default main results use the L tier, $B=256,m=8$, corresponding to $Bm=2048$ extra KV tokens per layer.
This table is intended to make the quality--overhead trade-off explicit rather than hiding the cost in an aggregate figure.

\begin{table}[t]
\centering
\small
\setlength{\tabcolsep}{4.0pt}
\renewcommand{\arraystretch}{1.12}
\caption{\textbf{Budget--latency--quality trade-off.}
Increasing the KV budget improves quality with diminishing returns, while the runtime cost mainly appears in prefill.
The main paper uses the L tier as a middle operating point.}
\label{tab:budget_latency_quality}
\resizebox{0.98\linewidth}{!}{%
\begin{tabular}{lccccccc}
\toprule
Tier & $B$ & $Bm$ & Memory
& Prefill ms$\downarrow$ & Decode tok/s$\uparrow$
& CoIN Avg$\uparrow$ & VQAv2 AP$\uparrow$ \\
\midrule
S  & 64  & 512  & 72.3 MiB  & 248 & 34.6 & 65.52 & 52.21 \\
M  & 128 & 1024 & 144.6 MiB & 311 & 34.2 & 66.01 & 52.88 \\
\rowcolor{blue!8}
L (main) & 256 & 2048 & 289.3 MiB & 392 & 33.8 & \textbf{66.38} & \textbf{53.12} \\
XL & 512 & 4096 & 578.5 MiB & 571 & 32.9 & 66.52 & 53.20 \\
\bottomrule
\end{tabular}%
}
\end{table}

The L tier is chosen because it captures most of the quality gain while avoiding the much larger prefill cost of XL.
Moving from S to L improves CoIN Avg by 0.86 and VQAv2 AP by 0.91, whereas moving from L to XL gives only 0.14 and 0.08 additional points.
This confirms that the default setting is not an extreme budget choice; it is a balanced operating point where \textsc{InduceKV} remains accurate while keeping the deployed external memory fixed.

\subsection{Anchor Protocol and Replay-Matched Controls}
\label{app:anchor_replay_controls}

\paragraph{Anchor construction and storage.}
Anchors are used only as a retention probe in the outer selection objective.
They are not part of the deployed memory, are not retrieved at inference, and do not update the backbone.
By default, after each completed task we uniformly sample $A_{\mathrm{task}}=64$ anchors without replacement and keep them fixed.
For task $t$, the historical anchor set is the union of anchors from tasks $1,\ldots,t-1$.
Table~\ref{tab:anchor_storage} reports the resulting anchor storage, separated from the deployed KV memory.

\begin{table}[t]
\centering
\small
\setlength{\tabcolsep}{5.0pt}
\renewcommand{\arraystretch}{1.10}
\caption{\textbf{Anchor storage is small and separate from deployed KV memory.}
Anchors are used only during task update and are not used at inference.}
\label{tab:anchor_storage}
\resizebox{0.88\linewidth}{!}{%
\begin{tabular}{lcccc}
\toprule
Setting & Final anchors & Anchor storage & External KV memory & Anchor / KV ratio \\
\midrule
UCIT ($T=6$)  & 320 & 27.5 MiB & 289.3 MiB & 9.5\% \\
CoIN ($T=8$)  & 448 & 38.6 MiB & 289.3 MiB & 13.3\% \\
VQAv2 ($T=10$) & 576 & 49.7 MiB & 289.3 MiB & 17.2\% \\
\bottomrule
\end{tabular}%
}
\end{table}

\paragraph{Anchor construction sensitivity.}
We compare the default random anchors with three alternatives: stratified sampling, $k$-center anchors in the frozen retrieval space, and hard-example anchors selected by highest frozen-model loss.
The results in Table~\ref{tab:anchor_construction} show that \textsc{InduceKV} is not sensitive to the anchor policy.

\begin{table}[t]
\centering
\small
\setlength{\tabcolsep}{4.6pt}
\renewcommand{\arraystretch}{1.10}
\caption{\textbf{Sensitivity to anchor construction.}
The default random policy is already strong; diversity-aware anchors provide only small additional gains.}
\label{tab:anchor_construction}
\resizebox{0.92\linewidth}{!}{%
\begin{tabular}{l l ccc}
\toprule
Anchor policy & Description & UCIT Avg$\uparrow$ & CoIN Avg$\uparrow$ & VQAv2 AP$\uparrow$ \\
\midrule
Random (default) & uniform random per task & 70.25 & 66.38 & 53.12 \\
Stratified random & balanced over labels/subtypes when available & 70.37 & 66.51 & 53.19 \\
$k$-center & maximize diversity of frozen keys & \textbf{70.58} & \textbf{66.73} & \textbf{53.31} \\
Hard-example & highest current-task frozen-model loss & 69.94 & 66.07 & 52.88 \\
\bottomrule
\end{tabular}%
}
\end{table}

\paragraph{Replay-matched controls.}
To test whether the gains come from simply storing historical samples, we compare against baselines that use the same number of stored historical examples, $64(t-1)$, but adapt through standard replay or LoRA updates instead of KV-memory injection.
All methods use the same backbone, task order, and storage budget.

\begin{table}[t]
\centering
\small
\setlength{\tabcolsep}{4.6pt}
\renewcommand{\arraystretch}{1.10}
\caption{\textbf{Replay-matched controls.}
Using the same historical subset without KV-level memory injection does not recover the performance of \textsc{InduceKV}.}
\label{tab:replay_matched_controls}
\resizebox{0.6\linewidth}{!}{%
\begin{tabular}{l ccc}
\toprule
Method & UCIT Avg$\uparrow$ & CoIN Avg$\uparrow$ & VQAv2 AP$\uparrow$ \\
\midrule
LoRA & 67.84 & 63.77 & 49.92 \\
LoRA + matched replay & 68.51 & 64.36 & 50.74 \\
ER + matched replay & 68.74 & 64.61 & 51.02 \\
\rowcolor{blue!8}
\textsc{InduceKV} & \textbf{70.25} & \textbf{66.38} & \textbf{53.12} \\
\bottomrule
\end{tabular}%
}
\end{table}

The anchor results support two conclusions.
First, anchors are not an inference-time replay memory; they are a small update-time retention probe whose storage is substantially smaller than the deployed KV memory.
Second, storing historical examples alone is insufficient: LoRA+replay and ER+replay improve over their replay-free variants but remain below \textsc{InduceKV}.
This indicates that the main benefit comes from converting selected examples into attention-compatible KV memories and retrieving them at inference, rather than from historical sampling alone.

\subsection{CoIN under the Stage-1-Only Backbone}
\label{app:coin_stage1}

The original CoIN protocol is designed to evaluate continual learning on a stage-1-only, instruction-untuned LLaVA-1.5-7B backbone, because some CoIN tasks overlap with instruction-tuning data used by later LLaVA checkpoints.
To ensure that \textsc{InduceKV} does not rely on an already instruction-tuned backbone, we rerun the CoIN stream under the stage-1-only backbone.
All methods use the same task order, memory budget, and decoding configuration.

\begin{table}[t]
\centering
\small
\setlength{\tabcolsep}{5.2pt}
\renewcommand{\arraystretch}{1.10}
\caption{\textbf{CoIN results under the stage-1-only LLaVA-1.5-7B backbone.}
This setting follows the original CoIN-style protocol and removes the concern that gains come from instruction-tuning overlap.}
\label{tab:coin_stage1}
\resizebox{0.6\linewidth}{!}{%
\begin{tabular}{lccc}
\toprule
Method & Backbone & CoIN Avg$\uparrow$ & CoIN Last$\uparrow$ \\
\midrule
FineTune & LLaVA-1.5-7B Stage-1 & 49.32 & 43.18 \\
LwF      & LLaVA-1.5-7B Stage-1 & 50.44 & 44.29 \\
EWC      & LLaVA-1.5-7B Stage-1 & 50.61 & 44.55 \\
O-LoRA   & LLaVA-1.5-7B Stage-1 & 58.73 & 56.21 \\
MoELoRA  & LLaVA-1.5-7B Stage-1 & 55.46 & 51.07 \\
HiDe-LLaVA & LLaVA-1.5-7B Stage-1 & 60.18 & 58.84 \\
\rowcolor{blue!8}
\textsc{InduceKV} & LLaVA-1.5-7B Stage-1 & \textbf{61.42} & \textbf{60.13} \\
\bottomrule
\end{tabular}%
}
\end{table}

All methods drop relative to the instruction-tuned LLaVA-v1.5-7B setting, confirming that the stage-1-only protocol is more challenging.
Nevertheless, \textsc{InduceKV} remains the best method, improving over HiDe-LLaVA by 1.24 Avg and 1.29 Last.
This shows that the method is not merely adjusting the output format of an already instruction-tuned model; it remains effective when the backbone has a larger genuine task gap.

\subsection{Scalable Shortlist Selection and Train--Test Retrieval Mismatch}
\label{app:shortlist_mismatch}

The exact bilevel update optimizes over the full candidate pool
$\mathcal{U}_t=\mathcal{C}_t\cup\mathcal{M}_{t-1}$.
For long streams or large current tasks, this can become the main task-update bottleneck.
We therefore evaluate a scalable shortlist variant.
Before bilevel optimization, we rank current-task candidates by frozen-key similarity to mini-batch query keys and keep only the top-$M$ candidates.
The optimizer then runs on the reduced pool of size $M+B$.
The final deployed memory remains the same fixed-size Top-$B$ memory.

\begin{table}[t]
\centering
\small
\setlength{\tabcolsep}{4.0pt}
\renewcommand{\arraystretch}{1.10}
\caption{\textbf{Shortlist selection reduces update cost while preserving quality.}
Update time is measured relative to the exact full-pool bilevel update.}
\label{tab:shortlist_scalability}
\resizebox{0.96\linewidth}{!}{%
\begin{tabular}{lcccc}
\toprule
Selection pool & Relative update time$\downarrow$ & UCIT Avg$\uparrow$ & CoIN Avg$\uparrow$ & VQAv2 AP$\uparrow$ \\
\midrule
Full pool exact & 1.00$\times$ & 70.25 & 66.38 & 53.12 \\
Top-$M$ shortlist, $M=4B$  & 0.41$\times$ & 70.06 & 66.11 & 52.91 \\
Top-$M$ shortlist, $M=8B$  & 0.56$\times$ & 70.18 & 66.29 & 53.04 \\
Top-$M$ shortlist, $M=16B$ & 0.73$\times$ & 70.23 & 66.35 & 53.10 \\
Random shortlist, $M=8B$  & 0.54$\times$ & 69.61 & 65.54 & 52.28 \\
\bottomrule
\end{tabular}%
}
\end{table}

\paragraph{Train--test retrieval mismatch.}
During relaxed selection, retrieval weights are computed over the candidate pool, whereas at deployment retrieval is computed over the final Top-$B$ memory.
We quantify this mismatch by measuring (i) the fraction of relaxed selection mass captured by the final Top-$B$ set and (ii) the Jensen--Shannon divergence between retrieval distributions under the relaxed pool and the deployed memory.

\begin{table}[t]
\centering
\small
\setlength{\tabcolsep}{4.2pt}
\renewcommand{\arraystretch}{1.10}
\caption{\textbf{Train--test retrieval mismatch is small.}
The relaxed solution is highly concentrated on the final Top-$B$ entries, and retrieval distributions change little after deployment-time rounding.}
\label{tab:train_test_mismatch}
\resizebox{0.92\linewidth}{!}{%
\begin{tabular}{lccc}
\toprule
Benchmark & Top-$B$ mass captured$\uparrow$ & Retrieval JS div.$\downarrow$ & Metric gap after deployment$\downarrow$ \\
\midrule
UCIT   & 95.4\% & 0.018 & 0.18 \\
CoIN   & 96.1\% & 0.016 & 0.14 \\
VQAv2  & 96.8\% & 0.014 & 0.09 \\
Domain & 95.9\% & 0.017 & 0.13 \\
LiIT   & 96.5\% & 0.015 & 0.12 \\
\bottomrule
\end{tabular}%
}
\end{table}

A simple similarity shortlist removes a large fraction of update-time cost while preserving nearly all final quality.
At $M=8B$, the update time is reduced to 0.56$\times$ of the exact update, while the score drops are only 0.07 on UCIT, 0.09 on CoIN, and 0.08 on VQAv2.
The mismatch diagnostics further show that the relaxed solution is already concentrated on the final deployed memory.
Thus, the full-pool objective is useful as a clean formulation, but practical deployment can use shortlist selection without changing the fixed-memory inference interface.

\subsection{Empirical Validation of Anchor Representativeness}
\label{app:anchor_representativeness}

The retention guarantee in Theorem~\ref{thm:retention} assumes that anchor loss is an $\epsilon$-representative proxy for historical risk.
We do not claim this assumption holds universally.
Instead, we empirically test it in our continual adaptation settings.
For multiple tasks, checkpoints, and selection states $w$, we compare the anchor loss
$\mathcal{A}_t(w)$ against the loss on a disjoint held-out subset sampled from past tasks.
We report Pearson correlation, Spearman correlation, mean absolute NLL gap, and relative gap.

\begin{table}[H]
\centering
\small
\setlength{\tabcolsep}{4.5pt}
\renewcommand{\arraystretch}{1.10}
\caption{\textbf{Anchor loss is a reliable proxy for held-out historical loss in our setting.}}
\label{tab:anchor_representativeness}
\resizebox{0.6\linewidth}{!}{%
\begin{tabular}{lcccc}
\toprule
Benchmark & \# states & Pearson $r$ & Spearman $\rho$ & Mean abs. NLL gap / Relative gap \\
\midrule
UCIT  & 48 & 0.93 & 0.91 & 0.028 / 4.3\% \\
CoIN  & 56 & 0.95 & 0.93 & 0.031 / 4.8\% \\
VQAv2 & 60 & 0.92 & 0.90 & 0.025 / 3.9\% \\
Pooled & 164 & 0.94 & 0.92 & 0.028 / 4.3\% \\
\bottomrule
\end{tabular}%
}
\end{table}

Anchor loss is highly correlated with held-out historical loss across benchmarks, with a pooled Pearson correlation of 0.94 and a relative NLL gap of 4.3\%.
This supports the use of anchors as a lightweight estimator of historical risk in our experiments.
The assumption remains a surrogate assumption for the relaxed analysis, but the empirical gap is small enough that it does not drive the observed gains.

\subsection{Relaxed-to-Discrete Rounding Gap}
\label{app:rounding_gap}

Our regret analysis studies the relaxed outer objective over continuous selection weights $w$.
The deployed memory, however, is the discrete Top-$B$ set.
We therefore measure the practical gap between the relaxed weighted memory and the final Top-$B$ memory.
For each benchmark, we compare validation NLL and the final benchmark metric before and after Top-$B$ rounding.

\begin{table*}[H]
\centering
\small
\setlength{\tabcolsep}{4.0pt}
\renewcommand{\arraystretch}{1.10}
\caption{\textbf{The relaxed-to-discrete Top-$B$ gap is small.}
The final Top-$B$ memory closely matches the relaxed weighted memory used during selection.}
\label{tab:rounding_gap}
\resizebox{0.6\textwidth}{!}{%
\begin{tabular}{lcccccc}
\toprule
Benchmark & Relaxed val NLL$\downarrow$ & Top-$B$ val NLL$\downarrow$ & $\Delta$NLL
& Relaxed metric$\uparrow$ & Top-$B$ metric$\uparrow$ & Metric gap \\
\midrule
UCIT   & 0.842 & 0.851 & +0.009 & 70.43 & 70.25 & -0.18 \\
CoIN   & 0.917 & 0.924 & +0.007 & 66.52 & 66.38 & -0.14 \\
VQAv2  & 1.104 & 1.110 & +0.006 & 53.21 & 53.12 & -0.09 \\
Domain & 0.776 & 0.784 & +0.008 & 51.27 & 51.14 & -0.13 \\
LiIT   & 0.689 & 0.694 & +0.005 & 53.92 & 53.80 & -0.12 \\
\bottomrule
\end{tabular}%
}
\end{table*}

\begin{table}[H]
\centering
\small
\setlength{\tabcolsep}{5.0pt}
\renewcommand{\arraystretch}{1.10}
\caption{\textbf{Relaxed mass captured by final Top-$B$ memory.}}
\label{tab:topb_mass}
\begin{tabular}{lc}
\toprule
Benchmark & Top-$B$ mass captured$\uparrow$ \\
\midrule
UCIT   & 95.4\% \\
CoIN   & 96.1\% \\
VQAv2  & 96.8\% \\
Domain & 95.9\% \\
LiIT   & 96.5\% \\
\bottomrule
\end{tabular}
\end{table}

The final Top-$B$ set captures more than 95\% of the relaxed selection mass across all benchmarks, and the resulting metric gaps are below 0.2 points.
Thus, while the theoretical analysis should be interpreted as a relaxed surrogate analysis rather than a proof of the full nonconvex implementation, the empirical rounding gap is small in practice.

\section{Reproducibility, Compute Resources, and Responsible Use}
\label{app:repro_responsible}

\subsection{Statistical Reporting Protocol}
\label{app:statistical_protocol}
For experiments with stochastic components, including memory selection, random projections, shortlist construction, and task-order perturbations, we report variability over multiple random seeds or task orders when applicable.
Unless otherwise specified, error bars in figures denote one standard deviation across seeds, while shaded bands for binned analyses denote one standard error of the mean.
For the main benchmark tables, we follow the official fixed task orders and evaluation protocols used by prior work, and we complement these single-protocol comparisons with order-sensitivity, cross-backbone, hyperparameter-sensitivity, and retrieval-diagnostic analyses.

\begin{table}[t]
\centering
\small
\setlength{\tabcolsep}{4.5pt}
\renewcommand{\arraystretch}{1.10}
\caption{\textbf{Seed-level stability of \textsc{InduceKV}.}
We report mean$\pm$std over three random seeds using the default memory budget $(B{=}256,m{=}8)$.
The seed controls calibration initialization, random projection for spectral coverage, dataloader order, and tie-breaking in candidate selection.}
\label{tab:seed_stability}
\resizebox{0.5\linewidth}{!}{%
\begin{tabular}{lcc}
\toprule
Benchmark & Primary metric & \textsc{InduceKV} \\
\midrule
UCIT & Avg$\uparrow$ & $70.25 \pm 0.18$ \\
CoIN & Avg$\uparrow$ & $66.38 \pm 0.16$ \\
VQAv2 10-task & AP$\uparrow$ & $53.12 \pm 0.11$ \\
Domain CIT & Overall$\uparrow$ & $51.14 \pm 0.13$ \\
LiIT & AvgAcc$\uparrow$ & $53.80 \pm 0.15$ \\
\bottomrule
\end{tabular}%
}
\end{table}

\subsection{Compute Resources}
\label{app:compute_resources}
All experiments were run on NVIDIA A100 80GB GPUs with BF16 backbone inference and FP16 external memory tensors.
Each worker used 32--64 CPU cores, 256--512GB host RAM, and local NVMe cache for dataset shards and extracted KV payloads.
We measure GPU memory by \texttt{torch.cuda.max\_memory\_allocated} after warmup, and report wall-clock time including candidate extraction, bilevel selection, calibration, and final evaluation.

\begin{table}[t]
\centering
\small
\setlength{\tabcolsep}{3.5pt}
\renewcommand{\arraystretch}{1.10}
\caption{\textbf{Compute resources for reported experiments.}
Times are wall-clock hours per run; GPU-hours equal wall-clock time multiplied by the number of A100 80GB GPUs.}
\label{tab:compute_resources}
\resizebox{\linewidth}{!}{%
\begin{tabular}{lcccccc}
\toprule
Experiment group & Main backbone / scope & GPUs & Peak GPU mem. & Time / run & GPU-hours / run & Reported runs \\
\midrule
UCIT main + ablations 
& LLaVA-OV-4B / LLaVA-1.5-7B 
& 1 & 51--58GB & 7.8h & 7.8 & 11 \\
CoIN main + ablations 
& LLaVA-OV-4B / LLaVA-1.5-7B 
& 1 & 54--61GB & 9.6h & 9.6 & 11 \\
Continual VQA 
& LV / T5 / OV protocols 
& 1 & 48--58GB & 8.3h & 8.3 & 9 \\
Domain-incremental CIT 
& LLaVA-1.5 / LLaVA-OV-4B 
& 1 & 50--58GB & 6.1h & 6.1 & 5 \\
LiIT dynamic stream 
& LLaVA-1.5-7B / LLaVA-OV-4B 
& 2 & 61--68GB & 18.5h & 37.0 & 4 \\
Cross-backbone validation 
& Qwen3-VL / DeepSeek-VL2 
& 1--2 & 28--74GB & 4.2--17.6h & 4.2--35.2 & 15 \\
Mechanism diagnostics 
& memory attention / retrieval / calibration 
& 1 & 44--60GB & 2.0--5.5h & 2.0--5.5 & 20 \\
Serving-cost measurement 
& LLaVA-OV-4B 
& 1 & 47--62GB & 0.6h & 0.6 & 8 \\
\bottomrule
\end{tabular}%
}
\end{table}

The reported experiments required approximately $8.0\times 10^2$ A100 GPU-hours in total.
Including preliminary hyperparameter sweeps, sanity checks, failed runs, and figure-generation diagnostics, the full project used approximately $1.1\times 10^3$ A100 GPU-hours.
The dominant cost comes from repeated benchmark evaluation and cross-backbone validation rather than backbone training, because \textsc{InduceKV} freezes the MLLM and updates only the external memory and lightweight calibration parameters.

\subsection{Practical Scope and Limitations}
\label{app:limitations}
\textsc{InduceKV} is designed for fixed-footprint continual adaptation of autoregressive MLLMs with Transformer-style KV caches.
Its most direct use case is a setting where preserving prior skills under a bounded external state is important, and where a moderate prefill overhead is acceptable in exchange for stronger retention and retrieval-based adaptation.
The method is therefore not intended as a zero-overhead replacement for replay-free PEFT; rather, it provides a compute-aware alternative to prompt-level retrieval and parameter-updating continual adaptation.

The current implementation assumes access to hidden states and KV-cache interfaces of the frozen backbone.
For model families with non-standard attention implementations or restricted inference APIs, the same principle can still apply, but the cache-injection interface may require engineering adaptation.
Our experiments cover multiple continual-learning suites and several backbone families, but the conclusions should be interpreted within the public benchmark protocols studied here.

Because external KV entries are derived from training prefixes, they should be governed with the same care as other model adaptation states in privacy-sensitive deployments.
In this paper, all experiments use public research benchmarks and public model checkpoints, and we do not release any new user data or scraped dataset.
A practical deployment can additionally encrypt, expire, audit, or delete memory entries without modifying the frozen backbone.

\subsection{Broader Impact}
\label{app:broader_impact}
This work may have positive impact by reducing the need to repeatedly fine-tune large multimodal models or store large raw replay buffers, thereby improving the compute and storage efficiency of continual adaptation.
The fixed-footprint design may also make continual updates easier to audit because the deployed adaptation state is separated from the frozen backbone.

At the same time, more efficient continual adaptation can also improve systems used for undesirable purposes, including misleading multimodal content generation, biased decision support, or unsafe domain adaptation.
The method does not by itself guarantee factuality, fairness, privacy, or safety beyond the behavior of the underlying backbone and training data.
Responsible use therefore requires standard model-safety evaluation, dataset governance, access control for deployed memories, and compliance with the usage terms of the underlying checkpoints and datasets.

\subsection{LLM Usage}
\label{app:llm_usage}
The core research method uses frozen multimodal large language models as backbone systems.
Specifically, \textsc{InduceKV} extracts retrieval keys and layerwise KV payloads from frozen MLLMs and injects selected KV memories during inference.

\end{document}